\newcommand{\cmark}{\ding{51}}%
\newcommand{\xmark}{\ding{55}}
\newtheorem{lemma}{Lemma}
\newtheorem{corollary}{Corollary}
\newtheorem{theorem}{Theorem}
\newtheorem{proposition}{Proposition}
\theoremstyle{definition}
\newtheorem{definition}{Definition}
\theoremstyle{remark}
\newtheorem{remark}{Remark}
\newtheorem{fact}{Fact}
\DeclareMathOperator{\tr}{Tr}
\DeclareMathOperator{\Div}{Div}
\DeclareMathOperator{\Det}{Det}
\newcommand{\cube}{C_d}
\renewcommand{\d}{\mathrm{d}}
\newcommand{\R}{{\mathbb{R}}}
\newcommand{\Onot}{\mathcal{O}}
\renewcommand{\P}{\mathbb{P}}
\newcommand{\Q}{\mathbb{Q}}
\newcommand{\mc}{\mathcal}
\newcommand{\p}{\varphi}
\newcommand{\lin}{\mathrm{lin}}
\newcommand{\conf}{\mathrm{conf}}
\newcommand{\sconf}{\mathrm{r-conf}}
\newcommand{\ima}{\mathrm{OCT}}
\newcommand{\rep}{\mathrm{reparam}}
\newcommand{\vertiii}[1]{{\left\vert\kern-0.25ex\left\vert\kern-0.25ex\left\vert
#1 
    \right\vert\kern-0.25ex\right\vert\kern-0.25ex\right\vert}}
\newcommand{\eps}{\varepsilon}
\newcommand{\diag}{\mathrm{Diag}}
\newcommand{\perm}{\mathrm{Perm}}
\newcommand{\indep}{\perp \!\!\! \perp}
\newcommand{\indistribution}{\stackrel{\mc{D}}{=}}
\newcommand{\ourtitle}{Function Classes for Identifiable Nonlinear Independent Component Analysis}
\title{\ourtitle}
\author{Simon Buchholz, Michel Besserve \& Bernhard Sch\"olkopf \\

}
\date{%
   Max Planck Institute for Intelligent Systems,  T\"ubingen\\[2ex]
    \today
    }
\begin{document}

\maketitle


\begin{abstract}
Unsupervised learning of latent variable models (LVMs) is widely used to represent data in machine learning. When such models reflect the ground truth factors and the mechanisms mapping them to observations, there is reason to expect that they allow generalization in downstream tasks. It is however well known that such \textit{identifiability} guaranties are typically not achievable without putting constraints on the model class. This is notably the case for nonlinear Independent Component Analysis, in which the LVM maps statistically independent variables to observations via a deterministic nonlinear function. Several families of \textit{spurious solutions} fitting perfectly the data, but that \textit{do not} correspond to the ground truth factors can be constructed in generic settings. However, recent work suggests that constraining the function class of such models may promote identifiability. Specifically, function classes with constraints on their partial derivatives, gathered in the Jacobian matrix, have been proposed, such as orthogonal coordinate transformations (OCT), which impose orthogonality of the Jacobian columns. In the present work, we prove that a subclass of these transformations, conformal maps, is identifiable and provide novel theoretical results suggesting that OCTs have properties that prevent families of spurious solutions to spoil identifiability in a generic setting. 
\end{abstract}

\section{Introduction}\label{sec:intro}


Unsupervised representation learning methods can fit Latent Variables Models (LVM) to complex real world data. While those latent representations allow to create realistic novel samples or represent the data in a compact way \cite{kingma_auto-encoding_2014,goodfellow_generative_2014}, they are a priori not related to the underlying ground truth generative factors of the data. It is highly desirable to recover the true underlying source distribution because those are expected to help with various downstream tasks, e.g., out of distribution generalization \cite{review_representation, bengio_representation}.

 One principled framework for representation learning is Independent Component Analysis 
 (ICA) where one tries to recover unobserved sources $s\in \R^d$ from observations $x=f(s)$
 and one assumes that the components $s_i$ are independent. 
 An important result is that for linear functions $f$ it is possible to
 recover $s$ from observations $x$ up to certain symmetries, i.e., the model is \textit{identifiable} \cite{comon}.
 In contrast, for general non-linear models $f$ is highly non-identifiable \cite{nonlinear_ica}. 
This has important consequences for representation learning, in particular the learning of disentangled representations is also unidentifiable without some access to the underlying sources  \cite{locatello}. 
Notably, this makes theoretical analysis of a large body of methods (see, e.g., \cite{higgins, factor_vae, f_statistic_vae}) that enforces disentanglement difficult. 

Several additional assumptions were suggested  to make the ICA problem identifiable. Broadly, there are two directions. First, some works imposed additional or different restrictions on the 
distribution of the sources. 
One line of research adds temporal structure by considering time series data  \cite{kernel_blind_source, time_contrastive, temporally_ica}. More recently, \citet{ica_auxiliary} proposed to introduce an observed auxiliary 
variable $u$, e.g., a class label, such that the source distribution has independent components conditional on the auxiliary variable. They show that under suitable assumptions on the distribution of $u$ and $s$ arbitrary nonlinear mixing functions can be identified. 
Several recent works extended this approach \cite{ica_vae, gin, yang2022nonlinear}.

Another possibility is to restrict the class of admissible functions by considering more flexible classes than just linear functions, but not allowing arbitrary non-linear functions. The general aim of this approach is to find sufficiently ``small'' function classes such that  
ICA is identifiable within this class while making them as large as possible to allow
flexible representation of complex data and being applicable to real world problems.
So far results in this direction are rather limited. It was shown that the post-nonlinear model is identifiable \cite{post_nonlinear}. 
Moreover, it has been shown that ICA with conformal maps in dimension $2$
is almost identifiable \cite{nonlinear_ica}.
More recently, identifiability of volume preserving transformations 
was investigated in the auxiliary variable case in \cite{yang2022nonlinear} (combining the two possible restrictions) and identifiability based on sparsity of the mixing function was studied in \cite{zheng_neurips}.

In this work, we extend the previous works by proving new identifiability results for unconditional ICA. Our main focus is conformal maps (i.e., maps that locally preserve angles) and Orthogonal Coordinate Transforms (OCT) 
(i.e., maps satisfying that $Df^\top Df$ is a diagonal matrix where $Df$ denotes the derivative of $f$). 
OCTs, that we will also call \textit{orthogonal maps} for simplicity, were recently introduced in the context of representation learning in \cite{ima}, motivated by the independence of mechanisms assumption 
from the causality literature. 
The main focus of this work is to prove new identifiability and partial identifiability results for this class of functions. 
Our main contributions are the following.


\begin{itemize}
\item We prove that ICA with conformal maps is identifiable in $d\geq 3$ and 
extend the earlier results in dimension 2 (Theorems~\ref{th:conformal}-\ref{th:conf2}). 
\item We define a new notion of local identifiability (Definition~\ref{def:loc}) and
prove that  ICA with orthogonal maps is locally identifiable (Theorem~\ref{th:loc_loc}). 
\item On the contrary we show that ICA with
volume preserving maps is not identifiable not even in the local sense (Theorem~\ref{th:volume}).
\item We introduce new tools to the ICA field: our results are based on connections to rigidity 
theory, restricting the global structure of functions based on local 
restrictions. Moreover, in contrast to most earlier results that argue locally using results from linear algebra we exploit the global structure of partial differential equations related to the identifiability problem. 
\end{itemize}


The remainder of this paper is structured as follows. In Section~\ref{sec:setting} we introduce the general setting of ICA and identifiability. 
Then we discuss our results for different classes of nonlinear maps. We consider conformal (Section~\ref{sec:conformal}), 
orthogonal (Section~\ref{sec:ima}) and volume preserving  (Section~\ref{sec:volume}) maps. An overview of our results can be found in 
Table~\ref{tab:results}. Finally, in  Section~\ref{sec:rigidity}, we discuss the relation of identifiability of ICA to the rigidity of the considered function class.


\section{Setting}\label{sec:setting}


Independent component analysis investigates the problem of identifying underlying sources when observing a mixture of them. 
We will consider the following general setting: there
exists some random hidden vector of sources $s\in \R^d$ 
and the observed data is generated by


\begin{align}\label{eq:general_data}
x = f(s),\qquad  p_s(s)=\prod_{i=1}^d p_i(s_i)=\P
\end{align}


where $f:\R^d\to \R^{d}
$ is a smooth invertible function. 
The condition on $s$ means that its coordinates (often referred to as factors of variation) 
are independent. 
Formally this means that the distribution of  $s$ which we denote by $\P$ satisfies
$\P\in \mc{M}_1(\R)^{\otimes n}$ where $\mc{M}_1(\R)$ denotes the probability measures on $\R$.
The goal of ICA is  to find an unmixing function $g:\R^{d}\to\R^d$ such that $g(x)$ has independent components.
Ideally, this should recover the true underlying factors of variation
and achieve \textit{Blind Source Separation}
(BSS), i.e., $g=f^{-1}$ up to certain symmetries.
Identification of the true generative factors of variations of an observed data distribution is of interest also since these provide a causal and interventional understanding of the data.

An important observation was that, in the generality stated above, identification of $s$ is not possible. In \cite{nonlinear_ica} two general constructions of
spurious solutions were given: the well known Darmois construction and 
a construction based on measure preserving transformations. 
The latter one is closer to our work here and we will discuss those in more detail in Section~\ref{sec:ima} and Appendix~\ref{app:spurious}. In a nutshell it is based on the observation that for measures $\P$ with smooth density one can construct smooth \textit{Measure Preserving Transformations} (MPT), $m:\R^d\to \R^d$
(that mix the different coordinates), i.e., maps that leave $\P$ invariant, such that $m(s)\indistribution s$ if $s\sim \P$.\footnote{We use the notation $X \indistribution Y$
to indicate that the two random variables $X$ and $Y$ follow the same distribution} This implies that all
functions  $(f\circ m)^{-1}$  recover independent sources since $(f\circ m)^{-1}(x)\indistribution s$ making
BSS impossible.

Thus it is a natural question whether additional assumptions on the mixing function
$f$ or distribution of $s$ allow us to identify $f$.
Let us define a framework for identifiability.
We assume data is generated according to \eqref{eq:general_data} 
where $f$ belongs to some function class $\mc{F}(A,B)$ of invertible functions $A\to B$, 
which we will always assume to be diffeomorphisms.\footnote{A diffeomorphism is a differentiable bijective map with differentiable inverse.} and 
$\P$ satisfies $\P\in\mc{P}$ for some set of probability distributions $\mc{P}\subset
\mc{M}_1(\R)^{\otimes d}$.
Finally, let $\mc{S}$ be a group of transformations $g:\R^d\to \R^d$ that encodes the allowed symmetries up to which the sources can be identified as follows. The function class will be simply denoted $\mc{F}$ when domain/codomain information is irrelevant.
\begin{figure}
\centering
    \includegraphics[width=.7\textwidth]{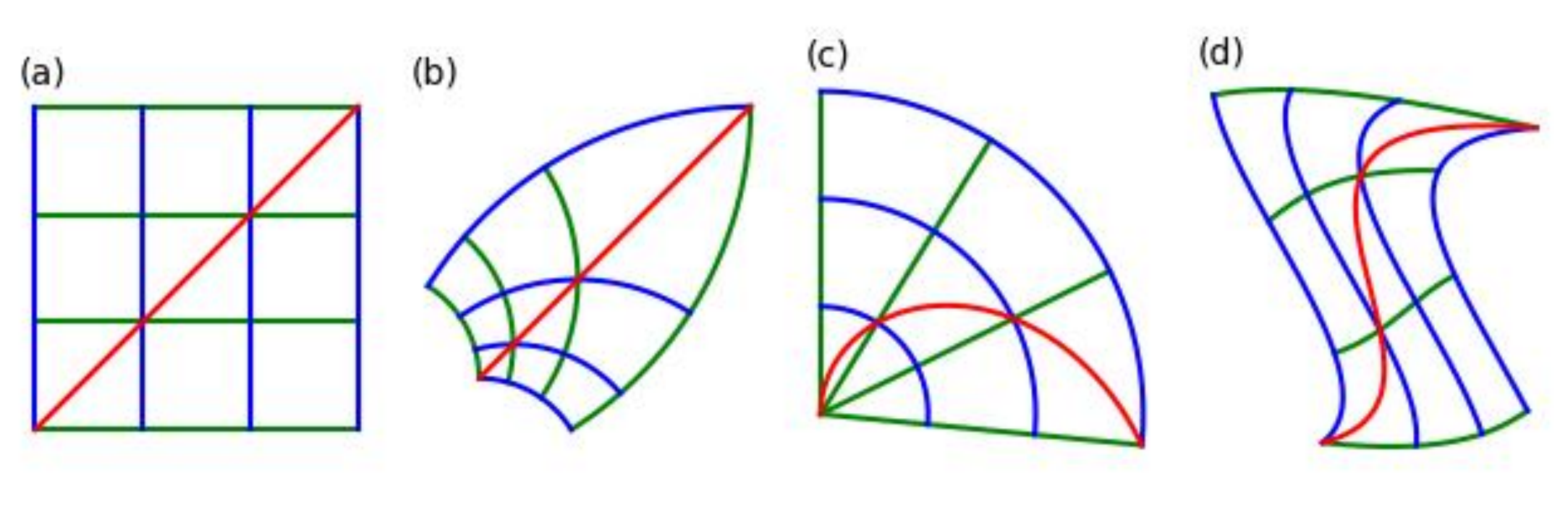}
    \caption{Illustration of the considered function classes. (a) shows a
    standard coordinate frame, (b) a conformal map applied to this frame which preserves angles, (c) an orthogonal map (polar coordinates) that preserve the orthogonality of lines parallel to the coordinate axes but not all angles (see red line), (d) a volume preserving map.}
    \label{fig:maps}
\end{figure}
\begin{definition}(Identifiability) \label{def:ident}
We say that independent component analysis in $(\mc{F},\mc{P})$ is identifiable
up to $\mc{S}$ if for functions $f, f'\in \mc{F}$ and distributions $\P, \P'\in 
\mc{P}$ the relation
\begin{align}\label{eq:def_equal_dist}
f(s) \indistribution f'(s')\quad\text{where } s\sim \P \text{ and } s'\sim\P'
\end{align}
implies that there is  $h\in \mc{S}$ such that $h=f'^{-1}\circ f$
on the support of $\P$.
\end{definition}
Note that we require the identity $h=f'^{-1}\circ f$ only to hold
on the support of $\P$ because, for complex classes $\mc{F}$,  
there is in general no unique extension of $f$ beyond the support of $\P$
and without data the extension cannot be identified. 
We do not always make this explicit in the following.
Put differently, identifiability means that given observations
of $x=f(s)$ and knowledge of $(\mc{F},\mc{P})$, 
we can find $g$ such that $g\circ f\in \mc{S}$, in particular  the reconstructed 
sources $s'=g(x)$ and the true sources $s$ are related by 
a symmetry transformation in $\mc{S}$.
We discuss in Appendix~\ref{app:spurious} how to identify the set $\mc{S}$ and how spurious solutions to the identification problem can be constructed. 
In the following, it will be convenient to use the notation $f_\ast \P$
which denotes the push-forward of the measure $\P$ along the function $f$.
We refer to Appendix~\ref{app:math} for a formal definition,  
but we note here that the distribution of $f(s)$ equals $f_\ast \P$
whenever $s\sim\P$. Therefore, \eqref{eq:def_equal_dist} can be equivalently written as $f_\ast\P=f'_\ast\P'$. 

We illustrate Definition~\ref{def:ident} through the well known example
of linear maps
\begin{align}\label{eq:flin}
\mc{F}_{\mathrm{lin}}=\{f:\R^d\to\R^d\, : \text{$f$ is linear and invertible}\},
\end{align}
i.e., $x = As$ for some invertible matrix $A\in \mathbb{R}^{n\times n}$.
We further define
\begin{align}\label{eq:plin}
\mc{P}_\lin &= \{\P\in \mc{M}_1(\R)^{\otimes d}:\, \text{at most one component of $\P$ is Gaussian}\},\\ \label{eq:slin}
\mc{S}_{\lin}&=\{ P\Lambda \, : \text{$P$ is a permutation matrix and $\Lambda$ is a diagonal matrix}\}.
\end{align}
It is easy to check that $\mc{S}$ is a group.
Then the following identifiability result for $\mc{F}_\lin$ is well known.
\begin{theorem}\label{th:linear}(Theorem~11 in \cite{comon})
The pair $(\mc{F}_\lin,\P_\lin)$ is identifiable up to $\mc{S}_\lin$.
\end{theorem}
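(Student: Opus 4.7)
The plan is to reduce the statement to the classical Darmois--Skitovich theorem. Writing $f(s)=As$ and $f'(s')=A's'$, the identity $f_{\ast}\P=f'_{\ast}\P'$ becomes $A s \indistribution A' s'$; setting $M:=(A')^{-1}A$, this rewrites as $Ms \indistribution s'$, where both $s$ and $s'$ have mutually independent coordinates. Membership of $M$ in $\mc{S}_\lin$ is equivalent to $M$ being a monomial matrix (exactly one nonzero entry per row and per column), so this is the goal.

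The main tool is the Darmois--Skitovich theorem: if $L_1=\sum_j a_j X_j$ and $L_2=\sum_j b_j X_j$ are two independent linear combinations of independent variables $X_j$, then every $X_j$ with $a_j b_j\ne 0$ is Gaussian. I would apply this to the pair $(s'_i, s'_k)$ for each $i\ne k$: these components are independent by hypothesis, so $M_{ij}M_{kj}\ne 0$ forces $s_j$ to be Gaussian. Since $\P\in\mc{P}_\lin$ allows at most one Gaussian coordinate, with index $G$ say, every column $j\ne G$ of $M$ contains at most one nonzero entry; combined with the invertibility of $M$, each such column contains exactly one.

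It remains to control column $G$. Writing $\pi(j)$ for the row of the unique nonzero entry of column $j\ne G$, the map $\pi$ is injective by linear independence of the columns of $M$ and picks out a unique row $i^{\ast}$ outside its image. Invertibility of $M$ then forces $M_{i^{\ast},G}\ne 0$, so $s'_{i^{\ast}}=M_{i^{\ast},G}s_G$ is purely Gaussian, while each $s'_i$ for $i\ne i^{\ast}$ takes the form $M_{i,\pi^{-1}(i)}s_{\pi^{-1}(i)}+M_{i,G}s_G$. Imposing independence of $s'_{i^{\ast}}$ and $s'_i$ and computing the joint characteristic function (using $\phi_{s_G}(u)=\exp(-\tfrac{\sigma^{2}}{2}u^{2})$) yields the identity $M_{i^{\ast},G}M_{i,G}=0$; since $M_{i^{\ast},G}\ne 0$, one deduces $M_{i,G}=0$. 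Hence column $G$ is also monomial, so $M=P\Lambda\in\mc{S}_\lin$.

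I expect the main obstacle to be the third step: Darmois--Skitovich provides only a necessary condition which is vacuous along the Gaussian column, so the Gaussian structure has to be exploited separately, either through the above characteristic-function computation or, equivalently, through the fact that independence of $s'_{i^{\ast}}$ (Gaussian) and $s'_i$ implies vanishing covariance between them. The fully non-Gaussian case is strictly easier and is dispatched by step two alone; the one-Gaussian case is where all the real work lies.
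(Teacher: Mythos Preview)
Your argument is correct and complete. The reduction to Darmois--Skitovich handles every non-Gaussian column, and your characteristic-function computation for the remaining Gaussian column $G$ is airtight: the factorisation $\phi_{s_G}(uM_{i^{\ast},G}+vM_{i,G})=\phi_{s_G}(uM_{i^{\ast},G})\phi_{s_G}(vM_{i,G})$ forces the cross term $-\sigma^{2}uv\,M_{i^{\ast},G}M_{i,G}$ in the exponent to vanish identically, hence $M_{i,G}=0$. (The covariance variant you mention as ``equivalent'' actually needs a second moment on $s_{\pi^{-1}(i)}$, which $\mc{P}_\lin$ does not guarantee; stick with the characteristic-function version.)

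The paper, however, deliberately avoids Darmois--Skitovich. It assumes first that the densities are $C^{2}$ and positive, writes $q(y)=|\det B|\,p(By)$ with $B=A^{-1}A'$, and exploits the identity $\partial_i\partial_j\ln q(y)=0$ for $i\neq j$, which yields
\[
0=\sum_{k}B_{ki}B_{kj}\,(\ln p_k)''(x_k)\quad\text{for all }x.
\]
Varying one $x_k$ at a time shows each summand is constant, so either $B_{ki}B_{kj}=0$ or $(\ln p_k)''$ is constant, i.e.\ $p_k$ is Gaussian; the single-Gaussian hypothesis then forces $B$ to be monomial. Non-smooth densities are handled afterwards by convolving with an independent Gaussian (after standardising so that $B\in\mathrm{O}(d)$, which makes the convolution commute with $B$).

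Your route is the classical one and is arguably cleaner for this theorem in isolation: it requires no regularity on the densities and no smoothing trick. The paper's differential route is chosen because it is the template for the conformal result (Theorem~\ref{th:conformal}): the same ``mixed second log-derivatives vanish'' identity survives when $B$ is replaced by a M\"obius transformation, whereas Darmois--Skitovich has no nonlinear analogue. So both proofs are valid; yours is more economical here, the paper's is the one that scales to the rest of the paper.
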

This result is optimal as the ordering and scale of the $s_i$ is unidentifiable
and the restriction to at most one Gaussian component is required to avoid linear MPTs of multivariate Gaussians. We provide a proof of this result 
in Appendix~\ref{app:linear} as this serves as a preparation for the more involved Theorem~\ref{th:conformal} below.

{
An important observation which was  also made in \cite{when_disentangle} is that with minor differences one can also consider the case where $f:\R^d\to M$ maps to a $d$-dimensional Riemannian manifold $M$.
An important example for this setting  is the case where $M\subset \R^n$ is a submanifold of a higher dimensional Euclidean space. This covers the standard setting of unsupervised representation learning where high dimensional observations
(often images) are created from low dimensional factors of variation mirroring the well known manifold hypothesis \cite{Tenenbaum2000}.
Note that this setting essentially covers the case of undercomplete ICA, where we consider $f:\R^d \to \R^n$ with $n> d$. The only difference is that we assume that we already know the submanifold $M$ that $f$ maps to. This manifold can, however, be identified from the observations $x=f(s)$ under minor regularity assumption on $f$ and the support of the data distribution. To avoid technical difficulties we assume  that the manifold is already known.
Note that we restrict our attention to the case where the factors of variations
are parametrized by a Euclidean space. An extension to product manifolds and a combination with the approach in \cite{higgins_disentangle} is an interesting question left to future work. 
}

In the next sections we discuss our results on identifiability of ICA for different
function classes. An illustration of the considered classes can be found in Figure~\ref{fig:maps}.
{
They are all characterized by a local condition on their gradient.
Previously, in \cite{when_disentangle} it was shown that the function class of local isometries is identifiable. Local isometries have been used frequently in machine learning, and more specifically in representation learning \cite{lin_emb,Tenenbaum2000,donoho_grimes}. Our main results consider two generalizations of these function classes, conformal maps and orthogonal coordinate transformations. Conformal maps  preserve angles locally and have been used in computer vision \cite{conf_2d,conf_3d,conf_med}. For $d=2$, conformal maps essentially consist of all biholomorphic mappings of simply connected open domains of the complex plane, and thus constitute a ``large'', non-parametric family, as a consequence of the Riemann Mapping Theorem \cite{osgood1900existence}. For $d\geq 3$, Liouville's Theorem implies that this class contains relatively ``few'' functions in fixed dimension (i.e., mapping $\R^d\to\R^d$), in the sense that it is a parametric family, with parameter dimension quadratic in $d$ (see Theorem~\ref{th:liouville}). However, this is a rich class when the target space is higher dimensional than the domain (i.e., $\R^d\to \R^{n}$, $d<n$). 
OCT are an even more general class 
that were motivated based on the principle of \textit{independent causal mechanisms} in \cite{ima}. Notably, it contains all conformal maps precomposed with nonlinear entrywise reparametrizations of the source components (see Corollary~\ref{co:rescale}). It is however much larger, as one can for example concatenate arbitrary functions from the large family of 2d conformal mappings to obtain higher dimensional OCTs. 
Moreover, many works showed that training VAEs promotes orthogonality of
the columns of the input Jacobian \cite{rolinek, zietlow, kumarpoole} and this has been empirically shown to be a good inductive bias for disentanglement. Indeed, these algorithms are widely used in representation learning and often recover semantically meaningful representations \cite{kumar_latent_concepts,chen_isolating_disentangle,factor_vae,higgins}. 
}

\begin{table}
\centering
        \caption{Overview of new identifiability results. Note that \emph{Identifiable}
         implies \emph{Locally identifiable}
        and if \emph{Locally identifiable} does not hold neither of the other two properties can hold.}
    \label{tab:results}
       \begin{tabular}{c||>{\centering} p{1cm}c|>{\centering} p{1cm} c |>{\centering} p{1cm}c}
        Function class &
        \multicolumn{2}{| >{\centering} p{2.7cm}|}{Identifiable (Def.~\ref{def:ident})}& \multicolumn{2}{| >{\centering} p{2.7cm}|}{Locally identifiable (Def.~\ref{def:loc})}& 
        \multicolumn{2}{>{\centering} p{2.7cm}}{ Gaussian only spurious solution}\\
        \hline \hline
        Linear     &  \cmark&  & \cmark & & \cmark& \\
        Conformal & \cmark & (Thm.~\ref{th:conformal} \& \ref{th:conf2}) & \cmark & & \cmark& \\
        Orthogonal & \textbf{?}&  & \cmark & (Thm.~\ref{th:loc_loc}) & \xmark
        & (Prop.~\ref{prop:non-uniqueness})\\
        Volume preserving & \xmark & & \xmark &(Thm.~\ref{th:volume})& \xmark &\\
        General nonlinear & \xmark  & & \xmark & (Lemma~\ref{le:const_rotations}) & \xmark &
        \end{tabular}
\end{table}


\section{Results for conformal maps}\label{sec:conformal}

Our first main result is an extension of Theorem~\ref{th:linear} to conformal maps.
A conformal map is a map that locally preserves angles, i.e. 
locally it looks like a scaled rotation.
It can be shown that this is equivalent to the following definition.
 \begin{definition}\label{def:conformal}(Conformal map)
  We define for domains $\Omega\subset\R^d$ the set of conformal maps by $\mc{F}_\conf=\{f\in C^1(\Omega,\R^d) : \, Df(x) = \lambda(x)O(x)\}$ where $\lambda:\Omega\to\R\setminus \{0\}$ is a scalar function and $O:\Omega\to \mathrm{O}(d)$ is a map to orthogonal matrices (i.e., $O(x)^{-1}=O(x)^\top$).
  \end{definition}
 All our results also hold for the more general class of conformal maps $f:\Omega\to M$ where $M$ is a Riemannian manifold. The complete definition can be found in Appendix~\ref{app:ext}.
For convenience we define signed permutation matrices by 
\begin{align}
\perm_\pm(d)=
\{P\in \R^{d\times d}:\, \text{$Q\in \R^{d\times d}$, such that $Q_{ij}=|P_{ij}|$, is a permutation matrix}\},
\end{align}
i.e. the set of matrices whose entry-wise absolute value is a permutation.
Later we will also use the notation $\diag(d)$ and $\perm(d)$
for $d\times d$ diagonal and permutation matrices, respectively.
We define
\begin{align}
\mc{S}_{\conf}&=\{x\to \kappa Px+a\; \text{ where $P\in \perm_\pm(d)$, $ a\in \R^d$, $\kappa\in \R\setminus\{0\}$}\}
\end{align}
and 
\begin{align}
\begin{split}
\mc{P}_\conf&=\mc{P}_1^{\otimes n} \cap \mc{P}_\lin, \qquad\text{where}
\\
\mc{P}_1 &= \{\mu \in \mc{M}_1(\R),\; \text{
there is  $\emptyset \neq O\subset \R$ open, s.t.\  $\mu$ has positive $C^2$ density  on $O$}  \}.
\end{split}
\end{align}  
While this condition might appear a bit technical it actually only rules out
pathological cases like the cantor measure or densities which are nowhere differentiable and probably it could be relaxed further. 
In particular $\mc{P}_1$ contains all probability measures with piecewise smooth densities.
Then the following identifiability for conformal maps in dimension $d > 2$ holds.
\begin{theorem}\label{th:conformal}
For $d>2$, ICA with respect to the pair $(\mc{F}_\conf, \mc{P}_\conf)$ is identifiable
up to $\mc{S}_{\conf}$.
\end{theorem}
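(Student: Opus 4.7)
The plan is to study $h := f'^{-1}\circ f$, which is a conformal diffeomorphism between open subsets of $\R^d$: by the chain rule, $Dh$ is the product of two matrices each of the form (positive scalar)$\,\times\,$(orthogonal), hence itself of that form, so $h\in\mc{F}_\conf$. The relation $f_\ast\P = f'_\ast\P'$ rewrites as $h_\ast\P = \P'$, so the whole problem reduces to classifying the conformal maps that push an independent distribution in $\mc{P}_\conf$ to another such distribution; the aim is to show they must all lie in $\mc{S}_\conf$.

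The first key step is to invoke Liouville's theorem (Theorem~\ref{th:liouville}): for $d\ge 3$, such an $h$ extends to a M\"obius transformation of $\R^d\cup\{\infty\}$, and up to pre- and post-composition with similarities it is either the identity or a single spherical inversion $y\mapsto y/|y|^2$ centered at some point outside the open set on which we work. This collapses the search for $h$ from an infinite-dimensional function class to a finite-dimensional parametric family. The second key step is to turn independence into a pointwise functional identity. Writing $\phi_i=\log p_i$ and $\psi_j=\log p'_j$, the change-of-variables formula together with the product form of both densities gives
\begin{equation*}
\sum_{i=1}^d \phi_i(s_i) \;=\; \sum_{j=1}^d \psi_j(h_j(s)) + \log|\det Dh(s)|
\end{equation*}
on an open set where both densities are positive and $C^2$ (such a set exists by the $\mc{P}_1$ assumption). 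Applying $\partial_k\partial_l$ with $k\ne l$ annihilates the left-hand side, while independence of $\P'$ kills the off-diagonal Hessian of $\log p_{\P'}$ on the right, yielding
\begin{equation*}
\sum_{j=1}^d \psi_j''(h_j)\,\partial_k h_j\,\partial_l h_j \;+\; \sum_{j=1}^d \psi_j'(h_j)\,\partial_k\partial_l h_j \;+\; \partial_k\partial_l\log|\det Dh| \;=\; 0,\qquad k\ne l.
\end{equation*}
This is the nonlinear analogue of the vanishing cross-cumulant equations used in the proof of Theorem~\ref{th:linear}; in the similarity case the last two terms drop and one recovers exactly those linear equations.

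The main obstacle is to exploit this identity, together with the explicit M\"obius form of $h$, first to rule out a nontrivial inversion and then to pin down the residual similarity. If $h$ contains an inversion, both $\partial_k\partial_l h_j$ and $\partial_k\partial_l\log|\det Dh|$ are explicit nonzero rational functions of $x$ with poles at the inversion center, so the identity above becomes
\begin{equation*}
\sum_{j=1}^d \psi_j''(h_j(x))\,R_j^{kl}(x) \;+\; \sum_{j=1}^d \psi_j'(h_j(x))\,\widetilde R_j^{kl}(x) \;+\; R^{kl}(x) \;=\; 0
\end{equation*}
on an open set, where the rational coefficients $R_j^{kl},\widetilde R_j^{kl},R^{kl}$ are determined by the M\"obius parameters. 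I would exploit the linear independence of these explicit rational functions, the overdetermination arising from having one such equation for each of the $\binom{d}{2}\ge 3$ pairs $(k,l)$, and the low dimensionality of the M\"obius moduli space, to force the inversion parameters to degenerate; the ``escape route'' in which all $\psi_j$ are affine (i.e.\ all marginals Gaussian) is cut off by $\P\in\mc{P}_\lin$. Once $h$ is reduced to a similarity $h(s)=\alpha R s + b$ with $R\in\mathrm{O}(d)$, Theorem~\ref{th:linear} applied to the linear part $\alpha R$ yields $\alpha R = P\Lambda$ with $P$ a permutation and $\Lambda$ diagonal; then $(\alpha R)^\top(\alpha R)=\alpha^2 I$ compared with $(P\Lambda)^\top(P\Lambda)=\Lambda^2$ forces $\Lambda = \alpha\,\diag(\pm 1,\ldots,\pm 1)$, so $\alpha R$ is $\alpha$ times a signed permutation, and $h$ lies in $\mc{S}_\conf$ on $\mathrm{supp}(\P)$ as required.
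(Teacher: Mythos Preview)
Your high-level architecture matches the paper exactly: reduce to a conformal $h$ between open subsets of $\R^d$, invoke Liouville, take mixed second partials of the log-density identity, rule out the inversion, then finish the similarity case via Theorem~\ref{th:linear} and the observation $(\alpha R)^\top(\alpha R)=\alpha^2 I$. The similarity endgame is fine.

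The gap is in the inversion step, which is essentially the entire content of the theorem (it is the paper's Lemma~\ref{le:inversion}, with a ten-step proof). Your proposed mechanism---``linear independence of the rational coefficients $R_j^{kl},\widetilde R_j^{kl},R^{kl}$ and overdetermination from $\binom{d}{2}$ equations''---does not work as stated, because the unknowns $\psi_j'(h_j(x)),\psi_j''(h_j(x))$ are \emph{functions}, not scalars: you cannot separate them from the rational coefficients by a finite linear-algebra count. The paper instead isolates individual $\psi_k''$ by an \emph{asymptotic} argument, sending one coordinate $x_r\to\infty$ while the others stay bounded, so that the leading $x_r^4$ term picks out $\sum_k A_{kj}A_{ki}\psi_k''$. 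This requires two ingredients you have not provided: (i) a geometric lemma (Lemma~\ref{le:geometry}) showing that the a priori merely \emph{local} $C^2$ positivity in $\mc{P}_1$ forces the support to be a union of full quadrants with globally $C^2$ densities there (otherwise you cannot send $x_r\to\infty$ at all), and (ii) a nontrivial ODE-type argument bounding $\psi_r''$ and $\psi_r'/x_r$ as $x_r\to\infty$, so that the subleading terms are genuinely negligible. Finally, your remark that the all-Gaussian ``escape route'' is blocked by $\P\in\mc{P}_\lin$ is not how the inversion case is closed: the paper shows via a separate summation trick (equations~\eqref{eq:tech1}--\eqref{eq:tech2}) that even if several $\psi_k''$ come out constant, one still forces $A_{ri}A_{rj}=0$; the at-most-one-Gaussian hypothesis is used only in the similarity case. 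After $A$ is shown to be a signed permutation, a further ODE analysis is still needed to exclude the remaining one-parameter family of inversion-compatible densities $p_j(x)\propto x^{\alpha_j}e^{-\kappa x^2/4}$, which you do not mention.
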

This means that we can identify conformal maps up to three symmetries, namely constant shifts
of the distributions, rescaling of all coordinates by the same constant factor, and
permutations of the coordinates. The proof is in 
Appendix~\ref{app:conformal}.
The main ingredient in the proof is that conformal maps in dimension $d>2$
are very rigid and can be characterized explicitly, as we will discuss in Section~\ref{sec:rigidity}.

We remark that it might be more natural to not fix the scale of the sources and allow arbitrary coordinate-wise rescalings. The result can be easily extended to accommodate this. We define 
\begin{align}
   \begin{split}\label{eq:def_repara}
 \mc{S}_{\rep}&= \{ g:\R^d\to\R^d|\, g=P\circ h\, \text{where $P\in \perm_\pm(d)$ and
$h:\R^d\to\R^d$ with }
\\
&\qquad\quad \text{ $h(x)=(h_1(x_1),\ldots, h_d(x_d))^\top$
for some $h_i\in C^1(\R,\R)$ with $h_i'>0$}\}.
\end{split}
\end{align}
It is easy to see that $\mc{S}_{\rep}$ is a group. We define the class of 
reparameterized conformal maps by $\mc{F}_{\sconf}=\{f\circ h\,| f\in \mc{F}_\conf, h\in \mc{S}_\rep\}\cap C^3(\Omega,M)$ and then get the following Corollary.
\begin{corollary}\label{co:rescale}
For $d>2$, ICA with respect to the pair $(\mc{F}_{\sconf},\mc{P}_\conf)$
is identifiable up to $\mc{S}_{\rep}$ if we assume in addition that the observational distribution cannot be expressed as $f_\ast \P$ for some $f\in \mc{F}_\conf$ and $\P\in \mc{M}_1(\R)^{\otimes d}$ which has at least two Gaussian components.
\end{corollary}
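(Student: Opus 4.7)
\textbf{Proof plan for Corollary~\ref{co:rescale}.} The plan is to absorb the reparametrisations $h, h'$ into the source distributions and reduce to Theorem~\ref{th:conformal}. Suppose $F = f \circ h$ and $F' = f' \circ h'$ lie in $\mc{F}_{\sconf}$ with $f, f' \in \mc{F}_\conf$, $h, h' \in \mc{S}_\rep$, and $\P, \P' \in \mc{P}_\conf$ satisfy $F_\ast\P = F'_\ast\P'$. First I would set $\tilde\P := h_\ast\P$ and $\tilde\P' := h'_\ast\P'$, so that $f_\ast \tilde\P = f'_\ast \tilde\P'$.

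The main work is to verify $\tilde\P, \tilde\P' \in \mc{P}_\conf$. Product structure is preserved because any reparametrisation $h = P_h \circ k_h$ with $P_h \in \perm_\pm(d)$ and $k_h$ coordinate-wise strictly monotone pushes a product measure to a product measure. For the $\mc{P}_1$ regularity I would invoke Liouville's theorem (Theorem~\ref{th:liouville}): in dimension $d \geq 3$ every $C^1$ conformal map is a M\"obius transformation, hence $C^\infty$; combined with $f \circ h \in C^3$ this forces $h = f^{-1} \circ (f \circ h) \in C^3$, and the change-of-variables formula then transports the positive $C^2$ marginal density of $\P$ on an open set to a positive $C^2$ density of $\tilde\P$ on the (open) image. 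The at-most-one-Gaussian clause is precisely where the extra hypothesis enters: if $\tilde\P$ had $\geq 2$ Gaussian components, then $F_\ast\P = f_\ast \tilde\P$ would realise the observational distribution as a conformal pushforward of a product measure with two Gaussian components, contradicting the assumption; the analogous argument handles $\tilde\P'$.

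Once $\tilde\P, \tilde\P' \in \mc{P}_\conf$ is established, Theorem~\ref{th:conformal} yields $f'^{-1} \circ f \in \mc{S}_\conf$ on $\mathrm{supp}(\tilde\P)$, i.e.\ $f'^{-1} \circ f$ acts there as $x \mapsto \kappa P x + a$ with $P \in \perm_\pm(d)$, $\kappa \neq 0$, $a \in \R^d$. Absorbing $\mathrm{sign}(\kappa)$ into $P$ rewrites this affine map as a signed permutation composed with a coordinate-wise affine map of positive slope, which shows $\mc{S}_\conf \subseteq \mc{S}_\rep$. Since $\mc{S}_\rep$ is a group containing $h, h'$, one obtains $F'^{-1} \circ F = h'^{-1} \circ (f'^{-1} \circ f) \circ h \in \mc{S}_\rep$ on $\mathrm{supp}(\P)$, which is the desired identifiability statement.

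I expect the main obstacle to be the regularity step: ensuring that $h$ is smooth enough for the $\mc{P}_1$ condition to survive the pushforward genuinely uses the rigidity of conformal maps in $d \geq 3$ (so the argument does not straightforwardly adapt to larger function classes), and the Gaussian hypothesis, being phrased in terms of the observational distribution rather than the latent one, must be applied carefully to rule out the sole type of spurious solution that the reduction would otherwise leave uncontrolled. The remainder of the argument is essentially algebraic bookkeeping once these two conditions are secured.
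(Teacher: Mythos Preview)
Your proposal is correct and follows essentially the same route as the paper: push the reparametrisations into the source measures, verify that the pushed-forward measures lie in $\mc{P}_\conf$ (product structure, $\mc{P}_1$ regularity via the $C^3$ hypothesis, and the Gaussian clause via the assumption on the observational distribution), apply Theorem~\ref{th:conformal}, and conclude by the group property of $\mc{S}_\rep$. Your justification for $h \in C^3$ (via Liouville and $h = f^{-1}\circ(f\circ h)$) and your explicit check that $\mc{S}_\conf \subset \mc{S}_\rep$ are more detailed than what the paper writes, but the argument is the same.
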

The additional restriction on the observational distribution is clearly necessary to exclude the non-identifiability of Gaussian distributions. 

For dimension $d=2$ it was shown \cite{nonlinear_ica} that conformal maps
can be identified up to a rotation  when fixing one point of the conformal map (setting
$f(0)=0$).
The authors also claim, without proof, that the remaining ambiguity can be removed for typical probability distributions.
We extend their result by removing the condition that one point is fixed and 
prove full identifiability with a minor assumption on the involved densities.
We define the following set of probability measures on $\R^2$
\begin{align}
\begin{split}
\mc{P}_{\conf2}& = \{\P=\P_1\otimes \P_2\in \mc{M}(\R)^2 |\; \text{s.t.\  $\mathrm{supp}(\P_i)$ is
a bounded interval $I_i$ and}
\\
&\qquad \qquad \text{$\P_i$ has density bounded above and below on $I_i$}  \}
\end{split}
\end{align}
Then we get the following result.
\begin{theorem}\label{th:conf2}
For $d=2$, ICA with respect to the pair $(\mc{F}_\conf, \mc{P}_{\conf 2})$ is
identifiable up to $\mc{S}_{\conf}$.
\end{theorem}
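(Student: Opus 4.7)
The plan is to reduce the problem to understanding conformal bijections between two axis-aligned rectangles via the identification $\R^2 \cong \mathbb{C}$, and then show any such map must be affine with linear part in $\mc{S}_\conf$.

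Let $f,f'\in\mc{F}_\conf$ and $\P=\P_1\otimes\P_2$, $\P'=\P'_1\otimes\P'_2$ in $\mc{P}_{\conf 2}$ satisfy $f_\ast\P=f'_\ast\P'$, and set $h=f'^{-1}\circ f$. Then $h$ is a conformal diffeomorphism on an open neighborhood of $\mathrm{supp}(\P)=I_1\times I_2$, mapping $I_1\times I_2$ homeomorphically onto $J_1\times J_2=\mathrm{supp}(\P')$. Because $\det Dh$ is continuous and nonvanishing, its sign is constant on the connected domain, so $h$ is either globally holomorphic or globally antiholomorphic as a function of $z=x_1+ix_2$; I treat the holomorphic case, the other being analogous after composition with complex conjugation.

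To show $h$ is affine, I first argue that $h$ sends corners to corners. At any corner $c$ of $I_1\times I_2$, the two incident boundary edges meet at interior angle $\pi/2$; since $h$ is conformal and smooth at $c$, their images are smooth arcs in $\partial(J_1\times J_2)$ meeting at $h(c)$ with the same angle. Two arcs of $\partial(J_1\times J_2)$ can only meet at angle $\pi/2$ at a vertex (elsewhere the boundary is locally a single straight segment), so $h(c)$ is a corner of $J_1\times J_2$. Consequently each open edge of $I_1\times I_2$ is mapped into a straight-line segment of $\partial(J_1\times J_2)$, and the Schwarz reflection principle extends $h$ holomorphically across each side via $z\mapsto R'(h(R(z)))$, where $R,R'$ are the reflections across the corresponding source and target edges. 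Iterating reflections across the four sides tiles $\mathbb{C}$ by reflected copies of $I_1\times I_2$; the extensions agree by uniqueness of analytic continuation (monodromy on the simply connected plane), producing a single entire map $\tilde h\colon\mathbb{C}\to\mathbb{C}$ whose image tiles $\mathbb{C}$ by reflected copies of $J_1\times J_2$. Hence $\tilde h$ is an entire bijection of $\mathbb{C}$, and therefore affine: $h(z)=\alpha z+\beta$ for some $\alpha\in\mathbb{C}\setminus\{0\}$, $\beta\in\mathbb{C}$.

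Finally, I use the support constraint to pin down $\alpha$. Writing $\alpha=re^{i\theta}$ with $r>0$, the image $h(I_1\times I_2)$ is the source rectangle scaled by $r$, rotated by $\theta$ and translated. Equality with the axis-aligned rectangle $J_1\times J_2$ forces $\theta$ to be an integer multiple of $\pi/2$, so the linear part of $h$ is $r$ times a signed permutation matrix and $h\in\mc{S}_\conf$. The antiholomorphic case yields $h(z)=\alpha\bar z+\beta$, and since complex conjugation $(x_1,x_2)\mapsto(x_1,-x_2)$ already sits in $\perm_\pm(2)$, the same conclusion follows. I expect the most delicate step to be checking that the iterated Schwarz reflections yield a single-valued entire function — concretely, that analytic continuations into a diagonally adjacent rectangle from two different orders of reflections coincide — together with carefully establishing the boundary regularity of $h$ (continuity up to the corners and edges) needed to apply both the angle-matching argument at corners and the Schwarz principle across edges.
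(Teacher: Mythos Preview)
Your route via Schwarz reflection is a genuine alternative to the paper's Schwarz--Christoffel argument; both reduce the problem to showing that the conformal bijection between the two open rectangles sends corners to corners, after which affineness follows (the paper by matching prevertices on the disk, you by extending to an entire bijection of $\mathbb C$). Your tiling/monodromy step and the final extraction of $\theta\in\tfrac{\pi}{2}\mathbb Z$ are fine.

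The gap is precisely at the corner step, and it is the place where the hypothesis you never use --- that the densities are bounded above and below --- actually does the work. You assert that $h$ is conformal on an open neighbourhood of the \emph{closed} rectangle and then invoke angle preservation at a corner $c$. Nothing in the setup guarantees this: a priori $h$ is conformal only on the open rectangle, and its Carath\'eodory extension to the closure need not be angle-preserving at boundary points. In fact a conformal bijection between open rectangles of different conformal moduli does \emph{not} send corners to corners; locally it behaves like $(z-c)^2$ when a corner is sent to an edge point and like $(z-c)^{1/2}$ when an edge point is sent to a corner. What rules this out is the density relation $p(x)=p'(h(x))\,|h'(x)|^2$: the two-sided bounds on $p,p'$ force $|h'|$ to be bounded above and away from zero on the open rectangle, excluding both degenerate local models and hence forcing corners to corners. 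The paper implements exactly this via the explicit Schwarz--Christoffel derivative (showing $|h'(y)|\to\infty$ forces $|g'(y)|\to\infty$, hence the prevertex sets coincide); you can equally well insert the $|h'|$ bound directly before your reflection argument, but as written your proof bypasses the one hypothesis that makes the theorem true.
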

This means that we can identify conformal maps on compact domains in dimension 2 up to 
shifts, permutations of coordinates, and scale. Note that we can also identify conformal maps if $\P$ has full support $\R^2$ using the
same proof as for $d>2$ (see Lemma~\ref{le:inversion}
in the supplement) and an extension as in Corollary~\ref{co:rescale} is possible. The proof of this result is in Appendix~\ref{app:conformal}.
It relies on the Schwarz-Christoffel mapping that provides a formula for conformal maps from  the upper half plane or the unit disc to polygons.


\section{Results for orthogonal maps}\label{sec:ima}


Recently, in \cite{ima}, the more general class of OCTs was considered in the context of ICA. They referred to
orthogonal coordinates as IMA maps, referencing to independent mechanisms.
This nomenclature was motivated 
by the causality literature and we refer to their paper for an extensive motivation and further results. As we focus on theoretical results for this function class
we stick to the more common term of OCTs.
Orthogonal coordinate transformations are defined as the set of functions whose derivative have orthogonal columns, i.e., 
the vectors $\partial_i f$ and $\partial_j f$ are orthogonal for $i\neq j$.
\begin{definition}(OCT maps)
\label{def:ima}
 We define for domains $\Omega\subset\R^d$ the set of OCT maps (orthogonal coordinates) by $\mc{F}_\ima=\{f\in C^1(\Omega,\R^d) : \, Df(x)^\top Df(x)\in \diag(d)\}$.
\end{definition}
\paragraph{OCTs constitute a rich class of functions.}
The study of OCTs has a long history and already in the 19th century the structure of all OCTs defined in a neighbourhood of a point were characterized \cite{darboux1896leccons, Cartan1925}.
Later, also the set of global orthogonal coordinate systems on $\R^d$ was characterized \cite{coordinate_systems}. As those results are not easily accesible we will provide here a simple argument showing that OCTs constitute a rich class of functions.
We first note that $\mc{F}_\ima$  contains the above $\mc{F}_\sconf$, as functions in the later class have a $Df(x)$ that takes the form of a Jacobian of a conformal map whose columns are rescaled by derivatives of the entry-wise reparameterizations, such that they remain orthogonal. However, $\mc{F}_\ima$ is much bigger than $\mc{F}_\sconf$. For example, take a $n$-tuple $(f^1,...,f^n)$ of arbitrary injective 2D conformal maps 
\[
f^k:\, \Omega_k \to \R^2 \in \mc{F}_\conf,\, k= 1 ... n ,\, \Omega_k \subset \R^2
\]
and build the ``concatenated'' map 
\begin{align*}
f_{conc}:\, & \Omega_1 \times \dots \times \Omega_n \to  \R^{2n}\\
            & s \quad \mapsto (f^1_1(s_1,s_2),f^1_2(s_1,s_2),...,f^k_1(s_{2k-1},s_{2k}),f^k_2(s_{2k-1},s_{2k}),...,f^n_1(s_{2n-1},s_{2n}),f^n_2(s_{2n-1},s_{2n}) )^\top\,.
\end{align*}
The Jacobian of $f_{conc}$ is block diagonal, such that columns associated to different diagonal blocks are obviously orthogonal, and columns pertaining to the same k-th diagonal block are orthogonal by conformality of $f_k$. With such a construction, that we can also further post-compose with transformations in $\mc{F}_\conf$ on $\R^{2n}$, we can thus build a large non-pararametric subclass of $\mc{F}_\ima$ on $\R^{2n}$. This construction can be easily adapted to the case of odd dimensions.

\paragraph{Setting for identifiability with OCTs.} OCTs can also be generalized to maps whose target is a $d$-dimensional manifold (see definition in Appendix~\ref{app:ext}), and the following results will also apply to such case.  
First, we note that we can only hope to identify a mechanism $f\in \mc{F}_\ima$ up to
coordinate-wise transformations and permutations, i.e., maps in $\mc{S}_\rep$.
Indeed, if $f\in \mc{F}_\ima$
and $g\in \mc{S}_\rep$ then $f\circ g\in \mc{F}_\ima$. Thus, in particular
$f_\ast \P=(f\circ g)_\ast (g^{-1})_\ast\P$. This implies that given observations from $f_\ast \P$ we can identify $f$ and $\P$ only up to $g\in \mc{S}_\rep$.
More precisely, for any (sufficiently smooth) $\P'$ there is
$f'$ such that $f_\ast \P=f'_\ast \P'$ where we pick $g$ such that $\P'=g^{-1}_\ast\P$.
\footnote{This is possible if both distributions have compact connected support where they have a smooth positive density. We ignore difficulties associated with unbounded support or non-regular measures here}

As the distribution of the $s_i$ is not identifiable, we  
map it to a fixed reference distribution that we choose to be the uniform distribution on $(0,1)^d$. 
We introduce the shorthand
$\cube=(0,1)^d$ for the standard open unit cube (exclusion of the boundary will be important for our result) and denote
 by $\nu$ the uniform (Lebesgue) measure on $\cube$. For fixed base measure $\nu$ the symmetry group is reduced to permutations and reflections, i.e., maps in $P\in \perm_\pm(d)$.
 
 We conjecture that for 'typical' pairs $(f,\,\P)\in \mc{F}_{\ima}\times \mc{P}_\ima$, ICA is identifiable with respect to $\mc{S}_{\rep}$
(with a suitable definition of $\mc{P}_{\ima}$, e.g., $\mc{P}_\ima=\mc{P}_\conf$).
However, we leave a precise statement for future work. Below we will show a weaker notion of identifiability for OCTs, but, before that, we first exhibit exceptional classes of spurious solutions for ICA with OCTs.
 
 
\paragraph{Spurious solutions for ICA with OCTs.} 
We first note that just as for linear maps and conformal maps (see Thm.~\ref{th:linear}) Gaussian distributions can hamper identification. 
This is because arbitrary measures with a factorized density can be pushed forward into multivariate Gaussians using a suitable coordinate-wise transformation. 
\begin{fact}
Let $\P$ be a probability measure on $\R$ with bounded density $p$ 
and cumulative distribution function $F_\P$. Denote the 
cumulative distribution function of the standard normal by $F_\mc{N}$.
Let $h_\P=F^{-1}_\mc{N}\circ F_\P$. Then $(h_\P)_\ast \P$ has a standard normal distribution.
\end{fact}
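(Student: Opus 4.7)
The plan is to invoke the classical probability integral transform in two steps: first push $\P$ forward to the uniform distribution on $(0,1)$ via $F_\P$, then push that uniform forward to a standard normal via $F_\mc{N}^{-1}$.

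First I would verify that $F_\P$ is continuous. Since $\P$ has a bounded density $p$ with $\|p\|_\infty \le M$, the CDF satisfies $F_\P(b) - F_\P(a) = \int_a^b p(t)\, \mathrm{d}t \le M(b-a)$, so $F_\P$ is Lipschitz, hence continuous. Next, for $X \sim \P$ and $u \in (0,1)$, I would show $\P(F_\P(X) \le u) = u$. Setting $q(u) := \inf\{x\in\R : F_\P(x) \ge u\}$ (the generalized inverse), one has $\{x : F_\P(x) \le u\} \subset (-\infty, q(u)]$ up to a set where $F_\P$ is locally constant; since $F_\P$ is constant on an interval exactly when $p$ vanishes there, the exceptional set carries zero $\P$-mass. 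Continuity of $F_\P$ then gives $\P(X \le q(u)) = F_\P(q(u)) = u$, so $(F_\P)_*\P$ is the uniform measure on $(0,1)$.

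Second, since $F_\mc{N}$ is a strictly increasing continuous bijection $\R \to (0,1)$, its inverse $F_\mc{N}^{-1}:(0,1)\to\R$ is well defined and strictly increasing. For $U$ uniform on $(0,1)$ and $t\in\R$,
\begin{equation*}
\P\bigl(F_\mc{N}^{-1}(U) \le t\bigr) = \P\bigl(U \le F_\mc{N}(t)\bigr) = F_\mc{N}(t),
\end{equation*}
so $(F_\mc{N}^{-1})_*\nu_{(0,1)} = \mc{N}(0,1)$. Composing the two pushforwards yields $(h_\P)_*\P = (F_\mc{N}^{-1})_* (F_\P)_* \P = \mc{N}(0,1)$, as desired.

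The only subtlety, and the one place where the bounded-density hypothesis is actually used, is handling regions where $F_\P$ is flat in step one; this is what the short argument above addresses. Everything else is a direct application of the change-of-variables identity for pushforward measures under monotone functions.
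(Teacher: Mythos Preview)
Your proof is correct, and in fact the paper does not supply a proof of this Fact at all---it is stated as a standard result and used without justification. Your two-step probability integral transform argument is the expected one.

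One small remark on your closing comment: the place where the bounded-density hypothesis enters is actually your Lipschitz estimate for $F_\P$ (which yields continuity), not the flat-region argument. The flat-region step only requires that $\P$ have \emph{some} density, since $F_\P$ constant on $[a,b]$ forces $\int_a^b p = 0$ and hence $\P([a,b])=0$ regardless of boundedness. Continuity of $F_\P$ would likewise follow from absolute continuity alone, so the boundedness assumption is stronger than strictly necessary; the paper presumably includes it for convenience rather than out of necessity.
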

This implies that if $\P=\P_1\otimes\ldots\otimes\P_d$ and $x=f(s)=(h_{\P_1}(s_1),\ldots,h_{\P_d}(s_d))$
then $f_\ast \P$ follows a standard normal distribution. 
In particular, for every $A\in \mathrm{O}(d)$ the distribution
of $Ax$ is standard normal and its components are independent.
Note that, in contrast to conformal and linear maps, it is not sufficient to exclude Gaussian source distributions: due to the flexibility of the function class, we have to exclude that the pair $(f,\P)$ has a Gaussian observational distribution $f_\ast \P$. 
Next we show that a more general construction using OCTs is possible.
\begin{proposition}\label{prop:non-uniqueness}
Let $\P$ be a rotation invariant distribution on $\R^d$
with smooth density.  
Then there is a smooth and invertible (on its image) function $f:\cube\to \R^d$ with $f\in \mc{F}_{\ima}$
such that $f_\ast \nu = \P$.
\end{proposition}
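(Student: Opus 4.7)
The plan is to realize $f$ as the composition $\Phi\circ h$, where $\Phi$ is the spherical-to-Cartesian coordinate map and $h$ is a coordinate-wise reparametrization of $\cube$ that matches the one-dimensional marginals of $\P$ expressed in spherical coordinates. The two key observations are that spherical coordinates are themselves an OCT, and that rotation invariance makes the density of $\P$ factorize in those coordinates.

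Concretely, let $\Phi:(0,\infty)\times (0,\pi)^{d-2}\times(0,2\pi)\to \R^d\setminus L$ denote the usual spherical coordinate map, where $L$ is the codimension-one seam of the chart. A direct computation shows that the partial derivatives $\partial_r\Phi,\partial_{\theta_1}\Phi,\ldots,\partial_{\theta_{d-1}}\Phi$ are pairwise orthogonal (equivalently, the pull-back of the Euclidean metric is diagonal with entries $1,r^2,r^2\sin^2\theta_1,\ldots$). Hence $\Phi\in \mc{F}_\ima$ and $\Phi$ is a diffeomorphism onto its image $\R^d\setminus L$; since $L$ has Lebesgue measure zero and $\P$ has a smooth density, $\P(L)=0$, so it is harmless that $f$ will miss $L$.

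By rotation invariance, the density of $\P$ has the form $q(|x|)$ for some smooth function $q$. Computing $|\det D\Phi|$, the pull-back of $\P$ to the spherical coordinate domain has density
\[
q(r)\,r^{d-1}\prod_{k=1}^{d-2}\sin^{d-1-k}(\theta_k),
\]
which factorizes as a product of one-dimensional densities in $r$ and the $\theta_k$ (with $\theta_{d-1}$ uniform on $(0,2\pi)$). I can therefore choose smooth, strictly increasing cumulative distribution functions $F_1,\ldots,F_d$ for the respective marginals and define $h:\cube\to (0,\infty)\times(0,\pi)^{d-2}\times(0,2\pi)$ coordinate-wise by $h(u)=(F_1^{-1}(u_1),\ldots,F_d^{-1}(u_d))$, so that $h_\ast\nu$ equals the factorized density above.

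Setting $f=\Phi\circ h$ yields $f_\ast\nu=\Phi_\ast h_\ast\nu=\P$, and $f$ is smooth and bijective from $\cube$ onto $\R^d\setminus L$. To verify $f\in \mc{F}_\ima$, note that since $h$ is coordinate-wise its Jacobian $Dh$ is diagonal, so
\[
Df^\top Df = Dh^\top\, D\Phi^\top D\Phi\, Dh
\]
is a product of three diagonal matrices and is therefore diagonal. The main technical point to be careful about is the regularity of the inverse CDFs $F_k^{-1}$ at the endpoints of the coordinate intervals, since the marginal densities vanish there (the radial density $q(r)r^{d-1}$ at $r=0$, and each $\sin^{d-1-k}\theta_k$ at $\theta_k\in\{0,\pi\}$). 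This is however not an obstruction: we only need smoothness in the interior of $\cube$, which follows from strict positivity of the marginals on their open domains, and this holds whenever $q$ is positive on the interior of $\mathrm{supp}(\P)$, the generic case intended by the statement.
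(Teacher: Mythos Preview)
Your proposal is correct and follows essentially the same approach as the paper: both construct $f$ as the spherical (polar) coordinate map precomposed with a coordinate-wise reparametrization, using that rotation invariance makes the density factorize in spherical coordinates and that coordinate-wise maps preserve membership in $\mc{F}_\ima$. The paper is somewhat more explicit in writing out the specific reparametrization functions (the integrals $g_k(\theta)=\int_0^\theta \sin^k(t)\,\d t$ and the radial CDF) and includes an additional affine scaling $\lambda$ from $\cube$ to the natural domain of those inverses, but the core argument is identical.
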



The proof of this result can be found in Appendix~\ref{app:ima}.
The main idea in the proof is that $d$-dimensional 
polar coordinates do the trick up to coordinate-wise rescaling.
This proposition implies that the entire family $\{f_R=R\circ f |\, R\in O(n)\}$
satisfies $(f_R)_\ast \nu = f_\ast \nu$
and, by definition of $\mc{F}_\ima$ we have $f_R\in \mc{F}_\ima$
because $f\in \mc{F}_\ima$ implies $R\circ f\in \mc{F}_\ima$.
In particular all inverses $g_R=f_R^{-1}$ recover independent sources
in the sense that $(g_R)_\ast f_\ast\nu=\nu$
and BSS is not possible in a meaningful way in this (special) case.
The construction 
in Proposition~\ref{prop:non-uniqueness} gives spurious solutions for substantially more observational distributions than just the Gaussian (this is indicated in Table~\ref{tab:results}).
Nevertheless we do not view this as a general obstacle to identifiability results for
OCTs for two reasons. Firstly the spurious solutions  only apply to carefully chosen pairs of function and base measure such that we obtain a still very non-generic (radial) observational distribution. Moreover, the function constructed in Proposition~\ref{prop:non-uniqueness} cannot be extended to $\overline{\cube}=[0,1]^d$ such that it remains invertible 
(in the language of differential geometry this means that $f$ is only an immersion not an embedding of a submanifold). In this sense the main message of Proposition~\ref{prop:non-uniqueness} is that an identifiability result for $\mc{F}_\ima$ needs to contain assumptions ruling out those spurious solutions (just like Gaussians are excluded for linear ICA).


\paragraph{Local Stability of OCTs.}
\begin{figure}
\begin{minipage}{0.45\textwidth}
    \centering
    \includegraphics[width=\textwidth]{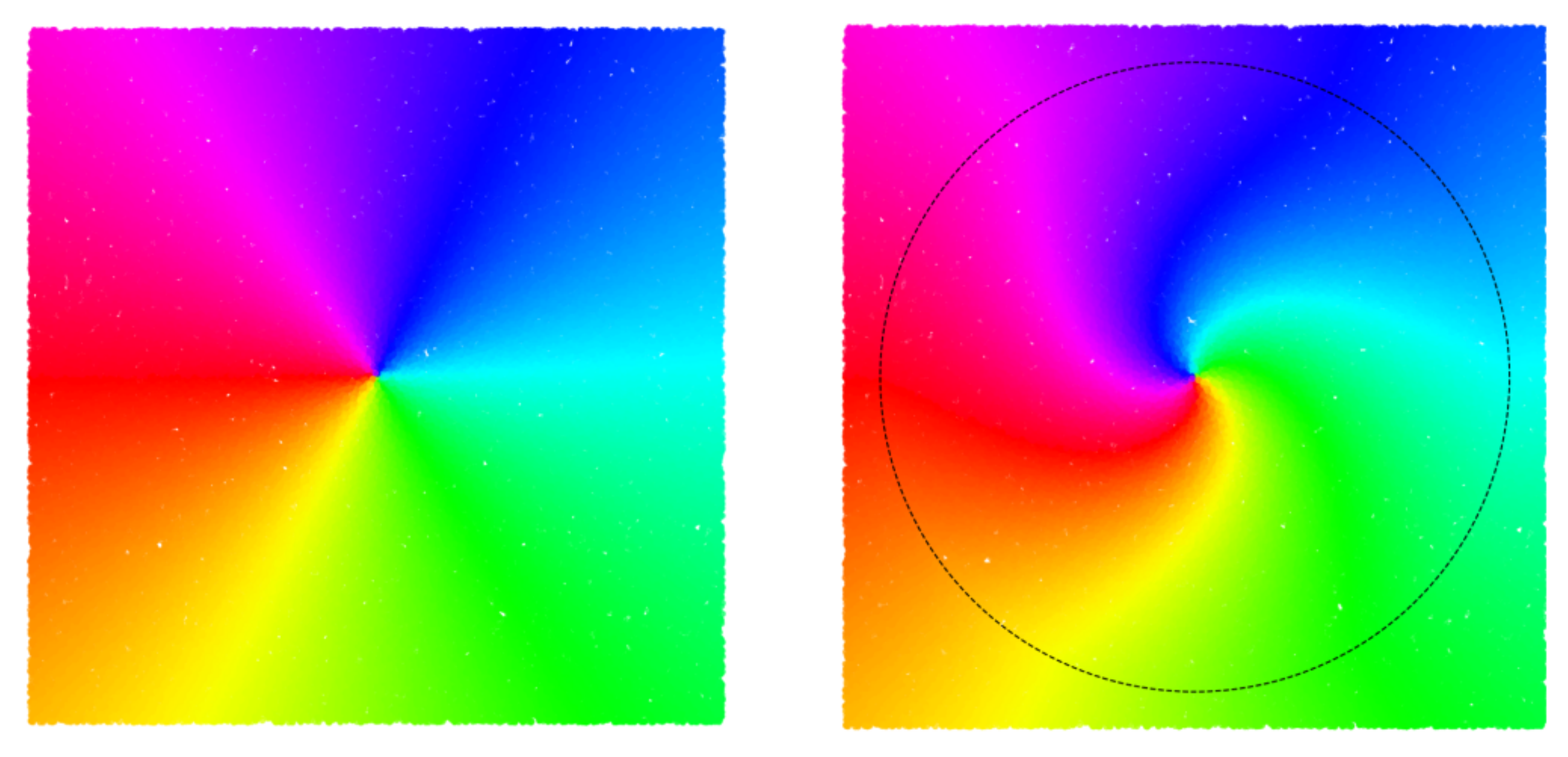}
    \caption{Illustration of radius dependent rotations as defined in Lemma~\ref{le:const_rotations}. The left figure shows the initial sources.
    In the right figure a radius dependent volume preserving transformation was applied (see Appendix~\ref{app:spurious}).}
    \label{fig:flow}
\end{minipage}
\begin{minipage}{.05\textwidth}
    \mbox{}
\end{minipage}
\begin{minipage}{0.45\textwidth}
    \centering
    \includegraphics[width=\textwidth]{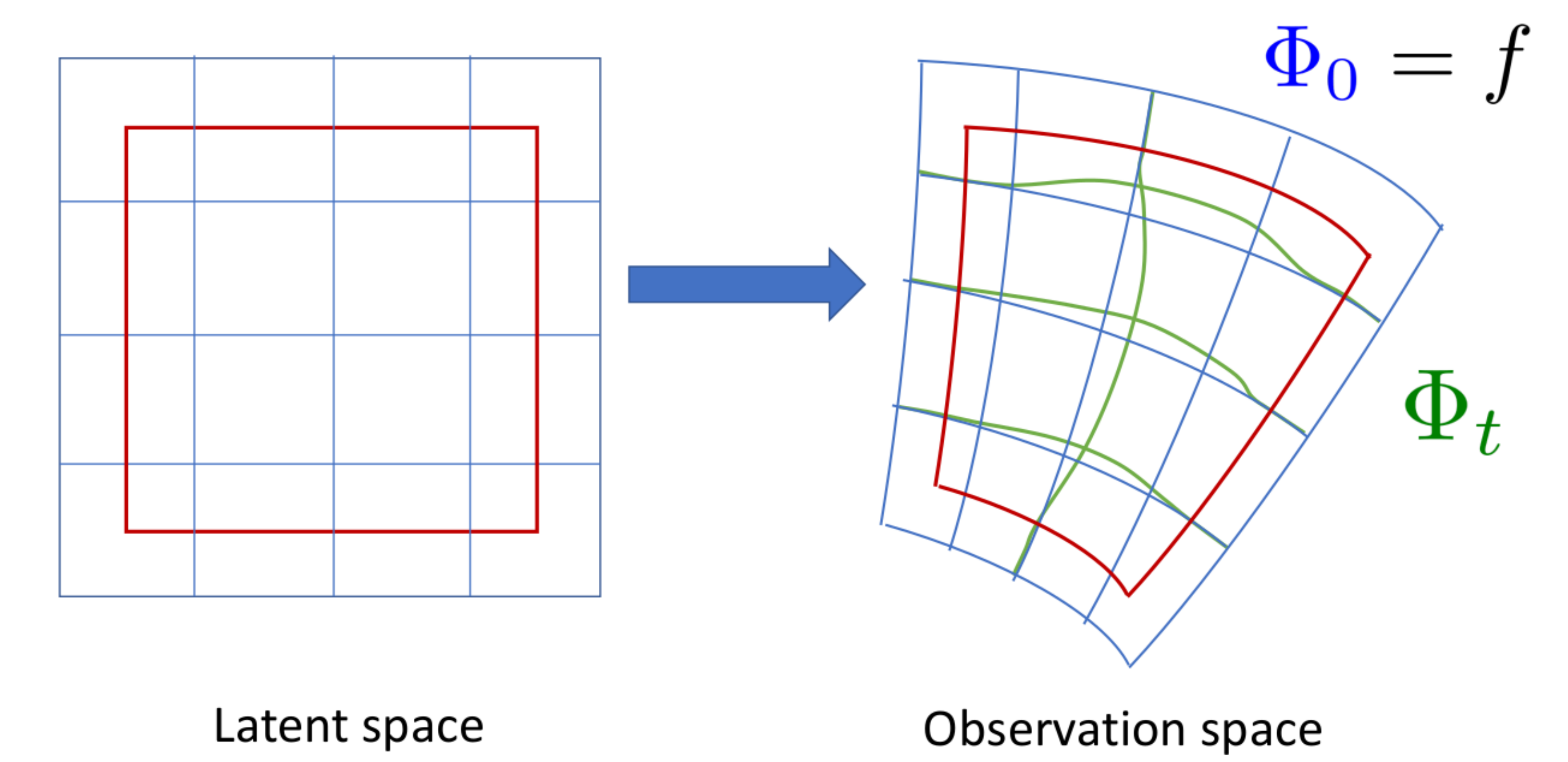}
    \caption{Smooth invariant deformations. The blue grid indicates the transformation $f$, while the green grid shows the deformed map $\Phi_t$.  Outside of the red box $\Phi_t=f_t$ holds (see Definition~\ref{def:loc}).}
    \label{fig:deform}
\end{minipage}
    \end{figure}
We now give partial results towards identifiability of OCTs. 
While we do not prove general identifiability for this class, we demonstrate their \textit{local rigidity}: OCTs cannot  be continuously deformed to obtain spurious solutions. This is in stark contrast to the general nonlinear case, 
which we will discuss for comparison below. 
Actually, we will show the following slightly stronger statement. Suppose we know some initial 
data generating mechanism $f_0$ because we have, e.g., access to 
samples $(s, f_0(s))$. Now we assume that 
the mixing $f_t$ depends smoothly on some parameter $t$  which could be, e.g., time
or an environment.
Then we can identify $f_t$ if $f_t\in \mc{F}_{\ima}$ for all $t$
and given access to samples $f_t(s)$. This would not be true if $f_t$'s would be 
unconstrained nonlinear mixing functions. We now make these statements precise.
We first define smooth deformations of a mixing function.
{
\begin{definition}(Smooth invariant deformations)
Let $\mc{F}$ be some function class.
Consider a family of differentiable transformations $\Phi \in C^1((-T, T)\times \R^d, M)$ for some $T>0$ and a smooth, $d$-dimensional manifold $M$ such that $\Phi_t=\Phi(t,\cdot)$ is a diffeomorphism onto its image. 
We call $\Phi_t$ a smooth invariant deformation if $\Phi_t(\cdot)\in \mc{F}$
for all $t$.
\end{definition}


An illustration of this definition can be found in Figure~\ref{fig:deform}. Based on invariant deformations we can now define a local
identifiability property of ICA in a given function class.
\begin{definition}(Local identifiability of ICA).\label{def:loc} Consider a function class $\mc{F}$.
Let $f_t$ be a smooth invariant deformation in $\mc{F}$ that is analytic in $t$.
Let $\Phi_t$ be another smooth invariant deformation in $\mc{F}$
analytic in $t$
such that $\Phi_0=f_0$ and  $(\Phi_t)_\ast \nu=(f_t)_\ast\nu$ for all $t$.
Assume that there is $\varepsilon>0$
such that $\Phi_t(s)=f_t(s)$ if  $\mathrm{dist}(s, \partial C_d)<\varepsilon$.
Then we say that ICA in $\mc{F}$ is \textit{locally identifiable at $(f_t,\nu)$} if  these assumptions imply
$\Phi_t=f_t$ for all $t$.
We call $\mc{F}$ \textit{locally identifiable} if it is locally identifiable at $(f_t,\nu)$
for all analytic local deformations $f_t$.
\end{definition}


Local identifiability of a function class means that we can identify smooth 
deformations $f_t\in \mc{F}$
from some initial mixing $f_0\in \mc{F}$ 
if $f_0$ is known on the whole latent domain, and the behaviour of $f_t$ close to the domain's boundary 
is known for all $t$. 
This can be interpreted as plain identifiability  in the concept drift setting 
\cite{concept, concept2}. 
Formulated differently, it means that an adversary cannot smoothly deform  the function $f_0$ in a subset whose boundary is away from the boundary of the domain, such that the outcome is ambiguous given the resulting observational distributions. An experimental illustration of this setting and Theorem~\ref{th:loc_loc} below can be found in Appendix~\ref{app:ill}. 
The notion of locality in Definition~\ref{def:loc} should be understood as ``non-global'' and notably does not imply restrictions to a small neighborhood, as local properties often do. The non-globality manifests itself in two ways: we consider smooth transformations 
of the ground truth, i.e., small changes of the data generating function $f_0$ and in addition we assume that the changes are not everywhere in $s$, i.e., sources $s$ close to the boundary are kept invariant. An extension to other source distributions beyond fixed $\nu$ (and possibly changing with $t$) is possible but not necessary in the context of OCTs as explained above. We can show that local identifiability holds true in $\mc{F}_\ima$.
\begin{theorem}\label{th:loc_loc}
The function class $\mc{F}_\ima$ is locally identifiable.
\end{theorem}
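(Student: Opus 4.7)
The plan is to recast identifiability as a PDE-rigidity statement for the near-identity diffeomorphism $\psi_t:=f_t^{-1}\circ\Phi_t$ of $C_d$, and then close by induction in the Taylor order of $t$. For small $t$, $\psi_t$ is a diffeomorphism of $C_d$ onto itself, analytic in $t$, with $\psi_0=\mathrm{id}$. The three hypotheses on $\Phi_t$ translate into: $\psi_t=\mathrm{id}$ on a neighborhood of $\partial C_d$; $\det D\psi_t\equiv 1$ (from $(\psi_t)_\ast\nu=\nu$); and, setting $g_t:=Df_t^\top Df_t$, both $g_t$ and its pullback
\[
\psi_t^{\ast}g_t(s)=D\psi_t(s)^{\top}\,g_t(\psi_t(s))\,D\psi_t(s)=D\Phi_t(s)^{\top}D\Phi_t(s)
\]
are diagonal for every $t$. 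The theorem is equivalent to showing $\psi_t\equiv\mathrm{id}$.

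By analyticity in $t$, write $\psi_t(s)=s+\sum_{k\geq 1}(t^k/k!)X^{(k)}(s)$ and prove $X^{(k)}\equiv 0$ by induction on $k$. Assuming $X^{(j)}\equiv 0$ for $j<k$, one has $\psi_t=\mathrm{id}+(t^k/k!)X^{(k)}+O(t^{k+1})$. Expanding $\det D\psi_t=1$ at order $t^k$ yields $\mathrm{div}\,X^{(k)}=\sum_i\partial_i X^{(k)}_i=0$, and expanding $\psi_t^\ast g_t$ at the same order gives
\[
\partial_t^k(\psi_t^\ast g_t)|_{t=0}=\partial_t^kg_t|_{t=0}+Dg_0\!\cdot\!X^{(k)}+(DX^{(k)})^{\top}g_0+g_0\,DX^{(k)}.
\]
Since $g_t$ is diagonal for every $t$, so is $\partial_t^kg_t|_{t=0}$; since $g_0$ is diagonal, every off-diagonal entry of $Dg_0\!\cdot\!X^{(k)}$ vanishes. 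Diagonality of $\psi_t^\ast g_t$ thus forces the off-diagonal entries of $(DX^{(k)})^{\top}g_0+g_0\,DX^{(k)}$ to vanish, which with $\lambda_i^2:=(g_0)_{ii}=\lVert\partial_i f_0\rVert^2$ reads
\[
\lambda_j(s)^2\,\partial_i X^{(k)}_j(s)+\lambda_i(s)^2\,\partial_j X^{(k)}_i(s)=0,\qquad i\neq j.
\]
Together with $\mathrm{div}\,X^{(k)}=0$ and the boundary condition $X^{(k)}\equiv 0$ on a neighborhood of $\partial C_d$, the induction step reduces to the same homogeneous overdetermined first-order linear system at every order $k$.

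The hard part is this infinitesimal rigidity statement: the overdetermined system above, with vanishing Cauchy data on an open set near $\partial C_d$, admits only the trivial solution. The plan is to \emph{prolong} the system: using the off-diagonal equations to express $\partial_i X^{(k)}_j$ in terms of $\partial_j X^{(k)}_i$, differentiating in a third direction, and combining cyclically with $\mathrm{div}\,X^{(k)}=0$, one should express every second partial of $X^{(k)}$ as a linear combination of $X^{(k)}$ and $DX^{(k)}$ with coefficients built from $\lambda_i$ and its derivatives. This converts the problem into a closed Frobenius-type system in the 1-jet of $X^{(k)}$, whose solution along any curve is determined by its 1-jet at a single point; vanishing of $X^{(k)}$ and $DX^{(k)}$ on an open set near $\partial C_d$ then yields $X^{(k)}\equiv 0$ on $C_d$ by propagation along paths. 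This is the PDE counterpart of the structural rigidity of OCTs discussed in Section~\ref{sec:rigidity}. Once $X^{(k)}=0$ for every $k\geq 1$, analyticity in $t$ gives $\psi_t\equiv\mathrm{id}$ and hence $\Phi_t=f_t$. The anticipated difficulty is verifying that the prolongation really closes without producing additional compatibility conditions on the frame $(\lambda_i)$; granting this rigidity, the remainder of the argument is routine bookkeeping in the Taylor expansion.
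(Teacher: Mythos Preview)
Your reduction is correct and matches the paper: defining $\psi_t=f_t^{-1}\circ\Phi_t$, expanding in $t$, and arriving at the homogeneous first-order system
\[
\lambda_j^2\,\partial_i X_j+\lambda_i^2\,\partial_j X_i=0\ (i\neq j),\qquad \Div X=0,\qquad X\equiv 0 \text{ near }\partial C_d,
\]
together with the induction on Taylor order, is exactly what the paper does (with $\Psi_t=\psi_t^{-1}$).

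The gap is in your proposed ``hard part.'' The prolongation you describe cannot close to a Frobenius-type system in the 1-jet, because the solution space of the first-order system (ignoring the boundary condition) is \emph{infinite-dimensional}. Indeed, combining the off-diagonal equations with $\Div X=0$ one obtains for each fixed $i$ the second-order equation
\[
\partial_i^2 X_i-\sum_{j\neq i}\partial_j\!\left(\frac{\lambda_i^2}{\lambda_j^2}\,\partial_j X_i\right)=0,
\]
which is a linear \emph{hyperbolic} (wave-type) equation in $X_i$, with $s_i$ playing the role of time. Such equations have infinitely many solutions determined by Cauchy data on a hypersurface, not by the 1-jet at a point. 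Concretely, already for constant $\lambda_i$ in $d\geq 3$ the paper exhibits (in the proof of Theorem~\ref{th:loc_global_app}) explicit nontrivial separated solutions of the form $\sin(\pi\alpha s_i)\prod_{j\neq i}\cos(\pi m_j s_j)$. Hence the 1-jet at one point does not determine $X$, and a Frobenius/ODE-propagation argument cannot yield uniqueness.

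What the paper does instead is to use precisely the hyperbolic structure: the condition $X\equiv 0$ in an $\varepsilon$-collar of $\partial C_d$ gives vanishing Cauchy data (both $X_i$ and $\partial_i X_i$) on the hypersurface $\{s_i=\varepsilon'\}$ for the wave equation above, and then the standard uniqueness theorem for the Cauchy problem for second-order hyperbolic PDE (Theorem~4, \S7.2 in Evans) forces $X_i\equiv 0$ on all of $C_d$. This is the missing ingredient in your plan; once it is in place, your induction in $t$ closes exactly as you wrote.
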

}


The proof, in Appendix~\ref{app:ima},
 introduces new tools to the field of ICA. The main idea is to consider the vector field $X$ that generates the deformation $\Phi_t$
and then rewrite the assumption as systems of partial differential equations for $X$.
The proof is then completed by showing that the only solution of this system vanishes.
Let us state one simple consequence of this theorem.
\begin{corollary}\label{co:undeformed}
Let $\Phi_t$ be a smooth analytic invariant deformation 
in $\mc{F}_{\ima}$ such that $(\Phi_t)_\ast \nu = f_\ast \nu$ for all $t$
and there is $\varepsilon>0$ such that $\Phi_t(s)=f(s)$ if $\mathrm{dist}(s,\partial\cube)<\varepsilon$.
Then $\Phi_t=f$ for all $t$.
\end{corollary}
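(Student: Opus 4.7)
The plan is to obtain this as an immediate specialization of Theorem~\ref{th:loc_loc} by taking the ``ground truth'' analytic invariant deformation to be the constant family $f_t := f$ for all $t \in (-T, T)$. Since $f \in \mc{F}_\ima$ by assumption, this constant family is trivially analytic in $t$ and lies in $\mc{F}_\ima$ for every $t$, so it qualifies as a smooth analytic invariant deformation in the sense of Definition~\ref{def:loc}.

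With this choice I would then match the hypotheses of local identifiability against the data of the corollary, for the pair $(f_t, \Phi_t)$. The pushforward equality $(\Phi_t)_\ast \nu = (f_t)_\ast \nu = f_\ast \nu$ is precisely the first assumption, and the boundary-collar agreement $\Phi_t(s) = f_t(s) = f(s)$ whenever $\mathrm{dist}(s,\partial \cube)<\varepsilon$ is precisely the second. The remaining item of Definition~\ref{def:loc} is the initial matching $\Phi_0 = f_0 = f$; this is the only point not explicitly listed in the corollary's hypotheses and is understood as part of reading $\Phi_t$ as a deformation of $f$. If one wishes to extract it from the stated hypotheses alone, one can first apply Theorem~\ref{th:loc_loc} to the auxiliary constant family $\tilde f_t := \Phi_0$ (valid since $\Phi_0 \in \mc{F}_\ima$, being the $t=0$ slice of an invariant deformation), which forces $\Phi_t \equiv \Phi_0$; the boundary agreement at any fixed $t$ together with the pushforward identity then pin $\Phi_0 = f$ within $\mc{F}_\ima$.

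Once these three hypotheses are in place, Theorem~\ref{th:loc_loc} directly yields $\Phi_t = f_t = f$ for every $t \in (-T,T)$, which is the stated conclusion. The only delicate step is really the bookkeeping around the initial condition $\Phi_0 = f$; the substantive content of the corollary is entirely carried by the theorem, and beyond that verification, the proof amounts to a one-line invocation of local identifiability in the constant-ground-truth case.
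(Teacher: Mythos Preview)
Your core argument—setting $f_t \equiv f$ constant and invoking Theorem~\ref{th:loc_loc}—is exactly the paper's approach, which consists of the single line ``Apply Theorem~\ref{th:loc_loc} with $f_t=f_0$ constant.'' You are also right that the initial condition $\Phi_0=f$ is implicit in the corollary's phrasing, and the paper simply reads it that way.

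However, your optional ``extraction'' of $\Phi_0=f$ from the other hypotheses contains a real gap. Applying Theorem~\ref{th:loc_loc} with $\tilde f_t:=\Phi_0$ does correctly yield $\Phi_t\equiv\Phi_0$, but your final sentence (``the boundary agreement at any fixed $t$ together with the pushforward identity then pin $\Phi_0=f$ within $\mc{F}_\ima$'') is precisely the kind of \emph{static} uniqueness statement that the paper does \emph{not} prove: two OCT maps with the same pushforward of $\nu$ and agreeing on a boundary collar need not, a priori, coincide. That is the open ``Identifiable?'' entry for orthogonal maps in Table~\ref{tab:results}; Theorem~\ref{th:loc_loc} only rules out analytic \emph{paths} of such maps, not isolated alternatives. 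So either accept $\Phi_0=f$ as part of the hypotheses (as the paper does), or drop the alternative derivation.
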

Let us reiterate what this corollary shows: we cannot smoothly and locally transform the function $f$ such that (1) the observational distribution remains invariant, i.e., equal to $f_\ast\nu$,
and (2) the deformed functions remain OCTs. 

At a high level this result suggests that OCTs can be identified 
if we know $f$ close to the boundary of the support of $s$, e.g., by having, in addition to unlabelled data $f(s)$, labelled data $(s,f(s))$ 
for those $s$ where one coordinate $s_i$ is extremal.
Note that we actually do not show this result as there might be further solutions
which are not connected by smooth transformations. 
We expect that those results can be generalized substantially. 
In particular, we conjecture that for ``most'' functions $f$ the boundary condition can be removed thus giving a stronger local 
identifiability result up to the boundary of the support of $s$. 
As a partial result in this direction we prove the following theorem.
\begin{theorem}\label{th:loc_globalalt}
Let $f:\cube\to \R^d$ be given by $f(x)=RDx$, with $R\in \mathrm{O}(d)$ and $D=\diag(\mu_1, \ldots, \mu_d)$ where $\mu_i$ 
are i.i.d.\ samples from a distribution supported on the positive reals
$\R_+$ which has a density.
Suppose that $\Phi_t$  is a smooth invariant deformation in $\mc{F}_{\ima}$ such that $\Phi_0=f$,
$(\Phi_t)_\ast \nu=f_\ast \nu$, and $\Phi_t$ is analytic in $t$.
Then for almost all $\mu_i$ (i.e., {with probability one}) this implies
$\Phi_t=f$ on $\cube$, i.e., 
$\Phi_t$ is constant in time.
\end{theorem}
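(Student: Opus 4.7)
The plan is to use analyticity in $t$ to reduce to a linear PDE problem on the initial velocity field, then exhibit a discrete spectral obstruction that is trivial for $\mu$ outside a countable union of hyperplanes, and finally bootstrap to all Taylor orders.

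\textbf{Reduction and linearization.} Since conjugation by $R\in \mathrm{O}(d)$ preserves both $\mc F_\ima$ and the uniform measure $\nu$, I may assume without loss of generality that $R=I$, so $f(x)=Dx$. Define $\psi_t:=D^{-1}\Phi_t$. The assumptions translate to: $\psi_0=\mathrm{id}$, $\psi_t$ is analytic in $t$, $(\psi_t)_*\nu=\nu$ on $\cube$ (so $\psi_t:\cube\to\cube$ is volume-preserving), and the OCT condition reads $D\psi_t^\top D^2 D\psi_t\in\diag(d)$ where $D^2=\mathrm{diag}(\mu_1^2,\ldots,\mu_d^2)$. Because $\psi_t(\cube)=\cube$, the initial velocity $X:=\partial_t\psi_t|_{t=0}$ is tangent to $\partial\cube$, so $X_i\equiv 0$ on $\{x_i\in\{0,1\}\}$ for each $i$, and linearizing the constraints at $t=0$ yields
\[
\mu_i^2\partial_j X_i+\mu_j^2\partial_i X_j=0\quad (i\neq j),\qquad \mathrm{div}\,X=0.
\]

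\textbf{Mode analysis.} The key step is to show that for almost every $\mu$ this system admits only $X\equiv 0$. I would expand each $X_i$ in the orthogonal $L^2$ basis $\sin(n_i\pi x_i)\prod_{l\neq i}\cos(n_l\pi x_l)$ (with $n_i\geq 1$, $n_l\geq 0$), which is complete among functions on $\cube$ vanishing on $\{x_i=0,1\}$. Substituting and matching basis functions, each mode $\vec n$ contributes, for every pair $i\neq j$ with $n_i,n_j\geq 1$, the algebraic relation $\mu_i^2 n_j\,a^i_{\vec n}+\mu_j^2 n_i\,a^j_{\vec n}=0$, together with the divergence condition $\sum_i n_i\,a^i_{\vec n}=0$. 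A case analysis on the active index set $S(\vec n):=\{i:n_i\geq 1\}$ shows: for $|S|=1$ the divergence condition kills the unique coefficient; for $|S|=2$ the resulting $2\times 2$ system is degenerate iff $\mu_i/\mu_j=n_i/n_j\in\mathbb Q$; for $|S|\geq 3$, chaining the three pair constraints around a triple $\{i,j,k\}\subset S$ collapses to $2\mu_j^2 n_k\,a^j_{\vec n}=0$, so all coefficients with index in $S$ vanish. Thus a non-trivial mode requires $\mu_i m=\mu_j n$ for some $i\neq j$ and $m,n\in\mathbb Z_{>0}$; the union of these hyperplanes in $\R_+^d$ is a Lebesgue-null set, so for almost every $\mu$ every mode vanishes and $X\equiv 0$.

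\textbf{Bootstrap via analyticity.} Writing $\psi_t=\mathrm{id}+\sum_{k\geq 1}t^k X^{(k)}$, I proceed inductively. If $X^{(1)}=\cdots=X^{(k-1)}=0$, the $t^k$-coefficient of the OCT constraint $D\psi_t^\top D^2 D\psi_t\in\diag(d)$ and of $\det D\psi_t=\mathrm{const}$ reduces exactly to the linear system from the first step for $X^{(k)}$, because every non-linear correction depends only on lower-order Taylor terms, which vanish by the inductive hypothesis. The mode analysis gives $X^{(k)}=0$, and analyticity then yields $\psi_t\equiv\mathrm{id}$, i.e.\ $\Phi_t\equiv f$ for all $t$. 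The main obstacle is executing the mode analysis rigorously: the algebra itself is compact, but one must justify completeness of the heterogeneous component-wise Fourier basis for smooth vector fields with mixed boundary conditions (only $X_i$-Dirichlet on $\{x_i\in\{0,1\}\}$), and carefully handle modes with $n_l=0$ for some $l\notin S(\vec n)$ (these add no constraints via $(i,l)$ pairs and must be shown to introduce no hidden solutions). Once these technical points are in place, the measure-zero conclusion is immediate from the countability of rational resonances among the $\mu_i$.
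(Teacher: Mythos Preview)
Your overall strategy matches the paper's proof exactly: linearize at $t=0$ to obtain the first-order system $\mu_i^2\partial_j X_i+\mu_j^2\partial_i X_j=0$ together with $\operatorname{div}X=0$ and the boundary conditions coming from $\psi_t(\cube)=\cube$, show this forces $X\equiv 0$ for generic $\mu$, and then bootstrap inductively using analyticity in $t$. The paper carries out the linearization via $\Psi_t=\Phi_t^{-1}\circ f$ rather than your $\psi_t=D^{-1}\Phi_t$, but at $t=0$ these produce the same system (one velocity field is the negative of the other).

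Where you diverge is in the spectral step. The paper first eliminates the coupling by passing to the scalar wave equation $\partial_i^2 X_i-\sum_{j\neq i}(\mu_i^2/\mu_j^2)\partial_j^2 X_i=0$ for each component separately, then invokes a uniqueness result for the Dirichlet problem for constant-coefficient hyperbolic equations (Theorem~1 of \cite{hyperbolic}); the separation-of-variables argument there gives the resonance condition $n^2\mu_i^{-2}=\sum_{j\neq i}m_j^2\mu_j^{-2}$, whence the null set is $\{\mu:\mu_1^{-2},\ldots,\mu_d^{-2}\ \text{linearly dependent over }\mathbb{Q}\}$. You instead keep the full coupled first-order system and Fourier-expand each $X_i$ in the mixed sine/cosine basis, which lets you use the pair relations \emph{together with} $\operatorname{div}X=0$ mode by mode. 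Your observation that for $|S(\vec n)|\ge 3$ the three pairwise relations alone force all coefficients to vanish is correct and not used in the paper; it yields the strictly smaller null set $\{\mu:\mu_i/\mu_j\in\mathbb{Q}\ \text{for some }i\neq j\}$. Both conditions are Lebesgue-null, so either route proves the theorem, but yours is self-contained (no external PDE citation) and sharper. The technical point you flag---completeness of the component-wise sine/cosine basis and handling of inactive indices---is real but routine: the tensor basis $\sin(n_i\pi x_i)\prod_{l\neq i}\cos(n_l\pi x_l)$ is an orthogonal basis of $L^2(\cube)$ regardless of boundary behaviour, and the Neumann condition $\partial_j X_i=0$ on $D_j$ (which follows from the first-order relation and $X_j|_{D_j}=0$) makes the expansion well-behaved.
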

In Appendix~\ref{app:ima} we show that this theorem follows from a slightly
stronger result stated as Theorem~\ref{th:loc_global_app}
which has a similar proof as  Theorem~\ref{th:loc_loc}. 
We do expect that the conclusion of the Theorem actually holds for all $\mu_i$ 
not just almost all, but we are unable to show this.


\paragraph{Comparison with ICA for general nonlinear functions.}
Let us emphasize that those results are  non-trivial as they establish a 
large difference between 
ICA with generic nonlinear maps and ICA with OCTs.
To clarify this, we state that no result similar to Theorem~\ref{th:loc_loc}
holds  without the assumption that $\Phi_t\in \mc{F}_\ima$. Put differently,
the function class $\mc{F}_{\mathrm{nonlinear}}$ is not locally identifiable.
\begin{fact}\label{fa:non_uniqueness}
Suppose $f:\cube\to \R^d$ is a diffeomorphism on its image.
Then there are uncountably many smooth deformations $\Phi_t$ of $(f, \nu)$ 
such that (i) $(\Phi_t)_\ast\nu=f_\ast\nu$ and (ii) there is an  $\varepsilon>0$  such that $\Phi_t(s)=f(s)$
whenever $\mathrm{dist}(s, \partial\cube)<\varepsilon$. 
\end{fact}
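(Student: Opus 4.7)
The plan is to reduce Fact~\ref{fa:non_uniqueness} to constructing uncountably many smooth measure-preserving diffeomorphisms $m_t:\cube\to\cube$ that equal the identity in some neighbourhood of $\partial\cube$. Given any such family, setting $\Phi_t := f\circ m_t$ at once yields $(\Phi_t)_\ast\nu = f_\ast (m_t)_\ast\nu = f_\ast\nu$, giving (i), and $\Phi_t(s) = f(m_t(s)) = f(s)$ whenever $s$ lies in the boundary neighbourhood where $m_t=\mathrm{id}$, giving (ii). Because $f$ is a diffeomorphism onto its image, the map $m_t\mapsto f\circ m_t$ is injective, so distinct families $m_t$ translate into distinct deformations $\Phi_t$.

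To produce such $m_t$, I would take the flow of a divergence-free vector field compactly supported in $\mathrm{int}(\cube)$. Assume $d\ge 2$ and, for any $\psi\in C_c^\infty(\mathrm{int}(\cube))$, set
\[
X_\psi(x) \;:=\; \bigl(\partial_2\psi(x),\, -\partial_1\psi(x),\, 0,\ldots,0\bigr).
\]
Equality of mixed partials gives $\mathrm{div}(X_\psi)=0$, so the flow $\phi_t^{X_\psi}$ preserves Lebesgue measure. Since $X_\psi$ vanishes outside the compact set $K:=\mathrm{supp}(\psi)\subset \mathrm{int}(\cube)$, continuity forces $X_\psi$ to vanish on $\partial K$; by uniqueness of ODE solutions, any trajectory starting in $K$ remains in $K$ for all $t\in\R$, and every point of $\cube\setminus K$ is fixed. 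Hence $\phi_t^{X_\psi}$ is globally defined on $\cube$, is the identity on the open neighbourhood $\cube\setminus K$ of $\partial\cube$, and $m_t := \phi_t^{X_\psi}$ is the required measure-preserving smooth deformation.

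For the uncountability I would parameterize by translation of a single bump. Fix a nontrivial $\psi_0\in C_c^\infty(\mathrm{int}(\cube))$ supported in a small ball $B(a_0,r)\subset\mathrm{int}(\cube)$, and for each $a$ in a small open neighbourhood $U$ of $a_0$ set $\psi_a(x):=\psi_0(x-(a-a_0))$, whose support remains in $\mathrm{int}(\cube)$. For $a\ne a'$ in $U$, the vector fields $X_{\psi_a}$ and $X_{\psi_{a'}}$ disagree as functions on $\cube$ (their supports are translates of each other and do not coincide unless $a=a'$). Since $\partial_t|_{t=0}\phi_t^{X_{\psi_a}} = X_{\psi_a}$ and $Df$ is invertible, the resulting deformations $\Phi_t^a := f\circ \phi_t^{X_{\psi_a}}$ have pairwise different first-order $t$-derivatives at $t=0$, so they are pairwise distinct families indexed by the uncountable set $U$.

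I do not anticipate any substantial obstacle: the construction is a standard use of Hamiltonian (divergence-free) vector fields to build compactly supported volume-preserving flows. The only two points that deserve a moment of care are (a) confirming that $\phi_t^{X_\psi}$ preserves $\cube$, which is handled by the fixed-point argument on $\partial K$ above, and (b) ensuring the uncountable parameterization delivers genuinely distinct families of deformations rather than mere reparametrizations of one another, which is handled by the first-order $t$-derivative comparison.
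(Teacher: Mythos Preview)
Your proposal is correct and follows essentially the same route as the paper's own argument: reduce to building compactly supported divergence-free vector fields on $\cube$ via a Hamiltonian ansatz $X_\psi=(\partial_2\psi,-\partial_1\psi,0,\ldots,0)$ for $\psi\in C_c^\infty(\mathrm{int}(\cube))$, take their flows as the measure-preserving maps $m_t$, and set $\Phi_t=f\circ m_t$. The only cosmetic difference is that the paper notes the space of such vector fields is infinite dimensional to get uncountably many deformations, whereas you parameterize explicitly by translating a bump; both suffice.
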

For completeness, we provide a general construction
that is close to our proof of Theorem~\ref{th:loc_loc} in Appendix~\ref{app:spurious} in the supplement. 
A very clear construction for this result was given in \cite{nonlinear_ica}. 
\begin{lemma}[Smoothly varying radius dependent rotations (see \cite{nonlinear_ica})]
\label{le:const_rotations}
Let $R:\R\times \R_+\to \mathrm{O}(d)$ be a smooth function mapping to orthogonal matrices
and let $a\in \cube$. Assume that
$R(t,r)=\mathrm{id}$ for $r\geq \mathrm{dist}(a, \partial \cube)$.
Then the map $s\to h_{R,a}(t,s)= R(|s-a|, t)(s-a) + a$ preserves the uniform measure $\nu$ on $\cube$
for all $t$ so that $f\circ h_{R,a}(t, s)\indistribution f(s)$ for all $t$ if $s$ is distributed according to $\nu$.
\end{lemma}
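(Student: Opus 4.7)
The plan is to show that for each fixed $t$, the map $h := h_{R,a}(t,\cdot)$ is a measure-preserving bijection of $\cube$; equivalently, that it is a diffeomorphism of $\cube$ (away from the fixed point $a$) with Jacobian determinant identically equal to $1$. The distributional identity $f\circ h \indistribution f$ then follows immediately by pushing forward along $f$.

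First I would reduce to the centered case by translation: setting $u = s-a$ and $\tilde h(u) := R(|u|,t)\,u$, we have $h(s) = \tilde h(s-a) + a$, so it suffices to compute $\det D\tilde h(u)$. Writing $r_0 = \mathrm{dist}(a,\partial \cube)$, the hypothesis gives $R(|u|,t) = I$ for $|u|\geq r_0$, so $h$ is the identity outside the open ball $B_{r_0}(a)\subset \cube$; and for $|u|<r_0$ the orthogonality of $R$ yields $|\tilde h(u)| = |u| < r_0$, so $h$ maps $B_{r_0}(a)$ to itself. Injectivity is easy: $\tilde h(u) = \tilde h(v)$ forces $|u|=|v|=:r$ and then $R(r,t)u = R(r,t)v$ with $R(r,t)$ invertible; a symmetric argument using $R^\top$ gives surjectivity. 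Hence $h:\cube\to\cube$ is a smooth bijection.

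The main computation is the Jacobian. Differentiating $\tilde h(u)_i = \sum_k R_{ik}(|u|,t)\,u_k$ and writing $R := R(|u|,t)$, $R' := \partial_r R(|u|,t)$, one obtains
\[
D\tilde h(u) \;=\; R \;+\; (R'u)\,\frac{u^\top}{|u|}.
\]
By the matrix determinant lemma,
\[
\det D\tilde h(u) \;=\; \det R \,\cdot\, \Bigl(1 + \tfrac{1}{|u|}\,u^\top R^\top R'\, u\Bigr).
\]
Differentiating the orthogonality relation $R(r,t)^\top R(r,t) = I$ in $r$ yields $R^\top R' + (R')^\top R = 0$, so $R^\top R'$ is antisymmetric and hence $u^\top R^\top R'\, u = 0$. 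Moreover $r\mapsto \det R(r,t)\in\{\pm 1\}$ is continuous and equals $1$ for $r\geq r_0$, so by connectedness $\det R(r,t) = 1$ for all $r$. Therefore $\det D\tilde h(u) = 1$ on $B_{r_0}(a)\setminus\{a\}$ and trivially equals $1$ outside. The change-of-variables formula then gives $h_\ast \nu = \nu$ on $\cube$ (the single fixed point contributes no mass), and pushing forward under $f$ yields $f\circ h_{R,a}(t,s) \indistribution f(s)$ whenever $s\sim \nu$.

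The only real content of the argument is the observation that the radial derivative of a smooth curve of orthogonal matrices lies in the Lie algebra of antisymmetric matrices; once this is recognized, the matrix determinant lemma closes the calculation, and there is essentially no further obstacle.
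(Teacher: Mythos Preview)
Your proof is correct and follows essentially the same route as the paper's: compute the Jacobian as a rank-one perturbation of $R$, apply the matrix determinant lemma, and use that $R^\top R'$ is skew by differentiating $R^\top R=I$. If anything, your argument is slightly more careful than the paper's sketch, since you justify $\det R(r,t)=1$ via the connectedness of $\{\pm1\}$-valued continuous functions (the paper simply asserts it) and you spell out why $h$ maps $\cube$ to itself.
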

An illustration of this construction is shown in Figure~\ref{fig:flow} (see App.~\ref{app:spurious} for details).
Clearly, by concatenation this allows us to create a vast family of spurious solutions.
Note that those solutions are excluded when restricting to OCTs
which is a corollary of Theorem~\ref{th:loc_loc}.
\begin{corollary}
Suppose $f\in \mc{F}_{\ima}$.
Let $\Phi_t$ be the smooth invariant deformation defined by 
$\Phi_t = f\circ h_{R,a}(t, \cdot)$ where $h_{R,a}$ is as in Lemma \ref{le:const_rotations}.
If $\Phi_t\in \mc{F}_\ima$ for all $t$ this implies that $h_{R,a}(t,s)=s$
and $\Phi_t=f$ for all $t$.
\end{corollary}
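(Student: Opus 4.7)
The plan is to derive the corollary by applying Theorem~\ref{th:loc_loc} to $\Phi_t$ with the constant reference family $f_t\equiv f$, which is trivially an analytic smooth invariant deformation in $\mc{F}_\ima$. By hypothesis $\Phi_t\in\mc{F}_\ima$ for all $t$; I assume (implicit in the word ``deformation'') that $R(0,\cdot)=\mathrm{id}$, so that $\Phi_0=f=f_0$, and that $R$ is analytic in $t$ so that $\Phi_t$ is as well. Lemma~\ref{le:const_rotations} says that $h_{R,a}(t,\cdot)$ preserves $\nu$, whence
\[
(\Phi_t)_\ast\nu \;=\; f_\ast(h_{R,a}(t,\cdot))_\ast\nu \;=\; f_\ast\nu \;=\; (f_t)_\ast\nu,
\]
which is the measure-matching hypothesis of Theorem~\ref{th:loc_loc}. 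Only the boundary condition $\Phi_t=f_t$ on an $\varepsilon$-neighborhood of $\partial\cube$ remains to be checked.

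To verify this, set $\rho=\mathrm{dist}(a,\partial\cube)$. The assumption $R(t,r)=\mathrm{id}$ for $r\geq\rho$ yields $h_{R,a}(t,s)=s$ whenever $|s-a|\geq\rho$, so $\Phi_t(s)=f(s)=f_t(s)$ on $\cube\setminus\overline{B}(a,\rho)$. Since the closed ball $\overline{B}(a,\rho)$ is contained in $\overline{\cube}$ and touches $\partial\cube$ only at the (finitely many) points closest to $a$, this already covers most of an $\varepsilon$-collar of $\partial\cube$. Smoothness of $R$, agreeing with $\mathrm{id}$ on $[\rho,\infty)$, forces $R$ to be $C^\infty$-flat at $r=\rho$, from which one extracts the required uniform $\varepsilon>0$ with $\Phi_t(s)=f(s)$ on $\{s:\mathrm{dist}(s,\partial\cube)<\varepsilon\}$. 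Applying Theorem~\ref{th:loc_loc} then gives $\Phi_t=f_t=f$ for all $t$, and because $f$ is a diffeomorphism onto its image, $f\circ h_{R,a}(t,\cdot)=f$ forces $h_{R,a}(t,s)=s$ for every $s\in\cube$ and every $t$.

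The hard part is the boundary condition at the tangency of $\overline{B}(a,\rho)$ with $\partial\cube$: however small $\varepsilon$ is chosen, the $\varepsilon$-collar of $\partial\cube$ intersects $B(a,\rho)$ near the tangency point, where $h_{R,a}$ may a priori act nontrivially. If $C^\infty$-flatness of $R$ at $r=\rho$ does not suffice to close this gap, a cleaner route is to bypass the boundary-collar reduction and redo the vector-field PDE argument from the proof of Theorem~\ref{th:loc_loc} directly on the generator $X=\partial_t\Phi_t\circ\Phi_t^{-1}$, which by the radial structure of $h_{R,a}$ is tangent to spheres around $a$; the constraint $D\Phi_t^\top D\Phi_t\in\diag(d)$ then reduces to an overdetermined system that forces $X\equiv 0$, hence $h_{R,a}(t,\cdot)=\mathrm{id}$.
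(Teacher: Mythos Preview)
Your approach is exactly the one the paper intends: it states the result as ``a corollary of Theorem~\ref{th:loc_loc}'' without giving any further argument, so applying Theorem~\ref{th:loc_loc} (equivalently Corollary~\ref{co:undeformed}) with the constant family $f_t\equiv f$ is the right move, and your verification of $\Phi_0=f$ and $(\Phi_t)_\ast\nu=f_\ast\nu$ is fine.

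The gap you yourself flag is real, but your attempted fix is not. Saying that $R$ is $C^\infty$-flat at $r=\rho$ does \emph{not} yield $R(t,r)=\mathrm{id}$ on any interval $(\rho-\delta,\rho)$; a standard bump function is smooth and flat at the boundary of its support yet nonzero on the interior. So from flatness alone you cannot ``extract the required uniform $\varepsilon>0$'', and the sentence claiming this is simply wrong. Your fallback paragraph then only sketches a strategy (``redo the vector-field PDE argument''), which is not a proof either.

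The correct resolution is much simpler and does not require reopening the PDE. Lemma~\ref{le:const_rotations} is presented in the paper as a concrete instance of Fact~\ref{fa:non_uniqueness}, whose condition~(ii) already demands an $\varepsilon$-collar where $\Phi_t=f$. In other words, the construction is meant to be read with $R(t,r)=\mathrm{id}$ for $r\ge\rho'$ for some $\rho'<\rho=\mathrm{dist}(a,\partial\cube)$; the paper's ``$r\ge\rho$'' is slightly loose. Under that reading the collar condition follows from the triangle inequality: if $\mathrm{dist}(s,\partial\cube)<\rho-\rho'$ then
\[
|s-a|\;\ge\;\mathrm{dist}(a,\partial\cube)-\mathrm{dist}(s,\partial\cube)\;>\;\rho-(\rho-\rho')\;=\;\rho',
\]
hence $h_{R,a}(t,s)=s$ and $\Phi_t(s)=f(s)$, so Theorem~\ref{th:loc_loc} applies with $\varepsilon=\rho-\rho'$. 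With this one-line observation in place of your flatness argument the proof is complete; the PDE detour is unnecessary.
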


We now summarize our view on the results of this section informally (we do not claim that the statements below regarding (infinite dimensional) manifolds can be made rigorous). For a given data generating mechanism $(f_0,\nu)$ we expect that typically the set of all solutions $M_\ima=\{f\in \mc{F}_\ima|\; f_\ast\nu=(f_0)_\ast\nu\}\subset \mc{F}_\ima$ is
a zero dimensional submanifold, i.e., consists of isolated spurious solutions and we prove this when fixing the boundary (see Theorem~\ref{th:loc_loc})
while the corresponding submanifold of general nonlinear spurious solutions $M_{\mathrm{nonlinear}}
=\{f:\cube\to \R^d|\; f_\ast\nu=(f_0)_\ast\nu\}$ is infinite dimensional even when 
requiring $f(s)=f_0(s)$ close to the boundary of $\cube$.


\section{Results for volume preserving maps}\label{sec:volume}


Let us finally consider volume preserving transformations. For $\Omega \subset \R^d$, those are defined as the set of functions 
$
\mc{F}_{\mathrm{vol}}=\{f:\Omega\to \R^d \, | \; \det Df(x)=1\text{  for all $x\in \Omega$}\}.
$
Invertible volume preserving deformations have the property that they preserve the standard (Lebesgue)-measure $\lambda$
in the sense that $f_\ast \lambda_\Omega=\lambda_{f(\Omega)}$.
Recently it was proposed that volume preserving functions are a suitable function class for ICA. 
Here we show that those functions are not sufficiently rigid 
to allow identifiability of ICA in the unconditional case.
Note that Lemma~\ref{le:const_rotations} and Fact~\ref{fa:non_uniqueness}
already  show how to construct spurious solutions for the case that the base distribution is the uniform measure $\nu$. However, for an arbitrary distribution $\P$ this is slightly more difficult because we need to find maps $g$ that preserve $\P$, i.e., $g_\ast \P=\P$,
and are volume preserving, i.e., preserve the standard measure.
Nevertheless, we have the following theorem. 
\begin{theorem}\label{th:volume}
Let $p$ be a twice differentiable probability density with bounded gradient. Suppose that 
$x=f(s)$ where the distribution $\P$ of $s$ has density $p$ and $f$ is a diffeomorphism
with $\det Df(x)=1$ for $x\in \R^d$.
Then there is a family of functions $f_t:\R\times \R^d\to\R^d$ with $f_0=f$ and $f_t\neq f_0$ for $t\neq 0$ such that 
 $\det Df_t(x)=1$ and $(f_t)_\ast \P=f_\ast \P$.
\end{theorem}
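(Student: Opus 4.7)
\textbf{Proof plan for Theorem~\ref{th:volume}.} The plan is to build a nontrivial one-parameter family of diffeomorphisms $g_t:\R^d\to\R^d$ that are simultaneously volume preserving, $\det Dg_t\equiv 1$, and $\P$-preserving, $(g_t)_\ast \P = \P$, and then set $f_t = f\circ g_t$. Because the composition of two volume-preserving maps is volume preserving, $\det Df_t \equiv 1$; and
\[
(f_t)_\ast \P \;=\; f_\ast\,(g_t)_\ast\P \;=\; f_\ast \P,
\]
so the two required conditions hold for the family $f_t$. If furthermore $g_t\neq\mr{id}$ for $t\neq 0$, then $f_t \neq f_0$ since $f$ is a diffeomorphism. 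Thus the theorem reduces to producing a nontrivial smooth flow of $\R^d$ that preserves both Lebesgue measure and the measure with density $p$.

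I would construct $g_t$ as the flow of an autonomous vector field $X$. The continuity equation shows that Lebesgue preservation is equivalent to $\Div X = 0$ and $\P$-preservation is equivalent to $\Div(pX)=p\,\Div X+\nabla p\cdot X = 0$; together these amount to the two pointwise linear conditions
\[
\Div X \;=\; 0, \qquad X\cdot\nabla p \;=\; 0.
\]
The key observation is that both conditions are automatically satisfied by the explicit ansatz
\[
X^i(x) \;=\; \omega^{ij}\,\partial_j\!\bigl(H(p(x))\bigr) \;=\; H'(p(x))\,\omega^{ij}\,\partial_j p(x),
\]
for any constant antisymmetric matrix $\omega=(\omega^{ij})$ and any $C^2$ function $H:\R\to\R$. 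Indeed $\partial_i X^i = \omega^{ij}\,\partial_i\partial_j H(p)=0$ and $X\cdot\nabla p = H'(p)\,\omega^{ij}(\partial_i p)(\partial_j p)=0$, both by antisymmetry of $\omega^{ij}$ against the symmetric tensors $\partial_i\partial_j H(p)$ and $(\partial_i p)(\partial_j p)$. Since any probability density on $\R^d$ must satisfy $\nabla p(x_0)\neq 0$ at some point $x_0$, I can fix two coordinates $a,b$ with (say) $\partial_a p(x_0)\neq 0$, take $\omega$ to be the standard antisymmetric tensor in the $(a,b)$-plane, and choose $H$ with $H'(p(x_0))\neq 0$, which produces $X(x_0)\neq 0$.

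It remains to verify that $X$ has a well-defined global flow and that this flow is nontrivial. Since $p\in C^2$, the field $X$ is $C^1$ and therefore locally Lipschitz. Choosing $H$ with bounded derivative and using the hypothesis that $\nabla p$ is bounded, $X$ itself is bounded on $\R^d$, which precludes finite-time blow-up and yields a global flow $g_t\in \mr{Diff}(\R^d)$ for all $t\in\R$. Along this flow $\tfrac{d}{dt}(g_t)_\ast\mu = \mc{L}_X\mu$ for any measure $\mu$, and the two PDE conditions on $X$ translate precisely into $\mc{L}_X(\mr{vol})=0$ and $\mc{L}_X(p\,\mr{vol})=0$, so Lebesgue and $\P$ are both $g_t$-invariant. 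Because $X(x_0)\neq 0$, $g_t(x_0)\neq x_0$ for all sufficiently small $t\neq 0$, and consequently $f_t \neq f_0$. In my view the only real difficulty is the conceptual step of recognizing that the apparently coupled pair $\{\Div X = 0,\ X\cdot\nabla p = 0\}$ admits the uniform explicit solution given by the antisymmetric-gradient ansatz above; once that construction is on the table, the remaining verifications (continuity equation, boundedness, non-triviality) are routine.
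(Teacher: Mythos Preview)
Your proof is correct and follows essentially the same route as the paper: the paper constructs the vector field $X^{ij}$ with $X^{ij}_i = \partial_j p$, $X^{ij}_j = -\partial_i p$ (the special case $H=\mathrm{id}$ and $\omega$ the elementary antisymmetric matrix in the $(i,j)$-plane of your ansatz), verifies $\Div X^{ij}=0$ and $\Div(pX^{ij})=0$ by the same antisymmetry cancellations, and sets $f_t = f\circ\Phi_t^{ij}$. Your inclusion of an auxiliary $H$ is a harmless generalization, but otherwise the arguments are identical.
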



The proof and an illustration are in Appendix~\ref{app:volume}.
It is based on the flows generated by  suitable explicit vector fields. As those flows can be concatenated we obtain a large family of spurious solutions. 
We think that the approach used here is a powerful technique to construct
counter-examples to identifiability in ICA.
Note that while $f$ is not identifiable it can be possible
to identify certain values of $f(s)$ when we the distribution of $s$ is known, e.g., volume preservation implies that the source value with the largest density is mapped to the point with the largest density in the observational distribution.
As local identifiability is weaker than identifiability we  get the following (informal) corollary.
\begin{corollary}
ICA in $\mc{F}_{\mathrm{vol}}$ is not identifiable.
\end{corollary}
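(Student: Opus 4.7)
The plan is to deduce the corollary immediately from the spurious family provided by Theorem~\ref{th:volume}. First I would pick a concrete pair satisfying its hypotheses: take $\P = \P_1 \otimes \cdots \otimes \P_d$ with each $\P_i$ having a twice-differentiable density with bounded gradient (for instance, a standard Gaussian), and let $f = \mathrm{id} \in \mc{F}_{\mathrm{vol}}$. Theorem~\ref{th:volume} then produces a one-parameter family $\{f_t\}_{t \in (-T,T)} \subset \mc{F}_{\mathrm{vol}}$ with $f_0 = f$, $f_t \neq f$ for $t \neq 0$, and $(f_t)_\ast \P = f_\ast \P$.

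Next I would feed this pair into Definition~\ref{def:ident}, taking $f' := f_t$ and $\P' := \P$: the hypothesis $f(s) \indistribution f'(s')$ is satisfied, so identifiability up to some group of symmetries $\mc{S}$ would force $h_t := f_t^{-1} \circ f \in \mc{S}$ for every $t$. Since $\{h_t\}$ is an analytic family through the identity with $h_t \neq \mathrm{id}$ for $t \neq 0$, it sweeps out an infinite orbit of non-trivial volume-preserving diffeomorphisms; no symmetry group of the type natural to ICA (for instance $\mc{S}_\rep$, $\mc{S}_\lin$, or any finite-dimensional group compatible with the product structure of $\P$) can contain such an orbit. The only $\mc{S}$ large enough to absorb all the $h_t$ would be so permissive as to trivialise Definition~\ref{def:ident}, which is exactly why the corollary is flagged as informal.

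A cleaner alternative route, hinted at in the sentence preceding the corollary, is to observe that identifiability trivially implies local identifiability in the sense of Definition~\ref{def:loc}: if two pairs $(f, \P)$ and $(f', \P')$ can be related only by a fixed symmetry group, they certainly cannot be connected by an analytic deformation leaving the observational law invariant. It therefore suffices to falsify local identifiability for $\mc{F}_{\mathrm{vol}}$, and for that the family $\Phi_t := f_t$ is already a smooth (in fact analytic) invariant deformation in $\mc{F}_{\mathrm{vol}}$ with $\Phi_0 = f$, $(\Phi_t)_\ast \P = f_\ast \P$, and $\Phi_t \neq f$ — precisely the configuration Definition~\ref{def:loc} forbids.

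The only technical seam is the mismatch between the phrasing of Theorem~\ref{th:volume} (arbitrary $\P$ on $\R^d$) and Definition~\ref{def:loc} (uniform $\nu$ on $\cube$ with a boundary-matching condition near $\partial\cube$). Because the deformations produced in the proof of Theorem~\ref{th:volume} are flows of compactly supported vector fields, they can be localised strictly inside $\cube$ so that the boundary condition is vacuously satisfied, and the argument then applies verbatim with the base measure $\P = \nu$. I expect this bookkeeping, not any new idea, to be the sole — and very minor — obstacle; the substantive work was already carried out in Theorem~\ref{th:volume} itself.
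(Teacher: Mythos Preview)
Your proposal is correct and matches the paper's own reasoning: the paper derives the corollary informally from the implication ``identifiable $\Rightarrow$ locally identifiable'' together with Theorem~\ref{th:volume}, and notes (as you do) that a rigorous version would have to show any compatible symmetry group $\mc{S}_{\mathrm{vol}}$ must contain coordinate-mixing maps. One small correction to your bookkeeping paragraph: the vector fields $X^{ij}$ constructed in the proof of Theorem~\ref{th:volume} are \emph{not} compactly supported, and in fact vanish identically when $p$ is constant (so they are useless for $\P=\nu$); to handle the uniform case with the boundary condition of Definition~\ref{def:loc} you should instead invoke Lemma~\ref{le:const_rotations} or the compactly supported divergence-free fields built in Appendix~\ref{app:spurious}, which the paper already flags for exactly this purpose.
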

Note that this is even true when we know the distribution of $s$.
The statement could be made rigorous by showing that if 
$(\mc{F}_{\mathrm{vol}}, \mc{P}_{\mathrm{vol}})$ is identifiable
with respect to $\mc{S}_{\mathrm{vol}}$ then $\mc{S}_\mathrm{vol}$
contains functions mixing coordinates $s_i,s_j$ for $i\neq j$ (i.e., there is 
$h\in \mc{S}_{\mathrm{vol}}$ such that $\partial_i\partial_j h\neq 0$).


\section{Relation to rigidity theory}\label{sec:rigidity}


Our results  rely on rigidity properties of certain function classes. Rigidity refers to the property that a local constraint on the derivative of a function implies
global restrictions on the shape of the function. 
For conformal maps the following rigidity result holds.
Let us clarify this through a well known example (a special case of Theorem~\ref{th:liouville} below).
\begin{theorem}\label{th:solid}
Let $f:\Omega\to \R^d$ with $\Omega\subset \R^d$ connected be a $C^1$ function
such that $Df(x)\in \mathrm{O}(n)$ for all $x\in \Omega$. Then $f(x)=Rx+b$ for some $R\in 
\mathrm{O}(d)$ and $b\in \R^d$.
\end{theorem}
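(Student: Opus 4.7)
The plan is to show that pointwise orthogonality of $Df$ propagates to vanishing of the second derivatives, forcing $f$ to be affine, and then to glue these affine pieces together using connectedness of $\Omega$.

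I would first assume enough regularity to differentiate $Df$ once more (say $f \in C^2$; I address the $C^1$ gap at the end). Write $g_i(x) = \partial_i f(x)$, so the hypothesis becomes $g_i(x) \cdot g_j(x) = \delta_{ij}$ at every $x \in \Omega$. Differentiating in direction $x_k$ yields
\begin{align*}
\partial_k g_i \cdot g_j + g_i \cdot \partial_k g_j = 0,
\end{align*}
and setting $\Gamma_{ijk} := \partial_k g_i \cdot g_j$ this reads $\Gamma_{ijk} = -\Gamma_{jik}$. Equality of mixed partials $\partial_k g_i = \partial_i g_k$ further gives $\Gamma_{ijk} = \Gamma_{kji}$. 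Iterating these two symmetries along cyclic permutations of $(i,j,k)$ produces the chain $\Gamma_{ijk} = \Gamma_{kji} = -\Gamma_{jki} = -\Gamma_{ikj} = \Gamma_{kij} = \Gamma_{jik} = -\Gamma_{ijk}$, so $\Gamma_{ijk} \equiv 0$. Because $\{g_1(x),\dots,g_d(x)\}$ is an orthonormal basis of $\R^d$ at every point, vanishing of $\partial_k g_i \cdot g_j$ for every $j$ forces $\partial_k g_i \equiv 0$ on $\Omega$. Thus $Df$ is locally constant, and by connectedness of $\Omega$ globally constant, equal to some $R \in \mathrm{O}(d)$. Integrating along a path then gives $f(x) = Rx + b$.

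The main obstacle is the regularity hypothesis: the theorem states only $f \in C^1$, so $D^2 f$ does not a priori exist and the cyclic-permutation computation cannot be carried out pointwise. I would bridge this either by invoking the classical fact that $C^1$ solutions of the overdetermined system $Df^\top Df = I$ are automatically real-analytic, so the $C^2$ argument applies verbatim, or by mollifying $f$ against a smooth kernel $\rho_\varepsilon$ to obtain $f_\varepsilon$ with $Df_\varepsilon \to Df$ uniformly on compact subsets and $Df_\varepsilon^\top Df_\varepsilon \to I$, and then passing to the limit in the affineness conclusion (which is stable under $C^1$ convergence of nearly-orthogonal systems). A third, more geometric route uses that $|Df(x)v| = |v|$ for every $v$ to show, by integrating along straight segments in any convex subdomain, that $f$ is a local Euclidean isometry; local isometries between open subsets of $\R^d$ are affine by a Mazur--Ulam-type argument, which one then globalizes via connectedness of $\Omega$.
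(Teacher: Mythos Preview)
The paper does not actually prove Theorem~\ref{th:solid}; it merely states it as a well-known illustration and remarks that it is a special case of Liouville's theorem (Theorem~\ref{th:liouville}), which is itself only cited, not proved. So there is no ``paper's own proof'' to compare against, and your direct argument is a genuine contribution relative to what the paper offers.

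Your $C^2$ argument via the cyclic-permutation identity $\Gamma_{ijk}=-\Gamma_{ijk}$ is the standard and correct elementary route (it is essentially the computation that the Christoffel symbols of the pullback of the Euclidean metric vanish). For the $C^1$ gap, your first and third options are sound: the automatic-regularity route works because $Df\in\mathrm{O}(d)$ forces $f$ to be weakly harmonic, hence smooth by Weyl's lemma; the geometric route works because $|Df(x)v|=|v|$ makes $f$ a local isometry, and one can argue directly that such maps preserve geodesic midpoints. Your second option, mollification, is the shakiest as written: $Df_\varepsilon = (Df)\ast\rho_\varepsilon$ is a convex average of orthogonal matrices and is therefore \emph{not} orthogonal in general, so you cannot apply the $C^2$ argument to $f_\varepsilon$ and then pass to the limit in ``affineness''; you would need a quantitative rigidity estimate (of Friesecke--James--M\"uller type) to make that work, which is considerably heavier than the other two routes. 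I would drop the mollification sketch or replace it with the harmonicity observation.
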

This means that when the gradient of a function is a pointwise rotation, then the function is already a constant rotation.
Results of this type are of considerable interest in continuum mechanics. 
There it is natural to consider deformation of solids $f:\Omega\to \R^d$ 
which are locally constrained by the structure of the material. The condition
in Theorem~\ref{th:solid} corresponds to non-deformable solids while the 
condition $\det Df=1$ is used for incompressible fluids.

One important result in statistical mechanics is that
results like Theorem~\ref{th:solid} come with estimates in a neighbourhood of the function class, in 
the sense that if $Df(x)$ is close  to a rotation for all $x$, then it will be close to a 
constant rotation \cite{mueller, ciarlet1997mathematical}. Results of this type could be important for three reasons. First, in many real world applications it
might be more realistic to assume that $f$ is close to a certain function class $\mc{F}$
but not necessary contained in $\mc{F}$.

Secondly, it could rigorously justify the use of surrogate losses 
for the differential constraints (e.g., \cite{ima}
consider the loss $\sum_i \ln(|\partial_i f|)-\ln| \Det f|$)
\footnote{We denote the Euclidean norm by $|\cdot|$.} 
and, thirdly, results of this type would be needed 
for finite sample analysis.

We leave the further investigation of these matters to future work and return to 
 the rigidity result for conformal maps relevant for Theorem~\ref{th:conformal}.
 This is an extension of Theorem~\ref{th:solid} 
 where the condition $Df\in \mathrm{O}(d)$ is replaced by $Df\in \lambda \mathrm{O}(d)$ for some $\lambda\neq 0$. It turns
  out that this class is still very rigid in dimension $d\geq 3$ in the sense that there is only one additional solution.
The main input in this result is a theorem due to Liouville 
that shows that there are very few conformal maps for $n>2$.
\begin{theorem}[Liouville]\label{th:liouville}
For $n>2$, if $\Omega\subset \R^n$ is an open, connected set and
$f:\Omega\to \R^n$ is conformal, then


\begin{align}\label{eq:struct_conformal}
f(x)=b + \alpha \frac{A(x-a)}{|x-a|^\varepsilon}
\end{align} 


where $b, a \in \R^n$, $\alpha\in\R$, $A\in \mathrm{O}(n)$, and $\varepsilon\in \{0, 2\}$.
\end{theorem}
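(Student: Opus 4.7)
The plan is to extract from the pointwise conformal condition a closed system of partial differential equations for the conformal factor $\lambda$, show that in dimension $n \geq 3$ this system forces $\lambda$ to take one of two explicit algebraic forms, and then integrate back to recover $f$ itself.

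First I would write the conformal condition as $f_i \cdot f_j = \lambda^2 \delta_{ij}$ with $f_i = \partial_i f$, so that $\{f_1/\lambda, \ldots, f_n/\lambda\}$ is a pointwise orthonormal frame. Differentiating this identity in $x_k$ and taking cyclic permutations of the indices produces a small linear system whose solution gives every inner product $f_{ij} \cdot f_k$, and expanding the second derivatives in the orthonormal frame yields
\begin{align*}
f_{ij} &= \tfrac{\lambda_j}{\lambda} f_i + \tfrac{\lambda_i}{\lambda} f_j \quad (i \neq j), \qquad f_{ii} = \tfrac{\lambda_i}{\lambda} f_i - \sum_{l \neq i} \tfrac{\lambda_l}{\lambda} f_l,
\end{align*}
where subscripts on $\lambda$ denote partial derivatives.

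Next I would impose equality of third-order mixed derivatives $\partial_k f_{ij} = \partial_j f_{ik}$ on these formulas. Choosing a triple of \emph{distinct} indices $i, j, k$ --- which is the only place the hypothesis $n \geq 3$ enters in an essential way --- and exploiting linear independence of $\{f_l\}$ yields, after a short calculation with $\rho := 1/\lambda$, the identity $\partial_i \partial_j \rho = 0$ for $i \neq j$, so $\rho$ is separable as $\rho(x) = \sum_i \rho_i(x_i)$. Imposing the remaining third-order symmetries (those with one pair of indices repeated) produces the further relation $\rho_{ii} = |\nabla \rho|^2/(2\rho)$, valid for all $i$. Since the left-hand side $\rho_i''(x_i)$ depends only on $x_i$ while the right-hand side is symmetric in all indices, both must equal a common constant $c \in \R$. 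Integrating $\rho_i'' = c$ gives $\rho(x) = \tfrac{c}{2}|x|^2 + \beta \cdot x + \gamma$, and plugging back into $\rho_{ii} = |\nabla \rho|^2/(2\rho)$ yields the scalar compatibility $|\beta|^2 = 2c\gamma$. Two cases remain: either $c = 0$ with $\beta = 0$ (so $\lambda$ is constant), or $c \neq 0$ and completing the square gives $\rho(x) = \tfrac{c}{2}|x - a|^2$ with $a = -\beta/c$, hence $\lambda(x) = \alpha/|x - a|^2$ with $\alpha = 2/c$.

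Finally I would integrate back to $f$. In the constant case $Df(x) \in \lambda\, \mathrm{O}(n)$ pointwise, so Theorem~\ref{th:solid} applied to the rescaled map $\lambda^{-1} f$ yields $f(x) = b + \lambda A(x - a)$ for some $a, b \in \R^n$ and $A \in \mathrm{O}(n)$, which is the $\varepsilon = 0$ branch of \eqref{eq:struct_conformal}. In the non-constant case, the inversion $\iota(x) = a + (x-a)/|x-a|^2$ satisfies $D\iota(x) = |x-a|^{-2} O(x)$ with $O(x) \in \mathrm{O}(n)$ and $|\iota(x) - a| = |x - a|^{-1}$; by the chain rule, $f \circ \iota$ is conformal with constant factor $\alpha$, so the previous case yields the $\varepsilon = 2$ form after unfolding. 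The main obstacle is the preceding step: translating the third-order symmetry conditions into the clean PDE system for $\rho$ is a delicate bookkeeping exercise, and it is essential that a third index distinct from $i, j$ exists, since in $n = 2$ the identity $\partial_i \partial_j \rho = 0$ is unavailable and the system collapses to the Cauchy--Riemann equations, admitting an infinite-dimensional family of holomorphic solutions instead of the finite-parameter M\"obius family.
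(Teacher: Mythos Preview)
The paper does not give its own proof of this theorem; it simply cites it as a classical result of Liouville, pointing to \cite{geometric} for a modern treatment. Your argument is exactly the classical one and is correct at the level of $C^3$ maps: the second-derivative formulas obtained from differentiating $f_i\cdot f_j=\lambda^2\delta_{ij}$ and expanding in the orthonormal frame $\{f_l/\lambda\}$ are right, the use of a \emph{third} distinct index in the symmetry $\partial_k f_{ij}=\partial_j f_{ik}$ is precisely where $n\geq 3$ is needed, the resulting system $\partial_i\partial_j\rho=0$ ($i\neq j$) together with $\rho_{ii}=|\nabla\rho|^2/(2\rho)$ does force $\rho$ to be an affine or spherical quadratic with the compatibility $|\beta|^2=2c\gamma$, and the reduction of the $\varepsilon=2$ branch to the $\varepsilon=0$ branch by precomposing with the inversion $\iota$ is the standard finishing move. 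Your remark on why the argument collapses in $n=2$ is also accurate.

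One genuine gap relative to the statement as written: Definition~\ref{def:conformal} in the paper only requires $f\in C^1$, but your proof differentiates $f$ three times. Closing this gap---showing that a $C^1$ (or even $W^{1,n}_{\mathrm{loc}}$) conformal map in dimension $n\geq 3$ is automatically smooth---is itself a nontrivial theorem (Gehring, Reshetnyak; see also the treatment in \cite{geometric}). For the paper's applications this is harmless, since the maps to which Theorem~\ref{th:liouville} is applied are smooth by construction, but strictly speaking your argument proves the result only under the additional hypothesis $f\in C^3$, and you should flag this.
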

Originally this result was shown by Liouville \cite{liouville1850extension}, a modern treatment is \cite{geometric}.
In particular, this shows that conformal maps are (up to translations) rotations 
or rotations followed by an inversion.

To illustrate the strength of Theorem~\ref{th:liouville} we compare it to the setting of volume preserving maps which satisfy no similar rigidity property. 
Intuitively the different rigidity properties are already apparent from the connection to solids, which can merely be rotated and shifted, and fluids which can also be stirred leading to chaotic deformations.
Rigorously the different behaviours can be clarified by the observation that
conformal maps have a finite number of parameters and thus 
 a finite number of constraints (e.g., of the form $f(x)=y$) allows to identify them. In contrast volume preserving maps cannot be identified from finitely many constraints as the following proposition shows.
\begin{proposition}\label{prop:non_rigid}
For $d>2$ and $\{x_1,\ldots, x_n, y_1,\ldots, y_n\}\subset \R^d$, all pairwise different,
there is a volume preserving diffeomorphism $f:\R^d\to\R^d$, $f\in \mc{F}_{\mathrm{vol}}$ such that $f(x_i)=y_i$.
\end{proposition}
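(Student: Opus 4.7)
The plan is to connect each $x_i$ to $y_i$ by a smooth embedded arc in $\R^d$, build a compactly supported volume-preserving diffeomorphism realizing each displacement inside a thin tube around the arc, and then compose all these local pieces.

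First I would choose smooth embedded paths $\gamma_i\colon [0,1]\to \R^d$ with $\gamma_i(0)=x_i$ and $\gamma_i(1)=y_i$ whose images are pairwise disjoint and avoid all the other $2n-2$ marked points. Because $d>2$, a pair of one-dimensional arcs in $\R^d$ is generically disjoint (the expected intersection dimension is $1+1-d<0$), and each arc can be further perturbed away from a prescribed finite set of points, so such a configuration is obtained by starting from arbitrary arcs and performing a small transverse perturbation. Next choose tubular neighborhoods $U_i$ of $\gamma_i$ that are pairwise disjoint and such that $U_i$ contains none of the marked points except $x_i$ and $y_i$.

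The heart of the argument is the construction, for each $i$, of a volume-preserving diffeomorphism $\phi_i\colon \R^d\to \R^d$ equal to the identity outside $U_i$ and sending $x_i$ to $y_i$. After a preliminary volume-preserving change of coordinates that straightens $\gamma_i$, I may assume $\gamma_i$ is the segment from $0$ to $(L,0,\ldots,0)$. Pick a bump $\alpha\in C^\infty_c(\R)$ with $\alpha\equiv 1$ on $[0,L]$ and a bump $\beta\in C^\infty_c(\R^{d-1})$ with $\beta(0)=1$ and support in a thin transverse ball fitting inside the tube, and set $H(x)=x_2\,\alpha(x_1)\,\beta(x_2,\ldots,x_d)$. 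The Hamiltonian-type vector field $X=(\partial_{x_2}H)\,\partial_{x_1}-(\partial_{x_1}H)\,\partial_{x_2}$ is compactly supported in $U_i$ and divergence-free since $\partial_{x_1}(\partial_{x_2}H)-\partial_{x_2}(\partial_{x_1}H)=0$, so its flow consists of volume-preserving diffeomorphisms. Along the segment $x_2=\cdots=x_d=0$ one has $X=(\alpha(x_1),0,\ldots,0)$, hence the time-$L$ flow carries $0$ to $(L,0,\ldots,0)$, giving the desired $\phi_i$.

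Finally, since the $U_i$ are pairwise disjoint and each $\phi_i$ is the identity outside $U_i$, the composition $f=\phi_n\circ\cdots\circ\phi_1$ satisfies $f(x_i)=\phi_i(x_i)=y_i$ for every $i$ (the other $\phi_j$ fix both $x_i$ and $y_i$ since these points lie outside $U_j$), and being a composition of volume-preserving diffeomorphisms of $\R^d$ it lies in $\mc{F}_{\mathrm{vol}}$. I expect the only genuinely delicate step to be the middle one: making sure that the bumps $\alpha,\beta$ and the transverse scale are chosen so that integral curves of $X$ starting inside the support stay inside the support for all time (so that $\phi_i$ is the identity outside $U_i$ and is globally defined as a diffeomorphism on $\R^d$). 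Once this standard local model is in place, the disjointness of the tubes makes the global assembly routine.
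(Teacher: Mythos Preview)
Your proof is correct and follows the same overall plan as the paper's (build one volume-preserving diffeomorphism per pair and compose), but the local construction differs. The paper takes the straight segment from $x_i$ to $y_i$, multiplies the constant field $y_i-x_i$ by a bump $\varphi$ equal to $1$ along the segment and vanishing at the other marked points, and then invokes Moser's theorem to find a correction $Y_i$ with $\mathrm{supp}(Y_i)\subset\mathrm{supp}(\nabla\varphi)$ so that $X_i+Y_i$ is divergence-free; its time-$1$ flow is the desired map. Your Hamiltonian field is divergence-free by construction, which is more explicit, but the price is the ``preliminary volume-preserving change of coordinates that straightens $\gamma_i$'': this is itself a Moser-type statement, so you have only shifted where the nontrivial input sits. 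You can avoid straightening entirely by taking each $\gamma_i$ piecewise linear (two straight pieces through a generic waypoint make all arcs pairwise disjoint and avoid the other marked points in $d>2$) and running your Hamiltonian model on each straight piece. Note also that the paper uses a weaker separation: it does not require disjoint tubes, only that each segment avoid the other \emph{marked points}, which already suffices since each $\phi_j$ then fixes the remaining $x_i,y_i$ and the composition works. Finally, your stated concern about integral curves escaping the support is not an issue: since $X$ has compact support, its flow is globally defined and equals the identity outside $\mathrm{supp}(X)$ automatically.
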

Let us emphasize that the different rigidity properties are not at all surprising when arguing based on  degrees of freedom or numbers of constraints. While volume preserving maps enforce only a single scalar constraint on the Jacobian $Df$ the condition for conformal maps gives
$n(n+1)/2 -1$ constraints on the Jacobian.

Let us finally comment on OCTs where
the picture is not as well understood. 
As discussed in Section~\ref{sec:ima} it is known \cite{darboux1896leccons, Cartan1925}
that OCTs constitute a rich, non-parametric class of functions and therefore OCTs are much more flexible than conformal maps. We illustrated this with example OCT constructions in Section~\ref{sec:ima} leveraging 2D conformal maps. 
Nevertheless, it is not known if and what rigidity properties can be derived for OCTs.
However, our results suggest that 
the additional measure preservation condition in the context of ICA gives 
enough rigidity to (almost) give identifiability of ICA.
In this sense OCTs might be a good function class for ICA as it is rich enough to allow complex representations of data while at the same time being sufficiently rigid to still provide a notion of identifiability whose strength remains to be determined.





\section{Discussion}\label{sec:discussion}


ICA is long known to be identifiable for linear maps, baring pathological cases, and highly non-identifiable for general nonlinear ones. Surprisingly, similar results for function classes of intermediate complexity remain scarce. 
In this work we address this question with several identifiability results for different function classes. 
Our first main result is that ICA is identifiable in the class of conformal maps (up to classical ambiguities). This considerably extends previous claims, limited to a specific 2D setting \cite{nonlinear_ica}, and ruling out several families of spurious solutions \cite{ima}. 
On the negative side we show that the ICA problem for volume preserving maps admits a large class of spurious solutions. 
Finally, we show that OCTs satisfy certain weaker notions of local identifiability. 

In our proofs, we draw connections to methods and techniques that, to the best of our knowledge, have not been used in the context of ICA before. We relate the identifiability problem in ICA to the rigidity of the considered function class $\mc{F}$ and use tools from the theory of partial differential equations. These techniques have been applied very successfully to the analysis of elastic solids \cite{ciarlet2021mathematical, ciarlet1997mathematical}
and 
we believe that there are many applications of these methods in the field of ICA. 

While the main focus of current research after the seminal work of \citet{ica_auxiliary} is on the auxiliary variable case, there are three reasons to consider unconditional ICA. Firstly, it is a fundamental research question that is, as illustrated by our results, deeply rooted in functional analysis. Secondly there is high application potential for completely unsupervised learning without any auxiliary variables, as the corresponding datasets do not require labelling or specific experimental settings. Thirdly, it is very likely that the techniques can be generalised to the auxiliary variable case.

Another important open problem is assessing the type of constraints on ground truth mechanisms, encoded by function classes, that are relevant for real world data. It is plausible that those mechanisms are typically much more regular than generic nonlinear functions. Recently, \citet{ima} suggested, based on arguments from the causality literature that $\mc{F}_\ima$ is a natural class for representation learning (and our results show it also has favourable theoretical properties), but this will require experimental confirmation on real world data.

Finally, a central question from a machine learning perspective is the ability to design learning algorithms that can train LVMs with identifiable function class constraints. Interestingly, \citet{ima} showed that OCT maps can be learnt using a closed from regularized likelihood loss, thereby providing, supported by our result, a full-fledged identifiable nonlinear ICA framework.

\bibliography{./arxiv.bib}

\begin{thebibliography}{53}
\providecommand{\natexlab}[1]{#1}
\providecommand{\url}[1]{\texttt{#1}}
\expandafter\ifx\csname urlstyle\endcsname\relax
  \providecommand{\doi}[1]{doi: #1}\else
  \providecommand{\doi}{doi: \begingroup \urlstyle{rm}\Url}\fi

\bibitem[Ahlfors(1979)]{Ahlfors1966}
L.~Ahlfors.
\newblock \emph{Complex Analysis: An Introduction to the Theory of Analytic
  Functions of One Complex Variable, Third Edition}.
\newblock AMS Chelsea Publishing Series. American Mathematical Society, 1979.
\newblock ISBN 9781470467678.

\bibitem[Bengio et~al.(2013)Bengio, Courville, and
  Vincent]{bengio_representation}
Y.~Bengio, A.~C. Courville, and P.~Vincent.
\newblock Representation learning: {A} review and new perspectives.
\newblock \emph{{IEEE} Trans. Pattern Anal. Mach. Intell.}, 35\penalty0
  (8):\penalty0 1798--1828, 2013.
\newblock \doi{10.1109/TPAMI.2013.50}.
\newblock URL \url{https://doi.org/10.1109/TPAMI.2013.50}.

\bibitem[Cartan(1925)]{Cartan1925}
E.~Cartan.
\newblock \emph{La géométrie des espaces de Riemann}.
\newblock Gauthier-Villars, 1925.
\newblock URL \url{http://eudml.org/doc/192543}.

\bibitem[Chen et~al.(2018)Chen, Li, Grosse, and
  Duvenaud]{chen_isolating_disentangle}
T.~Q. Chen, X.~Li, R.~B. Grosse, and D.~Duvenaud.
\newblock Isolating sources of disentanglement in variational autoencoders.
\newblock In S.~Bengio, H.~M. Wallach, H.~Larochelle, K.~Grauman,
  N.~Cesa{-}Bianchi, and R.~Garnett, editors, \emph{Advances in Neural
  Information Processing Systems 31: Annual Conference on Neural Information
  Processing Systems 2018, NeurIPS 2018, December 3-8, 2018, Montr{\'{e}}al,
  Canada}, pages 2615--2625, 2018.
\newblock URL
  \url{https://proceedings.neurips.cc/paper/2018/hash/1ee3dfcd8a0645a25a35977997223d22-Abstract.html}.

\bibitem[Ciarlet(1997)]{ciarlet1997mathematical}
P.~G. Ciarlet.
\newblock \emph{Mathematical Elasticity: Volume II: Theory of Plates}.
\newblock ISSN. Elsevier Science, 1997.
\newblock ISBN 9780080535913.

\bibitem[Ciarlet(2021)]{ciarlet2021mathematical}
P.~G. Ciarlet.
\newblock \emph{Mathematical Elasticity, Volume I: Three-Dimensional
  Elasticity}.
\newblock Classics in Applied Mathematics Series. Society for Industrial and
  Applied Mathematics, 2021.
\newblock ISBN 9781611976779.

\bibitem[Comon(1994)]{comon}
P.~Comon.
\newblock Independent component analysis, a new concept?
\newblock \emph{Signal Processing}, 36\penalty0 (3):\penalty0 287--314, 1994.
\newblock ISSN 0165-1684.
\newblock \doi{https://doi.org/10.1016/0165-1684(94)90029-9}.
\newblock URL
  \url{https://www.sciencedirect.com/science/article/pii/0165168494900299}.
\newblock Higher Order Statistics.

\bibitem[Dacorogna and Moser(1990)]{moser}
B.~Dacorogna and J.~Moser.
\newblock On a partial differential equation involving the jacobian
  determinant.
\newblock \emph{Annales de l'I.H.P. Analyse non lin\'eaire}, 7\penalty0
  (1):\penalty0 1--26, 1990.
\newblock URL \url{http://www.numdam.org/item/AIHPC_1990__7_1_1_0/}.

\bibitem[Darboux(1896)]{darboux1896leccons}
G.~Darboux.
\newblock \emph{Le{\c{c}}ons sur la th{\'e}orie g{\'e}n{\'e}rale des surfaces}.
\newblock 1896.

\bibitem[Donoho and Grimes(2003)]{donoho_grimes}
D.~L. Donoho and C.~Grimes.
\newblock Hessian eigenmaps: Locally linear embedding techniques for
  high-dimensional data.
\newblock \emph{Proceedings of the National Academy of Sciences}, 100\penalty0
  (10):\penalty0 5591--5596, 2003.
\newblock \doi{10.1073/pnas.1031596100}.
\newblock URL \url{https://www.pnas.org/doi/abs/10.1073/pnas.1031596100}.

\bibitem[Dunninger and Zachmanoglou(1967)]{hyperbolic}
D.~R. Dunninger and E.~C. Zachmanoglou.
\newblock The condition for uniqueness of solutions of the dirichlet problem
  for the wave equation in coordinate rectangles.
\newblock \emph{Journal of Mathematical Analysis and Applications}, 20\penalty0
  (1):\penalty0 17--21, 1967.
\newblock ISSN 0022-247X.
\newblock \doi{https://doi.org/10.1016/0022-247X(67)90103-5}.
\newblock URL
  \url{https://www.sciencedirect.com/science/article/pii/0022247X67901035}.

\bibitem[Durkan et~al.(2020)Durkan, Bekasov, Murray, and Papamakarios]{nflows}
C.~Durkan, A.~Bekasov, I.~Murray, and G.~Papamakarios.
\newblock {nflows}: normalizing flows in {PyTorch}, Nov. 2020.
\newblock URL \url{https://doi.org/10.5281/zenodo.4296287}.

\bibitem[Evans(2010)]{evans10}
L.~C. Evans.
\newblock \emph{Partial differential equations}.
\newblock American Mathematical Society, Providence, R.I., 2010.
\newblock ISBN 9780821849743 0821849743.

\bibitem[Friesecke et~al.(2002)Friesecke, James, and Müller]{mueller}
G.~Friesecke, R.~D. James, and S.~Müller.
\newblock A theorem on geometric rigidity and the derivation of nonlinear plate
  theory from three-dimensional elasticity.
\newblock \emph{Communications on Pure and Applied Mathematics}, 55\penalty0
  (11):\penalty0 1461--1506, 2002.
\newblock \doi{https://doi.org/10.1002/cpa.10048}.
\newblock URL \url{https://onlinelibrary.wiley.com/doi/abs/10.1002/cpa.10048}.

\bibitem[Gama et~al.(2014)Gama, \v{Z}liobaitundefined, Bifet, Pechenizkiy, and
  Bouchachia]{concept2}
J.~a. Gama, I.~\v{Z}liobaitundefined, A.~Bifet, M.~Pechenizkiy, and
  A.~Bouchachia.
\newblock A survey on concept drift adaptation.
\newblock \emph{ACM Comput. Surv.}, 46\penalty0 (4), mar 2014.
\newblock ISSN 0360-0300.
\newblock \doi{10.1145/2523813}.
\newblock URL \url{https://doi.org/10.1145/2523813}.

\bibitem[Goodfellow et~al.(2014)Goodfellow, Pouget-Abadie, Mirza, Xu,
  Warde-Farley, Ozair, Courville, and Bengio]{goodfellow_generative_2014}
I.~J. Goodfellow, J.~Pouget-Abadie, M.~Mirza, B.~Xu, D.~Warde-Farley, S.~Ozair,
  A.~Courville, and Y.~Bengio.
\newblock Generative {Adversarial} {Networks}.
\newblock \emph{arXiv:1406.2661 [cs, stat]}, June 2014.
\newblock URL \url{http://arxiv.org/abs/1406.2661}.
\newblock arXiv: 1406.2661.

\bibitem[Gresele et~al.(2021)Gresele, von K{\"{u}}gelgen, Stimper,
  Sch{\"{o}}lkopf, and Besserve]{ima}
L.~Gresele, J.~von K{\"{u}}gelgen, V.~Stimper, B.~Sch{\"{o}}lkopf, and
  M.~Besserve.
\newblock Independent mechanism analysis, a new concept?
\newblock In M.~Ranzato, A.~Beygelzimer, Y.~N. Dauphin, P.~Liang, and J.~W.
  Vaughan, editors, \emph{Advances in Neural Information Processing Systems 34:
  Annual Conference on Neural Information Processing Systems 2021, NeurIPS
  2021, December 6-14, 2021, virtual}, pages 28233--28248, 2021.
\newblock URL
  \url{https://proceedings.neurips.cc/paper/2021/hash/edc27f139c3b4e4bb29d1cdbc45663f9-Abstract.html}.

\bibitem[Gu et~al.(2004)Gu, Wang, Chan, Thompson, and Yau]{conf_med}
X.~Gu, Y.~Wang, T.~F. Chan, P.~M. Thompson, and S.~Yau.
\newblock Genus zero surface conformal mapping and its application to brain
  surface mapping.
\newblock \emph{{IEEE} Trans. Medical Imaging}, 23\penalty0 (8):\penalty0
  949--958, 2004.
\newblock \doi{10.1109/TMI.2004.831226}.
\newblock URL \url{https://doi.org/10.1109/TMI.2004.831226}.

\bibitem[Harmeling et~al.(2003)Harmeling, Ziehe, Kawanabe, and
  M{\"{u}}ller]{kernel_blind_source}
S.~Harmeling, A.~Ziehe, M.~Kawanabe, and K.~M{\"{u}}ller.
\newblock Kernel-based nonlinear blind source separation.
\newblock \emph{Neural Comput.}, 15\penalty0 (5):\penalty0 1089--1124, 2003.
\newblock \doi{10.1162/089976603765202677}.
\newblock URL \url{https://doi.org/10.1162/089976603765202677}.

\bibitem[Higgins et~al.(2017)Higgins, Matthey, Pal, Burgess, Glorot, Botvinick,
  Mohamed, and Lerchner]{higgins}
I.~Higgins, L.~Matthey, A.~Pal, C.~P. Burgess, X.~Glorot, M.~M. Botvinick,
  S.~Mohamed, and A.~Lerchner.
\newblock beta-vae: Learning basic visual concepts with a constrained
  variational framework.
\newblock In \emph{5th International Conference on Learning Representations,
  {ICLR} 2017, Toulon, France, April 24-26, 2017, Conference Track
  Proceedings}. OpenReview.net, 2017.
\newblock URL \url{https://openreview.net/forum?id=Sy2fzU9gl}.

\bibitem[Higgins et~al.(2018)Higgins, Amos, Pfau, Racaniere, Matthey, Rezende,
  and Lerchner]{higgins_disentangle}
I.~Higgins, D.~Amos, D.~Pfau, S.~Racaniere, L.~Matthey, D.~Rezende, and
  A.~Lerchner.
\newblock Towards a definition of disentangled representations, 2018.
\newblock URL \url{https://arxiv.org/abs/1812.02230}.

\bibitem[Horan et~al.(2021)Horan, Richardson, and Weiss]{when_disentangle}
D.~Horan, E.~Richardson, and Y.~Weiss.
\newblock When is unsupervised disentanglement possible?
\newblock In M.~Ranzato, A.~Beygelzimer, Y.~Dauphin, P.~Liang, and J.~W.
  Vaughan, editors, \emph{Advances in Neural Information Processing Systems},
  volume~34, pages 5150--5161. Curran Associates, Inc., 2021.
\newblock URL
  \url{https://proceedings.neurips.cc/paper/2021/file/29586cb449c90e249f1f09a0a4ee245a-Paper.pdf}.

\bibitem[Hyv{\"{a}}rinen and Morioka(2016)]{time_contrastive}
A.~Hyv{\"{a}}rinen and H.~Morioka.
\newblock Unsupervised feature extraction by time-contrastive learning and
  nonlinear {ICA}.
\newblock In D.~D. Lee, M.~Sugiyama, U.~von Luxburg, I.~Guyon, and R.~Garnett,
  editors, \emph{Advances in Neural Information Processing Systems 29: Annual
  Conference on Neural Information Processing Systems 2016, December 5-10,
  2016, Barcelona, Spain}, pages 3765--3773, 2016.
\newblock URL
  \url{https://proceedings.neurips.cc/paper/2016/hash/d305281faf947ca7acade9ad5c8c818c-Abstract.html}.

\bibitem[Hyv{\"{a}}rinen and Morioka(2017)]{temporally_ica}
A.~Hyv{\"{a}}rinen and H.~Morioka.
\newblock Nonlinear {ICA} of temporally dependent stationary sources.
\newblock In A.~Singh and X.~J. Zhu, editors, \emph{Proceedings of the 20th
  International Conference on Artificial Intelligence and Statistics, {AISTATS}
  2017, 20-22 April 2017, Fort Lauderdale, FL, {USA}}, volume~54 of
  \emph{Proceedings of Machine Learning Research}, pages 460--469. {PMLR},
  2017.
\newblock URL \url{http://proceedings.mlr.press/v54/hyvarinen17a.html}.

\bibitem[Hyv{\"{a}}rinen and Pajunen(1999)]{nonlinear_ica}
A.~Hyv{\"{a}}rinen and P.~Pajunen.
\newblock Nonlinear independent component analysis: Existence and uniqueness
  results.
\newblock \emph{Neural Networks}, 12\penalty0 (3):\penalty0 429--439, 1999.
\newblock \doi{10.1016/S0893-6080(98)00140-3}.
\newblock URL \url{https://doi.org/10.1016/S0893-6080(98)00140-3}.

\bibitem[Hyv{\"{a}}rinen et~al.(2019)Hyv{\"{a}}rinen, Sasaki, and
  Turner]{ica_auxiliary}
A.~Hyv{\"{a}}rinen, H.~Sasaki, and R.~E. Turner.
\newblock Nonlinear {ICA} using auxiliary variables and generalized contrastive
  learning.
\newblock In K.~Chaudhuri and M.~Sugiyama, editors, \emph{The 22nd
  International Conference on Artificial Intelligence and Statistics, {AISTATS}
  2019, 16-18 April 2019, Naha, Okinawa, Japan}, volume~89 of \emph{Proceedings
  of Machine Learning Research}, pages 859--868. {PMLR}, 2019.
\newblock URL \url{http://proceedings.mlr.press/v89/hyvarinen19a.html}.

\bibitem[Iwaniec and Martin(2001)]{geometric}
T.~Iwaniec and G.~J. Martin.
\newblock \emph{Geometric Function Theory and Non-linear Analysis}.
\newblock Oxford mathematical monographs. Oxford University Press, 2001.
\newblock ISBN 9780198509295.
\newblock URL \url{https://books.google.de/books?id=fN3WKfPPTHAC}.

\bibitem[Kalnins and Miller(1986)]{coordinate_systems}
E.~G. Kalnins and W.~Miller.
\newblock Separation of variables on n‐dimensional riemannian manifolds. i.
  the n‐sphere sn and euclidean n‐space rn.
\newblock \emph{Journal of Mathematical Physics}, 27\penalty0 (7):\penalty0
  1721--1736, 1986.
\newblock \doi{10.1063/1.527088}.
\newblock URL \url{https://doi.org/10.1063/1.527088}.

\bibitem[Khemakhem et~al.(2020)Khemakhem, Kingma, Monti, and
  Hyv{\"{a}}rinen]{ica_vae}
I.~Khemakhem, D.~P. Kingma, R.~P. Monti, and A.~Hyv{\"{a}}rinen.
\newblock Variational autoencoders and nonlinear {ICA:} {A} unifying framework.
\newblock In S.~Chiappa and R.~Calandra, editors, \emph{The 23rd International
  Conference on Artificial Intelligence and Statistics, {AISTATS} 2020, 26-28
  August 2020, Online [Palermo, Sicily, Italy]}, volume 108 of
  \emph{Proceedings of Machine Learning Research}, pages 2207--2217. {PMLR},
  2020.
\newblock URL \url{http://proceedings.mlr.press/v108/khemakhem20a.html}.

\bibitem[Kim and Mnih(2018)]{factor_vae}
H.~Kim and A.~Mnih.
\newblock Disentangling by factorising.
\newblock In J.~G. Dy and A.~Krause, editors, \emph{Proceedings of the 35th
  International Conference on Machine Learning, {ICML} 2018,
  Stockholmsm{\"{a}}ssan, Stockholm, Sweden, July 10-15, 2018}, volume~80 of
  \emph{Proceedings of Machine Learning Research}, pages 2654--2663. {PMLR},
  2018.
\newblock URL \url{http://proceedings.mlr.press/v80/kim18b.html}.

\bibitem[Kingma and Ba(2015)]{adam}
D.~P. Kingma and J.~Ba.
\newblock Adam: {A} method for stochastic optimization.
\newblock In Y.~Bengio and Y.~LeCun, editors, \emph{3rd International
  Conference on Learning Representations, {ICLR} 2015, San Diego, CA, USA, May
  7-9, 2015, Conference Track Proceedings}, 2015.
\newblock URL \url{http://arxiv.org/abs/1412.6980}.

\bibitem[Kingma and Welling(2014)]{kingma_auto-encoding_2014}
D.~P. Kingma and M.~Welling.
\newblock Auto-{Encoding} {Variational} {Bayes}.
\newblock \emph{arXiv:1312.6114 [cs, stat]}, May 2014.
\newblock URL \url{http://arxiv.org/abs/1312.6114}.
\newblock arXiv: 1312.6114.

\bibitem[Kumar and Poole(2020)]{kumarpoole}
A.~Kumar and B.~Poole.
\newblock On implicit regularization in {\(\beta\)}-vaes.
\newblock In \emph{Proceedings of the 37th International Conference on Machine
  Learning, {ICML} 2020, 13-18 July 2020, Virtual Event}, volume 119 of
  \emph{Proceedings of Machine Learning Research}, pages 5480--5490. {PMLR},
  2020.
\newblock URL \url{http://proceedings.mlr.press/v119/kumar20d.html}.

\bibitem[Kumar et~al.(2018)Kumar, Sattigeri, and
  Balakrishnan]{kumar_latent_concepts}
A.~Kumar, P.~Sattigeri, and A.~Balakrishnan.
\newblock Variational inference of disentangled latent concepts from unlabeled
  observations.
\newblock In \emph{6th International Conference on Learning Representations,
  {ICLR} 2018, Vancouver, BC, Canada, April 30 - May 3, 2018, Conference Track
  Proceedings}. OpenReview.net, 2018.
\newblock URL \url{https://openreview.net/forum?id=H1kG7GZAW}.

\bibitem[Lee(2003)]{lee2003introduction}
J.~M. Lee.
\newblock \emph{Introduction to Smooth Manifolds}.
\newblock Graduate Texts in Mathematics. Springer, 2003.
\newblock ISBN 9780387954486.

\bibitem[Liouville(1850)]{liouville1850extension}
J.~Liouville.
\newblock Extension au cas des trois dimensions de la question du trac{\'e}
  g{\'e}ographique.
\newblock \emph{Note VI}, pages 609--617, 1850.

\bibitem[Locatello et~al.(2019)Locatello, Bauer, Lucic, R{\"{a}}tsch, Gelly,
  Sch{\"{o}}lkopf, and Bachem]{locatello}
F.~Locatello, S.~Bauer, M.~Lucic, G.~R{\"{a}}tsch, S.~Gelly,
  B.~Sch{\"{o}}lkopf, and O.~Bachem.
\newblock Challenging common assumptions in the unsupervised learning of
  disentangled representations.
\newblock In K.~Chaudhuri and R.~Salakhutdinov, editors, \emph{Proceedings of
  the 36th International Conference on Machine Learning, {ICML} 2019, 9-15 June
  2019, Long Beach, California, {USA}}, volume~97 of \emph{Proceedings of
  Machine Learning Research}, pages 4114--4124. {PMLR}, 2019.
\newblock URL \url{http://proceedings.mlr.press/v97/locatello19a.html}.

\bibitem[Osgood(1900)]{osgood1900existence}
W.~F. Osgood.
\newblock On the existence of the green's function for the most general simply
  connected plane region.
\newblock \emph{Transactions of the American Mathematical Society}, 1\penalty0
  (3):\penalty0 310--314, 1900.

\bibitem[Papamakarios et~al.(2021)Papamakarios, Nalisnick, Rezende, Mohamed,
  and Lakshminarayanan]{review_flows}
G.~Papamakarios, E.~T. Nalisnick, D.~J. Rezende, S.~Mohamed, and
  B.~Lakshminarayanan.
\newblock Normalizing flows for probabilistic modeling and inference.
\newblock \emph{J. Mach. Learn. Res.}, 22:\penalty0 57:1--57:64, 2021.
\newblock URL \url{http://jmlr.org/papers/v22/19-1028.html}.

\bibitem[Rezende and Mohamed(2015)]{original_flows}
D.~J. Rezende and S.~Mohamed.
\newblock Variational inference with normalizing flows.
\newblock In F.~R. Bach and D.~M. Blei, editors, \emph{Proceedings of the 32nd
  International Conference on Machine Learning, {ICML} 2015, Lille, France,
  6-11 July 2015}, volume~37 of \emph{{JMLR} Workshop and Conference
  Proceedings}, pages 1530--1538. JMLR.org, 2015.
\newblock URL \url{http://proceedings.mlr.press/v37/rezende15.html}.

\bibitem[Ridgeway and Mozer(2018)]{f_statistic_vae}
K.~Ridgeway and M.~C. Mozer.
\newblock Learning deep disentangled embeddings with the f-statistic loss.
\newblock In S.~Bengio, H.~M. Wallach, H.~Larochelle, K.~Grauman,
  N.~Cesa{-}Bianchi, and R.~Garnett, editors, \emph{Advances in Neural
  Information Processing Systems 31: Annual Conference on Neural Information
  Processing Systems 2018, NeurIPS 2018, December 3-8, 2018, Montr{\'{e}}al,
  Canada}, pages 185--194, 2018.
\newblock URL
  \url{https://proceedings.neurips.cc/paper/2018/hash/2b24d495052a8ce66358eb576b8912c8-Abstract.html}.

\bibitem[Rol{\'{\i}}nek et~al.(2019)Rol{\'{\i}}nek, Zietlow, and
  Martius]{rolinek}
M.~Rol{\'{\i}}nek, D.~Zietlow, and G.~Martius.
\newblock Variational autoencoders pursue {PCA} directions (by accident).
\newblock In \emph{{IEEE} Conference on Computer Vision and Pattern
  Recognition, {CVPR} 2019, Long Beach, CA, USA, June 16-20, 2019}, pages
  12406--12415. Computer Vision Foundation / {IEEE}, 2019.
\newblock \doi{10.1109/CVPR.2019.01269}.
\newblock URL
  \url{http://openaccess.thecvf.com/content\_CVPR\_2019/html/Rolinek\_Variational\_Autoencoders\_Pursue\_PCA\_Directions\_by\_Accident\_CVPR\_2019\_paper.html}.

\bibitem[Roweis and Saul(2000)]{lin_emb}
S.~T. Roweis and L.~K. Saul.
\newblock Nonlinear dimensionality reduction by locally linear embedding.
\newblock \emph{Science}, 290\penalty0 (5500):\penalty0 2323--2326, 2000.
\newblock \doi{10.1126/science.290.5500.2323}.
\newblock URL
  \url{https://www.science.org/doi/abs/10.1126/science.290.5500.2323}.

\bibitem[Sch{\"{o}}lkopf et~al.(2021)Sch{\"{o}}lkopf, Locatello, Bauer, Ke,
  Kalchbrenner, Goyal, and Bengio]{review_representation}
B.~Sch{\"{o}}lkopf, F.~Locatello, S.~Bauer, N.~R. Ke, N.~Kalchbrenner,
  A.~Goyal, and Y.~Bengio.
\newblock Toward causal representation learning.
\newblock \emph{Proc. {IEEE}}, 109\penalty0 (5):\penalty0 612--634, 2021.
\newblock \doi{10.1109/JPROC.2021.3058954}.
\newblock URL \url{https://doi.org/10.1109/JPROC.2021.3058954}.

\bibitem[Sharon and Mumford(2006)]{conf_2d}
E.~Sharon and D.~Mumford.
\newblock 2d-shape analysis using conformal mapping.
\newblock \emph{Int. J. Comput. Vis.}, 70\penalty0 (1):\penalty0 55--75, 2006.
\newblock \doi{10.1007/s11263-006-6121-z}.
\newblock URL \url{https://doi.org/10.1007/s11263-006-6121-z}.

\bibitem[Sorrenson et~al.(2020)Sorrenson, Rother, and K{\"{o}}the]{gin}
P.~Sorrenson, C.~Rother, and U.~K{\"{o}}the.
\newblock Disentanglement by nonlinear {ICA} with general incompressible-flow
  networks {(GIN)}.
\newblock In \emph{8th International Conference on Learning Representations,
  {ICLR} 2020, Addis Ababa, Ethiopia, April 26-30, 2020}. OpenReview.net, 2020.
\newblock URL \url{https://openreview.net/forum?id=rygeHgSFDH}.

\bibitem[Taleb and Jutten(1999)]{post_nonlinear}
A.~Taleb and C.~Jutten.
\newblock Source separation in post-nonlinear mixtures.
\newblock \emph{{IEEE} Trans. Signal Process.}, 47\penalty0 (10):\penalty0
  2807--2820, 1999.
\newblock \doi{10.1109/78.790661}.
\newblock URL \url{https://doi.org/10.1109/78.790661}.

\bibitem[Tenenbaum et~al.(2000)Tenenbaum, de~Silva, and
  Langford]{Tenenbaum2000}
J.~B. Tenenbaum, V.~de~Silva, and J.~C. Langford.
\newblock A global geometric framework for nonlinear dimensionality reduction.
\newblock \emph{Science}, 290\penalty0 (5500):\penalty0 2319--2323, Dec. 2000.

\bibitem[Wang et~al.(2007)Wang, Wang, Jin, Gu, and Samaras]{conf_3d}
S.~Wang, Y.~Wang, M.~Jin, X.~D. Gu, and D.~Samaras.
\newblock Conformal geometry and its applications on 3d shape matching,
  recognition, and stitching.
\newblock \emph{{IEEE} Trans. Pattern Anal. Mach. Intell.}, 29\penalty0
  (7):\penalty0 1209--1220, 2007.
\newblock \doi{10.1109/TPAMI.2007.1050}.
\newblock URL \url{https://doi.org/10.1109/TPAMI.2007.1050}.

\bibitem[Widmer and Kubat(1996)]{concept}
G.~Widmer and M.~Kubat.
\newblock Learning in the presence of concept drift and hidden contexts.
\newblock \emph{Machine Learning}, 23\penalty0 (1):\penalty0 69--101, 1996.
\newblock \doi{10.1007/BF00116900}.
\newblock URL \url{https://doi.org/10.1007/BF00116900}.

\bibitem[Yang et~al.(2022)Yang, Wang, Sun, Zhang, Zhang, Li, and
  Yan]{yang2022nonlinear}
X.~Yang, Y.~Wang, J.~Sun, X.~Zhang, S.~Zhang, Z.~Li, and J.~Yan.
\newblock Nonlinear {ICA} using volume-preserving transformations.
\newblock In \emph{International Conference on Learning Representations}, 2022.
\newblock URL \url{https://openreview.net/forum?id=AMpki9kp8Cn}.

\bibitem[Zheng et~al.(2022)Zheng, Ng, and Zhang]{zheng_neurips}
Y.~Zheng, I.~Ng, and K.~Zhang.
\newblock On the identifiability of nonlinear {ICA:} sparsity and beyond.
\newblock \emph{CoRR}, abs/2206.07751, 2022.
\newblock \doi{10.48550/arXiv.2206.07751}.
\newblock URL \url{https://doi.org/10.48550/arXiv.2206.07751}.

\bibitem[Zietlow et~al.(2021)Zietlow, Rol{\'{\i}}nek, and Martius]{zietlow}
D.~Zietlow, M.~Rol{\'{\i}}nek, and G.~Martius.
\newblock Demystifying inductive biases for (beta-)vae based architectures.
\newblock In M.~Meila and T.~Zhang, editors, \emph{Proceedings of the 38th
  International Conference on Machine Learning, {ICML} 2021, 18-24 July 2021,
  Virtual Event}, volume 139 of \emph{Proceedings of Machine Learning
  Research}, pages 12945--12954. {PMLR}, 2021.
\newblock URL \url{http://proceedings.mlr.press/v139/zietlow21a.html}.

\end{thebibliography}
\newpage

\begin{appendix}
\part*{Appendix}
In the appendix we provide the proofs of our results and
we  discuss the relevant background. It is structured as follows.
We first introduce some mathematical background in Appendix~\ref{app:math} and
extend the function class definitions to Riemannian manifolds in Appendix~\ref{app:ext}. 
We discuss a general construction of spurious solutions in Appendix~\ref{app:spurious}.
Then we provide the proofs of our main results in
Sections~\ref{app:conformal} to \ref{app:volume}.
\section{Mathematical background}\label{app:math}
For the convenience of the reader we collect some mathematical definitions and notations.
All definitions can be found in standard textbooks.

\paragraph{Pushforward of measures.} For a measure $\mu$ on a (measurable) space $X$ and a measurable map, $f:X\to Y$ the pushforward measure $f_\ast \mu$ is defined by
$(f_\ast \mu)(A)=\mu(f^{-1}(A))$ for any measurable set $A\subset Y$. Here $f^{-1}(A)=\{x\in X\, | \;f(x)\in A\}$
denotes the preimage of $A$ under $f$. Sometimes the pushforward measure is denoted by $f^\#\mu$. 
Note that no further restrictions on $f$ are necessary, in particular $f$ does not need to be invertible. 
One important property that we will use frequently is the relation $(g\circ f)_\ast \mu=f_\ast( g_\ast \mu)$.

Note that if $s\sim \P$, i.e., $s$ has distribution $\P$ then $f(s)\sim f_\ast\P$. 
Indeed, this is obvious as $
P(f(s)\in A)=P(s\in f^{-1}(A))=\P(f^{-1}(A))$.
In the context of ICA it is convenient to mostly talk about distributions and push-forwards as we typically never observe pairs $(s, f(s))$.
For later usage we also recall the transformation formula for random variables.
If $f\in C^1(\R^n,\R^n)$ is an invertible diffeomorphism and 
$\Q=f_\ast \P$ where $\P$ and $\Q$ have density $p$ and $q$ respectively
then 
\begin{align}\label{eq:density_pushforward_2}
q(y) = p(f^{-1}(y)) |\Det D f^{-1}(y)|.
\end{align}
Note that the potentially more familiar version for random variables reads as follows.
Let
$X$ and $Y$ be random variables satisfying $Y=f(X)$, then their densities are related by
\begin{align}\label{eq:density_pushforward}
p_Y(y) = p_X(f^{-1}(y)) |\Det D f^{-1}(y)|.
\end{align}

\paragraph{Diffeomorphisms.} Let $U, V\subset\R^d$. A diffeomorphism 
from $U$ to $V$ is a bijective map $f:U\to V$
such that $f\in C^1(U,V)$ and $f^{-1}\in C^1(V, U)$.
Note that it is not sufficient to assume that $f$ is bijective and in $C^{1}(U,V)$, the classic counterexample being $f(x)=x^3$. 
A sufficient condition is that $Df(x)$ is an invertible matrix for all $x$.
Sometimes we loosely speak about diffeomorphisms $f:U\to \R^d$ which should be understood as $f$ being a diffeomorphism on its image $f(U)$.

\paragraph{Vector fields and flows}
Vector fields can be introduced nicely in the language of differential geometry. However, we think that for the purpose of this paper it is more appropriate to give a more down to earth discussion focused on $\R^d$. 
We refer to \cite{lee2003introduction} for a general introduction.
A vector field is a map $X:\R^d\to \R^d$. Our reason to consider vector fields is that they can be used to describe smooth transformations of $\R^d$. 
We will assume that $X$ is Lipschitz continuous. 
We define the flow of a vector field as a map $\Phi:\R\times \R^d\to \R^d$
such that
\begin{align}\label{eq:def_flow}
\Phi_0(x)=x,\qquad \partial_t \Phi_t(x)=X(\Phi_t(x)).
\end{align}
Let us remark concerning the notation that it is convenient to put the
$t$ argument below, i.e., we write $\Phi_t(x)=\Phi(t,x)$. Moreover, when applying differential operators $D$ they will by default only act on the spatial variable $x$, i.e., $D\Phi_t(x)$ denotes the derivative of the function $x\to \Phi_t(x)$ for a fixed $t$. 
Note that the solutions of differential equations can exhibit blow-up so $\Phi_t(x)$ might not be defined for all times $t$.
However, when we assume that $X$ is bounded the flow exists globally.

It can be shown 
that 
if $X$ is $k$ times differentiable then so is $\Phi_t$ and one can conclude
that then $\Phi_t$ is a diffeomorphism ($\Phi$ is bijective by the uniqueness of 
ordinary differential equations).
We will also consider time-dependent vector fields $X:(-T,T)\times \R^d\to \R^d$
where the flows can be defined similarly, replacing $X$ by $X_t$.

We are particularly interested in the action of flows on probability measures, i.e.,
we consider  the measures  $(\Phi_t)_\ast \P$ for some initial measure $\P$.
It can be shown that if the density of $\P$ is $p$ then the density 
$p_t$ of $(\Phi_t)_\ast \P$ satisfies the continuity equation
\begin{align}\label{eq:continuity}
\partial_t p_t + \Div(p_t X_t)= 0\quad \text{and}\quad p_0=p.
\end{align}
Here $\Div$ denotes the divergence which is defined by
$\Div f=\sum_i \partial_i f_i=\tr Df$.
One important consequence is that the flow of the vector field preserves the measure,
i.e., $(\Phi_t)_\ast \P$ iff $\Div(pX_t)=0$ for all $t$.
Moreover, the standard Lebesgue measure in $\R^d$ is preserved
if $\Div(X_t)=0$, i.e., divergence free vector fields generate volume preserving flows and vice versa.

\paragraph{Additional notation.}
We at some places use the notation $[n]=\{1,\ldots, n\}$. We also use the $\Onot$
notation. Recall that $f(x)=\Onot(g(x))$ as $x\to \infty$ means that there are constants $x_0$ and $C>0$ such that $f(x)\leq Cg(x)$ for $x\geq x_0$.
Recall that we introduced the notation $\cube=(0,1)^c$ in the main part and we denoted by $\nu$ the uniform measure on $\cube$.
We write iff as a shorthand for 'if and only if'.

{

\section{Function class definitions for general manifolds}\label{app:ext}

Here we extend the definition of the function classes to Riemannian manifolds. 
Riemannian manifolds are manifolds $M$ equipped with a metric $g$. For complete definitions we again refer to \cite{lee2003introduction}.
We denote the differential of a map $f:M\to N$ at $x\in M$
by $(\d f)_s:T_sM\to T_{f(s)}N$. As before, we use the notation $Df(s)\in \R^{d\times d'}$
for the usual derivative of a map $f:\R^d\to \R^{d'}$. 
The matrix $Df(s)$ is the representation of $(\d f)_s$ in the standard basis.
A smooth  map $f:M\to N$ between Riemannian manifolds $(M,g)$ and $(N,h)$ is conformal
if there is a function $\lambda:M\to \R_+$ such that
\begin{align}
   (f^\ast h)=\lambda g
\end{align}
where $f^\ast h$ denotes the pullback metric.
This means that for $X,Y\in T_xM$ 
\begin{align}
    h((\d f)_sX,(\d. f)_sY)=\lambda(s)g(X,Y).
\end{align}
Moreover, we observe that the adjoint $(df)_s^\ast$ satisfies by definition
\begin{align}
   g((\d f)_s^\ast (\d f)_s X, Y)=
   h((\d f)_sX,(\d f)_sY)=\lambda(s)g(X,Y).
\end{align}
We conclude that 
\begin{align}
    (\d f)_s^\ast (\d f)_s =\lambda(s)\cdot  \mathrm{Id}_{T_sM}.
\end{align}
In the case that $M=N=\R^d$ both equipped with the standard metric the definition
is seen to be equivalent to Definition~\ref{def:conformal} given in the main part.
When $N$ is a $d$-dimensional submanifold of $\R^{d'}$ with $d'>d$ 
then we obtain the pointwise condition 
\begin{align}
    Df(s)^\top Df(s)=\lambda(s)\mathrm{Id}_{d\times d},
\end{align}
i.e., $Df(s)$ has orthogonal columns with equal norm.
Note that the concatenation of conformal maps is conformal and the inverse of conformal diffeomorphisms is again conformal.

Next, we extend the definition of orthogonal coordinate transformations.
We remark that orthogonal coordinates are chart dependent so no coordinate free definition can be given. Thus we focus on the case where
the domain of the function is $\R^d$
which we equip with the standard metric
and we consider the standard orthogonal
coordinate vector fields which we denote by $e_i$.
Then we call a smooth  map $f:\R^d \to M$ for a Riemannian manifold $(M,g)$
an orthogonal coordinate transformation if for $i\neq j$ and all $s\in \R^d$
\begin{align}
    g((\d f)_s e_i, (\d f)_s e_j)=0.
\end{align}
Again, this condition can be equivalently written as
\begin{align}
  \langle e_i  (\d f)_s^\ast (\d f)_s e_j\rangle_{\R^d}=0 
\end{align}
for $i\neq j$.
If $M$ is a submanifold of $\R^{d'}$ with the metric induced from the standard Euclidean metric we obtain the natural generalization 
of Definition~\ref{def:ima}, i.e.,
\begin{align}
    Df(s)^\top Df(s)\in \diag(d).
\end{align}
An important remark is that orthogonal coordinates do not exist for all manifolds as there are obstructions. Manifolds with this property 
are called locally diagonalizable.
This is closely related to the representation capability of the function class.

}

\section{Measure preserving transformations and spurious solutions}\label{app:spurious}
In this section we review the construction of spurious solutions for the ICA problem. 
We assume that we consider ICA in the class $(\mc{F}, \mc{P})$
where $\mc{F}$ is a function class and $\mc{P}$ a class of probability measures.
We are interested in understanding the set of spurious solutions
for a pair $(f,\P)$, i.e., the set of pairs $(f', \P')\in \mc{F}\times\mc{P}$ such that
$f_\ast \P=f'_\ast \P'$. 
We now note that if we can construct $h$ such that
$h_\ast \P'=\P$ and set $f'=f\circ h$ we have  $f_\ast \P=f'_\ast \P'$.
Next we define the subset of right composable functions
\begin{align}
    \mc{F}^R=\{f\in \mc{F}|\; g\circ f\in \mc{F} \text{ for all $g\in \mc{F}$}\}
\end{align}
and similarly the subset of left-composable functions
\begin{align}
    \mc{F}^L=\{f\in \mc{F}|\; f\circ g\in \mc{F} \text{ for all $g\in \mc{F}$}\}.
\end{align}
We remark that if $\mc{F}$ is a group then obviously $\mc{F}^L=\mc{F}^R=\mc{F}$ 
and the problem reduces to finding measure preserving transformations in $\mc{F}$.
The classes $\mc{F}_{\mathrm{lin}}$, $\mc{F}_{\mathrm{nonlinear}}$, $\mc{F}_\conf$, and $\mc{F}_{\mathrm{volume}}$ are all groups. 
We will comment on  $\mc{F}_\ima$ below.

We note that if $h\in \mc{F}^R$ satisfies
$h_\ast \P'=\P$ then $f'=f\circ h \in \mc{F}$ and $f'_\ast \P'=f_\ast \P$
so this gives us a spurious solution. 
A specific case is given by $\P=\P'$ in which case we are looking for measure preserving transformations (MPTs) $h\in \mc{F}^R$.
Note that such $h$ for a certain $\P$ allows us to construct spurious solutions for all pairs $(f, \P)$ with arbitrary $f$.

This implies that if $(\mc{F},\mc{P})$ is identifiable with respect to $\mc{S}$
then any $h$ as above satisfies $h\in \mc{S}$ (if $\P'$ has full support, and otherwise there is a $h'\in \mc{S}$ such that $h$ and $h'$ agree in the support of $\P'$).

We consider an example. Let $\P$ be the distribution of the standard Gaussian and
$h=R$ for some $R\in \mathrm{O}(d)$. Then $h\in \mc{F}_{\mathrm{lin}}$
and $h_\ast \P=\P$ as the standard Gaussian is invariant under rotations.
However, such a linear measure preserving transformation does not exist
for other $\P\in \mc{M}_1(\R)^{\otimes d}$.
This is the reason that Gaussian distributions are excluded for linear ICA.

Next we observe (as we discussed in the main part) that  if the function class $\mc{F}$ is stable by right composition with arbitrary component-wise transformation we can turn any admissible latent distribution into another one. This means we can fix a reference measure (we will usually use $\nu$) and then  we can find (at least under suitable regularity assumptions) for $\P\in \mc{M}_1(\R)^{\otimes d}$
maps $h^{\P\to\nu},h^{\nu\to\P}\in \mc{F}^R$ such that
$h^{\P\to\nu}_\ast \P=\nu$, $h^{\nu\to\P}_\ast \nu=\P$.
If we can find an MPT $h\in \mc{F}^R$ such that $h_\ast \nu=\nu$
mixing the coordinates we can then find spurious solutions for any pair $(f, \P)$
by considering $f'=f\circ h^{\nu\to\P}\circ h\circ h^{\P\to\nu}$ because then $f'_\ast\P=f_\ast\P$ and
$f'\in \mc{F}$ by definition of $\mc{F}^R$.

For  the class $\mc{F}_{\mathrm{nonlinear}}$ such MPTs exist, we gave one construction in Lemma~\ref{le:const_rotations}. 
In Appendix~\ref{app:ima} we will sketch the proof of this result and discuss another construction based on divergence free vector fields.
For the class $\mc{F}_{\mathrm{volume}}$ we cannot arbitrarily transform the input distribution. However, we can still find coordinate mixing MPTs for every $\P$
(with smooth density). This will be proved in Appendix~\ref{app:volume}.
These constructions rule out any (meaningful) identifiability result for $\mc{F}_{\mathrm{nonlinear}}$ and $\mc{F}_{\mathrm{volume}}$.

There is also a slightly different approach to construct spurious solutions for a pair $(f, \P)$. For any  
MPT $h\in \mc{F}^L$ such that $h_\ast f_\ast\P=f_\ast\P$, i.e., transformations that preserve the observational distribution $\Q=f_\ast \P$, the function $h\circ f$ defines a spurious solution. Note that this gives spurious solutions for all pairs $(f,\P)$ such that $f_\ast \P$ follows the fixed distribution $\Q$.
This approach will allow us to construct spurious solutions in $\mc{F}_\ima$ for
some fine-tuned pairs $(f,\P)$ with $f\in \mc{F}_\ima$ (see the proof of Proposition~\ref{prop:non-uniqueness} in Appendix~\ref{app:ima}).
Note that because of their fine-tuning, such spurious solutions can arguably be considered pathological cases instead of key non-identifiability issues. This is inline with how such solutions a considered in the case of linear ICA.


Let us finally have a closer look at $\mc{F}_\ima$. 
It is quite straightforward to see that
\begin{align}
    \mc{F}_\ima^L = \mc{F}_\conf.
\end{align}
In particular, all rotations are contained in $\mc{F}_\conf^L$.
More interestingly, we have
\begin{align}
\begin{split}\label{eq:FR_ima}
    \mc{F}_\ima^R &=\{f\in \mc{F}_\ima |\, Df(x)=P(x)\Lambda(x) \;\text{for some $P(x)\in \perm(d)$, $\Lambda(x)\in \diag(d)$}\}
    \\
    &=
    \{ f\in \mc{F}_\ima|\, f=P\circ h\, \text{where $P\in \perm_\pm(d)$ and
$h:\R^d\to\R^d$ with }
\\
&\qquad\quad \text{ $h(x)=(h_1(x_1),\ldots, h_d(x_d))^\top$
for some $h_i\in C^1(\R,\R)$ with $h_i'>0$}\}.
    \end{split}
\end{align}
The first step can be seen using the chain rule and the definition of $\mc{F}_\ima$.
The second step follows from the fact that the permutation $P$ is necessarily constant for such a function. This again recovers the fact that OCTs can only be identified up to permutations and coordinate-wise reparametrisations. However, we also conclude that 
all $h\in \mc{F}_\ima^R$ 
act coordinate-wise (up to a permutation), i.e., do not prevent BSS. This shows that for $\mc{F}_\ima$
no completely generic spurious solution based on a single MPT mixing the coordinates exists. This is different from $\mc{F}_{\mathrm{nonlinear}}$. We emphasize that this does not rule out the existence of a fine-tuned spurious solution for every (or most) pairs $(f,\P)$ because then we only need to find $h$ such that $h_\ast \P=\P$ and $f\circ h\in \mc{F}_\ima$.
The relation $f\circ h\in \mc{F}_\ima$ of course holds for $h\in \mc{F}_\ima^R$ but there will, in general, be many more $h$ for a fixed $f$.

\section{Proof of Theorem~\ref{th:linear}}\label{app:linear}
We here, for completeness, give a proof of Theorem~\ref{th:linear}.
While this result is well known we 
think that it makes sense to include a condensed proof because it
contains many of the key steps of the more involved proof for conformal maps
in the next section
and it is not as well known as the proof based on Darmois-Skitovich Theorem
which does not generalise to nonlinear functions.

\begin{proof}[Proof of Theorem~\ref{th:linear}]
We assume that $x\indistribution As\indistribution A's'$. We first assume that the densities $p:\R^n\to \R$ and $q:\R^n\to \R$ 
of $s$ and $s'$ 
are $C^2$ functions and $p(x)>0$ for all $x\in \R^d$. We denote their distributions by $\mathbb{P}$ and $\mathbb{Q}$.
 By independence of the components we can write
$p(x)=\prod_i p_i(x_i)$, $q(x)=\prod_i q_i(x_i)$ for some $C^2$ functions
$p_i$ and $q_i$.
By assumption we conclude that $((A')^{-1}A)_\ast \mathbb{P}=\mathbb{Q}$.
We denote $B = ((A')^{-1}A)^{-1}=A^{-1}A'$ so that $B^{-1}_\ast\P= \Q$ and the transformation formula for densities implies that 
\begin{align}\label{eq:density_linear}
 q(y) = p(By) \, |\Det B|\quad \Rightarrow
 \quad \sum_k \ln(q_k(y_k)) = \sum_k \ln(p_k((By)_k) + \ln|\Det B|.
\end{align}
The main idea of the proof is to use the observation that for $i\neq j$
and all $y$ such that $q(y)\neq 0$
\begin{align}\label{eq:key_ident}
\partial_i\partial_j \ln(q(y)) = 
\partial_i\partial_j \sum_k \ln(q_k(y_k))=0
\end{align}
i.e., mixed second derivatives of the log density vanish. We plug  \eqref{eq:density_linear} into this equation and get
\begin{align}
\partial_i \ln(q(y)) &= \sum_k B_{ki} \ln(p_k)'((By)_k),
\\
\partial_i\partial_j \ln q(y) &= \sum_k B_{kj}B_{ki}\ln(p_k)''((By)_k).
\end{align}
We now denote $x=By$.
Then we get (using that $B$ is invertible) that for all $x$ such that $p(x)\neq 0$
\begin{align}\label{eq:cond_linear_new}
0 = \sum_k B_{kj}B_{ki}\ln(p_k)''(x_k).
\end{align}
Varying one $x_k$ individually we conclude that each summand is constant which implies that either $B_{kj}B_{ki}=0$ for all $i\neq j$ or $\ln(p_k)''(x_k)$
is constant.
It is straightforward to see that the only probability distribution 
with $\ln(p)''=\kappa$ for some constant $\kappa$ are Gaussian distributions.
Indeed, note that $\ln(p)''=\kappa$ for some constant implies that
$p(x)=\exp(\kappa x^2/2+c_1x+c_2)$. If $p$ is the density of a probability distribution 
and in particular integrable we see that this implies $\kappa<0$
and $p$ is a Gaussian density with $\kappa=-\sigma^{-2}$.
Note that by assumption at most one component of $\P$ is Gaussian, w.l.o.g., $k=1$.

As  $p_k$ is not a Gaussian for $k>1$ and thus $\ln(p_k)''$ not constant we conclude
\begin{align}\label{eq:Bk}
B_{kj}B_{ki}=0
\end{align}
 for $i\neq j$ and all $k>1$.
 Plugging this into \eqref{eq:cond_linear_new} we obtain 
 \begin{align}
    0= B_{1j}B_{1i}\ln(p_1)''(x_1).
 \end{align}
 Note that if $p_1$ is a Gaussian density then $\ln(p_1)''=-\sigma^{-2}\neq 0$
 so we conclude that in any case $B_{1i}B_{1j}=0$ for $i\neq j$, i.e., 
 \eqref{eq:Bk} holds as well for $k=1$.
 
 In other words, at most one entry of every row of $B$ is non-zero. This implies that
 $B=P\Lambda$ for some $P\in \mathrm{Perm}(d)$ and $\Lambda\in \mathrm{Diag}(d)$.
 This was to be shown.
 
 Let us clarify what happens when there is more than one Gaussian component. 
 In this case there  might be multiple constant non-zero terms in \eqref{eq:cond_linear_new}
whose 
contributions can cancel and we cannot conclude that \eqref{eq:Bk} holds
for all $k$. This recovers the well known non-uniqueness for Gaussian variables.

It remains to extend the result to distributions whose density is not twice differentiable.
By standardizing $s$ and $s'$
we can assume that $B\in \mathrm{O}(n)$. Indeed, when the covariances of $s$ and $s'$ satisfy $\Sigma_s=\Sigma_{s'}=\mathrm{Id}_d$ then $s=Bs'$ implies $B^\top B=B^T\Sigma_{s'}B=
\Sigma_s=\mathrm{Id}$.
Then 
$s'\stackrel{\mc{D}}{=} Bs$ implies that for an independent standard normal
\begin{align}
B(s+N) \stackrel{\mc{D}}{=} s'+BN\stackrel{\mc{D}}{=}s'+N
\end{align}
where we used that standard normal variables are invariant under orthogonal maps.
Note that $s+N\in \mc{P}_\lin$. Indeed, $(s_i, N_i) \indep (s_j, N_j)$ implies
$(s_i+N_i)\indep (s_j+N_j)$ for $i\neq j$. If $s_i+N_i$ is Gaussian then
$s_i$ is Gaussian (consider, e.g., the Fourier transform) so $s+N$ also has at most one Gaussian component.
The density of $s+N$ is smooth and pointwise positive, so we can apply the reasoning above to $s+N$ and $s'+N$ and conclude
$B\in \mc{S}_\lin$.
 \end{proof}

\section{Proofs for the results on conformal maps}\label{app:conformal}

In this section we give the proofs for Section~\ref{sec:conformal}.
First, we consider $d>2$ and then the special case $d=2$.
\subsection{Proof of Theorem~\ref{th:conformal}}
The proof of Theorem~\ref{th:conformal} uses similar ideas as the proof for Theorem~\ref{th:linear}, however, the calculations
are more involved.
The key ingredient is the classification of all conformal maps 
in Theorem~\ref{th:liouville}. From there we see that we already dealt with the linear case in Theorem~\ref{th:linear}, so
it is sufficient to focus on the case of nonlinear transformations.
Recall that the Moebius transformations introduced in \eqref{eq:struct_conformal}
are given by
    \begin{align}\label{eq:struct_conformal2}
f(x)=b + \alpha \frac{A(x-a)}{|x-a|^\varepsilon}
\end{align} 
where $b, a \in \R^d$, $\alpha\in\R$, $A\in \mathrm{O}(d)$, and $\varepsilon\in \{0, 2\}$.
In particular, the Theorem will be an easy consequence of the following lemma.

\begin{lemma}\label{le:inversion}
Suppose $g:\R^d\to\R^d$ is a nonlinear Moebius transformation, i.e., a map as in \eqref{eq:struct_conformal2} with $\varepsilon=2$. Let $s$, $s'$ be random variables whose
distributions are in $\mc{P}_\conf$. Then $s\stackrel{\mc{D}}{\neq}  g(s')$.
\end{lemma}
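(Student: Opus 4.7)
I proceed by contradiction. Suppose $s\indistribution g(s')$ with $s'\sim\P'=\prod_i\P_i'\in\mc{P}_\conf$ and $s\sim\Q=\prod_j\Q_j\in\mc{P}_\conf$; write $p=\prod_ip_i$ and $q=\prod_jq_j$ for the corresponding densities, which are positive and $C^2$ on open product rectangles $U,V\subset\R^d$ respectively. Coordinate-wise translations on source and observation space and isotropic rescalings preserve both $\mc{F}_\conf$ and the product structure of $p,q$; absorbing the parameters $a,b,\alpha$ of \eqref{eq:struct_conformal2} in this way, I may reduce to $g(x)=Ax/|x|^2$ for some $A\in\mathrm{O}(d)$, with inverse $h(y)=A^\top y/|y|^2$. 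Since $q=(p\circ h)\,|\det Dh|$ almost everywhere, I can pick an open set $W\subset V\cap h^{-1}(U)$ on which both sides are $C^2$ and positive.

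On $W$ the log-identity $\ln q(y)=\sum_i\ln p_i(h_i(y))+\ln|\det Dh(y)|$ holds. Applying $\partial_j\partial_k$ with $j\neq k$ and using that the product structure of $q$ makes $\partial_j\partial_k\ln q\equiv 0$, I obtain
\begin{align*}
\sum_i[\ln p_i]''(h_i(y))\,(\partial_jh_i)(\partial_kh_i)+\sum_i[\ln p_i]'(h_i(y))\,\partial_j\partial_k h_i+\partial_j\partial_k\ln|\det Dh(y)|=0.
\end{align*}
Because $h$ is the explicit Moebius inversion, $|\det Dh(y)|=|y|^{-2d}$ and each of $\partial_jh_i$, $\partial_j\partial_k h_i$, and $\partial_j\partial_k\ln|\det Dh|$ is a rational function of $y$ whose denominator is a fixed power of $|y|^2$ and whose numerator is an explicit polynomial in $y$ and entries of $A$.

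I would then reparametrize by $x=h(y)$, so that $h_i(y)$ becomes $x_i$ and $y=g(x)=Ax/|x|^2$. In the variable $x$ the identity above becomes a polynomial identity on $h(W)\subset U$, where the coefficients multiplying $[\ln p_i]''(x_i)$ and $[\ln p_i]'(x_i)$ are known rational functions of the full vector $x$. Shrinking $h(W)$ to an open product rectangle I can vary $x_1,\ldots,x_d$ independently; matching in the variables $x_l$ with $l\neq i$ then isolates, for each $i$, an explicit polynomial ODE in $x_i$ alone for $\ln p_i$. These ODEs admit among probability densities only quadratic $\ln p_i$, i.e., each $\P_i'$ is Gaussian; but then every component of $\P'$ is Gaussian, contradicting the restriction in $\mc{P}_\conf\subset\mc{P}_\lin$ that at most one is. The same argument applied to the analogous identity in the $y$-variables gives the same conclusion for the $\Q_j$.

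The step I expect to be most delicate is the decoupling in the third paragraph. The displayed identity couples all indices $i$ through the shared denominator $|y|^2$ appearing in every $h_i(y)=(A^\top y)_i/|y|^2$, so direct variation of $y$ does not isolate any single $[\ln p_i]$. The reparametrization $x=h(y)$ converts the sum-coupling through $h_i(y)$ into a coupling through $x_i$ alone in the arguments of $[\ln p_i]'',[\ln p_i]'$, but the $x$-dependent rational coefficients still mix the indices nontrivially. The careful bookkeeping of powers of $|x|^2$, of monomials in $x$, and of the rotation $A$ required to separate out a single-variable ODE per component is the main technical challenge; once this is done, the Gaussian rigidity argument from the proof of Theorem~\ref{th:linear} closes the proof.
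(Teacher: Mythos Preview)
Your outline follows the paper's overall strategy up to the mixed--derivative identity and the change of variable $x=h(y)$, but the argument breaks down at the step you yourself flag as delicate, and the intended endgame is actually incorrect.

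First, the decoupling cannot be done on a small product rectangle $h(W)$ by ``matching in the variables $x_l$ with $l\neq i$''. The rational coefficients multiplying $[\ln p_i]''(x_i)$ and $[\ln p_i]'(x_i)$ involve $|x|^2$ and $(A^{-1}x)_j$, so varying a single coordinate on a bounded interval does not isolate the $i$-th summand; there is no polynomial-identity separation available here. The paper's route is to first use a geometric lemma (their Lemma~\ref{le:geometry}) to upgrade the local $C^2$-positivity of $p$ from a small box to an entire quadrant, and then to send one coordinate $x_r\to\infty$. Only this asymptotic regime produces the almost-linear relation \eqref{eq:limit_cond_simplified} in which the separation of indices is possible. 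Without the global extension of regularity you cannot take that limit, and without the limit you do not get a clean single-variable equation.

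Second, and more seriously, your claimed conclusion ``these ODEs admit only Gaussian densities'' is false in the relevant case. The Gaussian rigidity from the linear proof only shows that components corresponding to rows of $A$ with at least two nonzero entries must be Gaussian; the paper uses this (together with a further cancellation argument, Steps~6--7) to force $A$ to be a signed permutation. But once $A$ is a permutation the identity simplifies to \eqref{eq:final_a=id}, whose solutions are $p_j(x)\propto x^{\alpha_j}e^{-\kappa x^2/4}$, \emph{not} only Gaussians. The contradiction the paper reaches is that compatibility with the factorization of $q$ forces $\kappa=0$ and $\sum_j\alpha_j=-d$, leaving pure power laws $x^{\alpha_j}$ which are never integrable. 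So the ``at most one Gaussian'' clause of $\mc{P}_\conf$ is not what closes the argument; non-integrability does. Your proposal would not detect the permutation-$A$ obstruction at all.
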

Let us quickly show how it implies Theorem~\ref{th:conformal}
before we prove this lemma.
\begin{proof}[Proof of Theorem~\ref{th:conformal}]
We use the same notation as in the proof of Theorem~\ref{th:linear}. 
We denote the distribution of $s$ and $s'$ by $\mathbb{P}$ and $\mathbb{Q}$
and we assume  $x\indistribution f(s)\indistribution f'(s')$
with $f, f':\R^d\to M$ conformal (see Appendix~\ref{app:ext} for the definition). 
This implies that $(f'^{-1}f)_\ast \mathbb{P}=\mathbb{Q}$.
We denote $g = ((f')^{-1}f)^{-1}=f^{-1}f'$ so that $\P=g_\ast \Q$.
{
Now we make the simple but important observation that
$g$ as a concatenation of a conformal map and the inverse of a conformal map is 
a conformal map from $\R^d$ to $\R^d$.}
Thus, we  can apply Liouville's Theorem to $g$ (recalled in Theorem~\ref{th:liouville})
which implies that 
\begin{align}\label{eq:struct_g}
g(y) = b + \frac{\alpha A(y-a)}{|y-a|^\varepsilon}
\end{align}
where $b,a\in \R^d$, $\alpha\in \R$, $A\in \mathrm{O}(d)$, $\varepsilon\in \{0,2\}$.
Using Lemma~\ref{le:inversion} we conclude that $\varepsilon=0$ and $g$ is linear.
Then we can apply Theorem~\ref{th:linear} (using that $\mc{P}_\conf\subset \mc{P}_\lin$) and conclude that $\alpha A=P\Lambda$ for a permutation matrix and a diagonal matrix $\Lambda$. Since $g$ is conformal we have that $A$ is orthogonal
and all eigenvalues have absolute value 1 which implies that $\Lambda_{ii}=\pm \alpha$
for $1\leq i\leq d$ and thus $A\in M_{\mathrm{perm},\pm}(\R^{d\times d})$.
\end{proof}

To prove Lemma~\ref{le:inversion} we need one technical result 
that shows that local properties of the density $p_i$ (i.e., properties that hold 
for $x_i\in O_i$ for some  non-empty open sets $O_i$) in fact hold
for all $x_i\neq 0$.
This will be based on the nonlinearity of the map $g$
combined with the factorisation $p(x)=\prod_i p_i(x_i)$ of the densities.
An illustration can be found below in Figure~\ref{fig:conformal}.

\begin{lemma}\label{le:geometry}
Let $O=O_1\times \ldots\times O_d\subset \R^d$ 
and $U=U_1\times \ldots \times U_d\subset \R^d$ where $O_i$ and $U_i$ are non-empty open sets. Let $g(x)= Ax/|x|^2$ for an orthogonal matrix $A$. Assume that $g(O)=U$.
Then $U_i$ is either $(0,\infty)$, $(-\infty,0)$
$(-\infty,0)\cup(0,\infty)$, or $(-\infty,\infty)$.
Moreover, if the $i$-th row of $A$ is not equal to a (signed) standard basis vector then $U_i=\R$.
\end{lemma}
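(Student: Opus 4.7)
The plan is to exploit the product structure of both $O$ and $U$ together with the explicit form of $g$, focusing on what $g$ does to coordinate hyperplanes. First, I would observe that $g$ is a global diffeomorphism of $\R^d\setminus\{0\}$ with inverse $g^{-1}(y)=A^\top y/|y|^2$, so the assumption $g(O)=U$ forces $0\notin O$ and $0\notin U$, and $g\colon O\to U$ is a diffeomorphism that extends to a homeomorphism of closures in $\R^d\setminus\{0\}$ carrying $\partial O\setminus\{0\}$ onto $\partial U\setminus\{0\}$.

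Next, I would compute the image of a coordinate hyperplane under $g$. Writing $y=g(x)$, the condition $x_k=a$ is equivalent to $c_k\cdot y=a|y|^2$, where $c_k$ is the $k$-th column of $A$. For $a=0$ this cuts out the hyperplane $\{c_k\cdot y=0\}$ through the origin, while for $a\neq 0$, completing the square and using $|c_k|=1$ yields the sphere $|y-c_k/(2a)|^2=1/(4a^2)$ through the origin. Symmetrically, $g^{-1}(\{y_i=c\})$ is $\{v_i\cdot x=0\}$ if $c=0$ and a sphere through the origin if $c\neq 0$, where $v_i$ denotes the $i$-th row of $A$.

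The main step is to match boundaries. Since $O=\prod_k O_k$ is a product of nonempty open subsets of $\R$, the boundary $\partial O$ is a union of relatively open $(d-1)$-dimensional pieces of axis-aligned hyperplanes $\{x_k=a\}$, one for each finite endpoint $a$ of a connected component of $O_k$; the same holds for $\partial U$. At an interior point of such a face of $O$, the homeomorphism $g$ sends it locally into a face of $U$, hence locally into some hyperplane $\{y_i=c\}$; but by the previous step the image is also locally a piece of a sphere through the origin (if $a\neq 0$) or of a hyperplane through the origin (if $a=0$). A relatively open $(d-1)$-dimensional piece of a sphere cannot lie in a hyperplane, so $a=0$. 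Consequently every finite endpoint of every component of $O_k$ is $0$, and each $O_k$ must be a union of sets drawn from $\{(0,\infty),(-\infty,0),\R\}$, leaving exactly the four possibilities listed in the lemma. Repeating the argument with $g^{-1}$ in place of $g$ proves the analogous statement for each $U_i$.

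For the final claim, suppose the $i$-th row $v_i$ of $A$ is not $\pm e_k$ for any $k$, and assume for contradiction that $U_i\neq\R$. Then $0$ is a boundary point of $U_i$, so $\partial U$ contains a $(d-1)$-dimensional piece of $\{y_i=0\}$; its preimage under $g$ is a piece of $\{v_i\cdot x=0\}$ and is also a face of $\partial O$, which by the boundary-matching step lies in some $\{x_k=0\}$. Two hyperplanes through the origin sharing a $(d-1)$-dimensional open set must coincide, so $\{v_i\cdot x=0\}=\{x_k=0\}$, forcing $v_i=\pm e_k$ since $|v_i|=1$, a contradiction. The main delicate point, where I would be careful, is the ``a piece of sphere is not contained in a hyperplane'' step: $O_k$ need not be connected so $\partial O$ has several components that all have to be handled at interior points, and one should verify that each boundary face does carry positive $(d-1)$-measure (which follows from the other $O_j$ being nonempty open).
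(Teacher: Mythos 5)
Your route is genuinely different from the paper's formal proof: the paper never matches boundary faces, but instead takes a box $R\subset O$, compares $g(R)$ with its bounding box $\Pi(g(R))\subset U$ and its convex hull, and shows by an explicit computation with inverted corner points that the box can be pushed out to infinity, so that $O$ and $U$ are unions of quadrants; the ``boundaries map to spheres'' picture you use appears in the paper only as the informal heuristic preceding the careful proof. The issue is that your version of that heuristic has a genuine gap at the key step. You assert that $\partial O$ (and $\partial U$) ``is a union of relatively open $(d-1)$-dimensional pieces of axis-aligned hyperplanes, one for each finite endpoint of a connected component,'' and deduce that $g$ maps a face of $\partial O$ \emph{locally into a single hyperplane} $\{y_i=c\}$. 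Neither claim holds for general nonempty open $O_k,U_i\subset\R$: $\partial U_i$ is merely a closed set with empty interior (it can be Cantor-like, and can contain accumulation points that are endpoints of no component), several coordinates of the image point can lie on the boundary of their respective factors simultaneously, and you have no control over where $g$ sends a face. So the inclusion you actually get is only $g(F)\subset\partial U\subset\bigcup_{i}\pi_i^{-1}(\partial U_i)$, and the step ``hence locally into some hyperplane $\{y_i=c\}$'' fails as written; your flagged worry (positive $(d-1)$-measure of faces, disconnected $O_k$) is not the real difficulty.

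The argument can be repaired, and then it does give a clean alternative proof: for any $a\in\partial O_k\setminus\{0\}$ the set $F_a=\{x:\,x_k=a,\;x_j\in O_j\ (j\neq k)\}$ is a nonempty subset of $\partial O$, and $g(F_a)$ is a nonempty relatively open piece of a sphere through the origin contained in $\bigcup_i\pi_i^{-1}(\partial U_i)$. Each coordinate function is non-constant on every nonempty open piece of a sphere (its critical points are two antipodal points), so $\pi_i$ of any such piece contains an interval; since $\partial U_i$ is closed and nowhere dense, each set $g(F_a)\cap\pi_i^{-1}(\partial U_i)$ is closed and nowhere dense in $g(F_a)$, and Baire's theorem over the finitely many indices $i$ yields a contradiction. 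Hence $\partial O_k\subset\{0\}$ for every $k$, which directly gives the four admissible forms (and symmetrically for $U_i$ via $g^{-1}$); your final step for the ``moreover'' part then goes through, since once $\partial O\subset\bigcup_k\{x_k=0\}$ an open piece of the hyperplane $\{v_i\cdot x=0\}$ inside this finite union forces $\{v_i\cdot x=0\}=\{x_k=0\}$, i.e.\ $v_i=\pm e_k$. Without some argument of this kind (Baire category or a projection/measure argument), the proposal as written does not establish the lemma; the paper's convex-hull construction is precisely a way of avoiding having to say anything about the fine structure of $\partial O$ and $\partial U$. A last cosmetic point: your remark that $g(O)=U$ forces $0\notin U$ is literally true but should be treated with care, since in the paper's application $U$ may contain the origin and the identity $g(O)=U$ is only meant up to this single point.
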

Informally the result follows from the fact that coordinate planes $\{x_i=c\}$
are mapped to spheres by $g$ except for $c=0$. We assume that the boundary of $O$ and $U$ is the union of subsets of hyperplanes. 
 However, $g(O)=U$ then implies
that the boundaries of $O$ and $U$ are mapped to each other. Since the boundaries
of both sets are the union of subsets of hyperplanes that are mapped to hyperplanes we conclude their boundaries  must be a subset of the coordinate axes hyperplanes $H_i=\{x:\, x_i=0\}$ (because they would be mapped to spherical caps by $g$).
For completeness, we give a careful proof below.
Let us highlight that this lemma essentially rules out counterexamples with finite support because we can apply the lemma to the set $\{x\in \R^d: p(x)>0\}$. This is in contrast
to the 2-dimensional case where conformal maps between any two rectangles exist
making the proof of the corresponding statement below more difficult.

We now prove Lemma~\ref{le:inversion}.
\begin{proof}[Proof of Lemma~\ref{le:inversion}]
The proof is a bit lengthy, so we first give an informal overview of the main steps.
As before, we denote the distribution of $s$ and $s'$ by $\mathbb{P}$ and $\mathbb{Q}$. 
We argue by contradiction, so we assume that
 $ \P =g_\ast \mathbb{Q}$.
By assumption $g(x)=b+\alpha A(x-a)/|x-a|^2$.
We now proceed in several steps that 
constrain the structure of $g$. Let us briefly describe the steps of the proof.

In Step 1 and 2 we eliminate the trivial symmetries of the measure and show 
that the mild local regularity assumption on the measures imply global regularity.

Then, in Steps 3 and 4 we derive in \eqref{eq:main_condition} a condition similar to \eqref{eq:cond_linear_new} but more involved.

To exploit this condition we look in Steps 5 and 6 at certain limiting
regimes where the terms become much simpler and almost reduce to the condition of the linear case. 
This allows us to conclude that $A$ is a permutation matrix in 
Step 7. 

Using that $A$ is a permutation matrix in \eqref{eq:main_condition} we get in Step 8 a much simpler relation
that almost factorizes. This allows us to derive a simple differential equation
in Step 9 which restricts the potential densities to a simple parametric form. 
 This allows us to conclude.

\paragraph{Step 1: Elimination of trivial symmetries.}
First we show that we can assume $a=b=0$ and $\alpha=1$.
We denote the shifts on $\R^d$ by $T_a(x)=x+a$  and the dilations
$D_\alpha(x)=\alpha x$. Then we can rewrite
$g = T_b \circ D_\alpha\circ g_0 \circ T_{-a}$ with $g_0(x)= Ax/|x|^2$ and therefore 
$(g_0)_\ast (T_{-a})_\ast \mathbb{Q}=(D_{\alpha^{-1}}\circ T_{-b})_\ast \mathbb{P}$. 
Since shifts and dilations preserve the class $\mc{P}_\conf$
it is sufficient to show the result for $g=g_0$.
To simplify the notation we drop the $0$ in the following and just assume $g(x)=Ax/|x|^2$.



\paragraph{Step 2: Support and smoothness of distributions.}
The goal of this step is to show that under the assumption of Lemma 2
the density of $\P$ and similarly of $\Q$ is positive and $C^2$ away from the coordinate hyperplanes $\{x_i=0\}$ for a union of quadrants, while it vanishes on the remaining quadrants. This will be a consequence of Lemma~\ref{le:geometry}.
Let $U_i=\mathrm{Int}(\mathrm{supp}\P_i)$. By assumption, $\P\in \mc{P}_\conf$ which entails $U_i\neq \emptyset $.
Then $U=U_1\times \ldots\times U_n=\mathrm{Int}(\mathrm{supp}\, \P)$.
Define $O_i$ similarly	for $\Q$. The relation $g_\ast \Q=\P$ implies
$g(O)=U$. Applying Lemma~\ref{le:geometry} we conclude that $U$ is the union of
quadrants. 

The same argument will imply that $p$ is actually $C^2$ away from the coordinate planes.
We  consider the interior of the set of points where $\P$ has a twice differentiable and positive density and call it $U'$. For $x\in U'$ there is a density $p$ in a neighbourhood of $x$ and it factorizes by the independence assumption. The relation 
\begin{align}
    \partial_i^2 p(x) /p(x) = \partial_i^2p_i(x_i)/p_i(x_i)
\end{align}
implies that
then $p_i$ is twice differentiable at $x_i$. Vice-versa, if all $p_i$ are twice differentiable at $x_i$ then $p$ is twice differentiable at $x=(x_1,\ldots,x_d)$.
This implies that $U'=U_1'\times\ldots\times U_d'$ for some open sets $U_i'$.
By definition of $\mc{P}_\conf$ the sets $U_i'$ are non-empty.
Similarly, we define $O'$. The relation \eqref{eq:density_pushforward}
for the densities $p$ and $q$
implies, together with the smoothness of $g$, that 
$g(O')=U'$. Then we apply again Lemma~\ref{le:geometry} to conclude that 
$p$ and $q$ are $C^2$ functions away from the hyperplanes $\{x_i=0\}$ 
and if the density is non-zero at a point in a quadrant then it is positive in the complete interior of the quadrant.

Finally, $U_i'=\R$ if
the $i$-th row of $A$ has more than one non-zero entry (again by Lemma~\ref{le:geometry}). By definition this means that
$p_i\in C^2(\R)$ and $p_i(x)>0$ for such $i$ and all $x$.

Let us emphasize here, that we already finished the proof of Lemma~\ref{le:inversion} for probability distributions with bounded support. 
This is more difficult in dimension 2 because there are two-dimensional conformal functions mapping rectangles to rectangles. We will  
consider this in the proof of Theorem~\ref{th:conf2} below.

A major step in the proof is to show that $A$ is a permutation matrix under the assumptions of the lemma. 
We define the index set $I\subset [d]$ as the set of all indices $i$ such
that the $i$-th row of the matrix $A$ has only one non-zero entry.
Our goal is to show that $I=[d]$. We have shown so far that $p_k$ is positive
and twice differentiable if $k\notin I$.

%
%

The proof in the linear case relied on the relation \eqref{eq:cond_linear_new}.
We now derive a similar relation for non-linear Moebius transformations.
\paragraph{Step 3: Derivative formulas.}
For future reference we note (using $A\in \mathrm{O}(n)$) that
(for $i\neq j$)
\begin{align}\label{eq:dg}
(Dg(y))_{kj}&=\partial_j g_k(y) =\left( \frac{A}{|y|^2} - 2\frac{Ay\otimes y}{|y|^4}\right)_{kj}
= \frac{A_{kj}}{|y|^2} - 2\frac{(Ay)_k y_j}{|y|^4},
\\ \label{eq:ddg}
(\partial_i\partial_j g_k)(y)&= -2\frac{A_{kj}y_i+A_{ki}y_j}{|y|^4}+\frac{8(Ay)_ky_iy_j}{|y|^6}
\\ \label{eq:det_dg}
\Det(Dg(y))&=
\Det \frac{A}{|y|^2} \Det\left(\mathrm{Id}-2\frac{y\otimes y}{|y|^2}\right)
=-|y|^{-2d}.
\end{align}

 \paragraph{Step 4: Derivation of a condition for the densities} 
 We apply the same reasoning as 
in the proof of Theorem~\ref{th:linear} to derive partial differential equations
for the density $p$.
The condition $s=g(s')$, or equivalently $g^{-1}(s)=s'$
and the density relation \eqref{eq:density_pushforward} imply 
\begin{align}
q(y)=p(g(y))|\Det \nabla g(y)|.
\end{align}
This implies
 \begin{align}
 q(y) &= p\left( g(y)\right) |\Det \nabla (Ay|y|^{-2} )|=p\left(g(y)\right) \, |y|^{-2d}
\\
 \Rightarrow \sum_k \ln(q_k(y_k))
 &= \sum_k \ln(p_k(g_k(y))) -2d \ln(|y|).
 \end{align}
 We calculate for $i\neq j$
 \begin{align}
 \begin{split}
 \partial_i \ln(q(y)) &= \sum_k (\partial_i g_k)(y) (\ln p_k)'(g(y))
 -2d \partial_i\ln(|y|),
 \\
0= \partial_j\partial_i \ln(q(y)) &=
-2d \partial_j\partial_i \ln(|y|)
+\sum_k (\partial_i\partial_j g_k)(y) (\ln p_k)'(g_k(y))
\\
&+\sum_{k}(\partial_ig_k)(y) (\partial_j g_k)(y)(\ln p_k)''(g_k(y)).
 \end{split}
 \end{align}
 Evaluating the derivatives using \eqref{eq:dg} and \eqref{eq:ddg} 
 we get
 \begin{align}
 \begin{split}
 0 
&=  4d\frac{y_iy_j}{|y|^4} 
 \\
 &\quad +
 \sum_k 
 \left(\frac{8(Ay)_ky_iy_j}{|y|^6}-2\frac{A_{kj}y_i+A_{ki}y_j}{|y|^4}
 \right) \ln(p_k)'(g_k(y))
\\
&\quad  +
\sum_{k}
\left(\frac{A_{kj}}{|y|^2} - 2\frac{(Ay)_k y_j}{|y|^4}\right)
\left(\frac{A_{ki}}{|y|^2} - 2\frac{(Ay)_k y_i}{|y|^4}\right) \ln(p_k)''(g_k(y)).
 \end{split}
 \end{align}
Finally we express the variable $y$ through $x=g(y)=Ay|y|^{-2}$. Note that then $|y|=|x|^{-1}$ and 
$y=A^{-1}x|x|^{-2}$. Plugging this in the last equation we get
\begin{align}
 \begin{split}\label{eq:main_condition}
 0 
&=  4d(A^{-1}x)_i(A^{-1}x)_j
 \\
 &\quad +
 \sum_k \left(8x_k(A^{-1}x)_i(A^{-1}x)_j-2|x|^2\left(A_{kj}(A^{-1}x)_i+A_{ki}(A^{-1}x)_j\right)
 \right) \ln(p_k)'
\\
&\quad  +
\sum_{k}
\left(A_{kj}|x|^2 - 2x_k (A^{-1}x)_j\right)
\left(A_{ki}|x|^2 - 2x_k (A^{-1}x)_i\right) \ln(p_k)''
\\
&=4d A_{mi} x_m A_{lj}x_l
 \\
 &\quad +
 \sum_k \left(8x_kA_{mi} x_m A_{lj}x_l -2|x|^2\left(A_{kj}A_{mi} x_m+A_{ki}A_{lj}x_l\right)
 \right) \ln(p_k)'
\\
&\quad  +
\sum_{k}
\left(A_{kj}|x|^2 - 2x_k A_{lj} x_l\right)
\left(A_{ki}|x|^2 - 2x_k A_{mi} x_m\right) \ln(p_k)''
 \end{split}
\end{align} 
where we used $A^{-1}=A^\top$ as $A$ is orthogonal and we used Einstein summation convention to sum over indices that appear twice (we kept the sum over $k$ for better readability).
Note that this expression is not homogeneous in $x$. 

\paragraph{Step 5: Simplifications as $x_r\to\infty$.}
We fix an index $1\leq r\leq d$. 
The strategy is now to send $x_r\to \infty$ while keeping the other coordinates bounded. We can assume by reflecting coordinates 
that the quadrant $\{x_i>0, \forall i\}$ is contained in the support of $\P$ and 
$p$ has a positive $C^2$ density there.
We can then rewrite \eqref{eq:main_condition} 
\begin{align}
\begin{split}
0 &=\Onot (x_r^2)  +(8x_r^3A_{ri}A_{rj}-4x_r^3A_{rj}A_{ri})\ln(p_r)'+\Onot(x_r^2\ln(p_r)')+
\Onot(x_r^3)\\
&\quad+(A_{rj}x_r^2-2x_rA_{rj}x_r)(A_{ri}x_r^2-2x_rA_{ri}x_r)\ln(p_r)''
+\Onot(x_r^3\ln(p_r)'')
\\
&\quad
+\sum_{k\neq r} A_{kj}A_{ki}x_r^4\ln(p_k)''+\Onot(x_r^3)
\\
&=
x_r^4 \sum_k A_{kj}A_{ki}\ln(p_k)''+4x_r^3 \ln(p_r)' +\Onot\left(x_r^3(1+\ln(p_r)'')+x_r^2\ln(p_r)'\right).
\end{split}
\end{align}
We conclude that
\begin{align}\label{eq:cond_x_r}
0 = \sum_k A_{kj}A_{ki}\ln(p_k)''+\frac{4A_{ri}A_{rj} \ln(p_r)'}{x_r} +\Onot\left(\frac{1+\ln(p_r)''}{x_r}+\frac{\ln(p_r)'}{x_r^2}\right).
\end{align}
By varying $x_k\neq x_r$ this almost implies that $A_{kj}A_{ki}\ln(p_k)''=c_k$ for some constant
whenever $p_k>0$ and twice differentiable.  However, for this we need to show that
the terms hidden in  $\Onot(\cdot)$ are really negligible, i.e.\
$\ln(p_r)''$ and $\ln(p_r)'/x_r$ are bounded as $x_r\to\infty$ so that 
the remainder becomes $o(1)$ which we will establish. 

Note that if such a relation could be derived we could conclude, similarly to the linear case, that $A$ is a permutation matrix.

\paragraph{Step 6: Boundedness of $\ln(p_r)''$ and $\ln(p_r)'/x_r$.}
Recall that $I$ is the set of indices such that the $i$-th row of $A$ has only one non-zero entry. Let $r\notin I$ and pick $j, i$ such that $A_{rj}A_{ri}\neq 0$.
Fix all coordinates $x_k$ except $x_r$ so that $p_k$ is positive and twice differentiable at $x_k$.
Then we can express \eqref{eq:cond_x_r} as
\begin{align}\label{eq:diff_ineq}
\ln(p_r)'' + \frac{4 \ln(p_r)'}{x_r}=R(x_r)
\end{align}
where the remainder term  $R$ contains the remaining terms. The expression 
$R$ of course depends on the other coordinates $x_k$ for $k\neq r$ but since they are considered fixed here we can view $R$ as a function of $x_r$ alone.

Equation \eqref{eq:cond_x_r}
then implies that there is $M>0$ sufficiently large such that
for $x_r>M$ the remainder term
$R(x)$ 
satisfies for some constant $c>0$.
\begin{align}\label{eq:remainder}
|R(x)|\leq\frac12 |\ln(p_r)''| +\frac{|\ln(p_r)'|}{x_r}
+c.
\end{align}
Here the last constant term bounds the $x_r^{-1}$ contribution.
Suppose $\ln(p_r)'\leq 0$. Then we can bound
\begin{align}
\begin{split}
0&=\ln(p_r)''(x_r) + \frac{4 \ln(p_r)'(x_r)}{x_r}-R(x_r)
\\
&\leq \ln(p_r)''(x_r)+\frac12|\ln(p_r)''(x_r)|  + \frac{4 \ln(p_r)'(x_r)}{x_r} 
+ \frac{|\ln(p_r)'(x_r)|}{x_r}
+c\
\\
&\leq \ln(p_r)''(x_r)+\frac12|\ln(p_r)''(x_r)| +c 
\end{split}
\end{align}
We find that $\ln(p_r)''\geq -2c$ (for $\ln(p_r)''>0$ this is clear, and otherwise we can absorb the absolute value part). We conclude by integration that for all $x_r>M$
(note that the bound is trivially true if $\ln(p_r)'>0$)
\begin{align}
\ln(p_r)'(x_r)\geq \min(0, \ln(p_r)'(M))-2c(x_r-M).
\end{align}
Similarly we can bound for $\ln(p_r)'(x_r)\geq 0$
\begin{align}
\begin{split}
0&=\ln(p_r)''(x_r) + \frac{4 \ln(p_r)'(x_r)}{x_r}-R(x_r)
\\
&\geq \ln(p_r)''(x_r)-\frac12|\ln(p_r)''(x_r)|  + \frac{4 \ln(p_r)'(x_r)}{x_r} 
- \frac{|\ln(p_r)'(x_r)|}{x_r}
-c
\\
&\geq \ln(p_r)''(x_r)-\frac12|\ln(p_r)''(x_r)| -c 
\end{split}
\end{align}
implying $\ln(p_r)''(x_r)\leq 2c$ for $x_r\geq M$ such that $\ln(p_r)'(x_r)>0$.
We obtain 
\begin{align}
   \ln(p_r)'(x_r)\leq \max(0, \ln(p_r)'(M))+2c(x_r-M). 
\end{align}
Together the last two steps imply that $|\ln(p_r)'(x_r)|\leq C+Cx_r$ for some
$C>0$ and $x_r>M$. Going back to \eqref{eq:diff_ineq} we conclude that there is $C>0$ such
that $|\ln(p_r)''(x_r)|\leq C$ for $x_r>M$.
We conclude that for $r\notin I$ and all $i\neq j$
\begin{align}\label{eq:limit_cond_simplified}
0 = \sum_k A_{kj}A_{ki}\ln(p_k)''+\frac{4A_{ri}A_{rj} \ln(p_r)'}{x_r} +O\left(x_r^{-1}\right).
\end{align}

\paragraph{Step 7: $A$ is a permutation matrix.}
If $I=[n]$ we are done. So there is $r\notin I$ and using 
\eqref{eq:limit_cond_simplified} we conclude by varying $x_k\neq x_r$ that 
\begin{align}\label{eq:cond_constant}
A_{kj}A_{ki}\ln(p_k)''=c
\end{align}
for some constant (depending on $k$, $i$, and $j$).
Note that if we assumed that at most one $p_k$ is a Gaussian density we could conclude as in the linear case. However, this assumption is not necessary, as we will now show.

By definition, $A_{kj}A_{ki}\ln(p_k)''=0$ for $k\in I$ because there is only one non-zero entry in row $k$ of $A$.
We have seen in Step 2 that for $k\notin I$ the density $p_k\in C^2(\R)$  and is positive.
By assumption, we can find $i\neq j$ such that
$A_{kj}A_{ki}\neq 0$ for $k\notin I$. The
 relation \eqref{eq:cond_constant}
then implies for $k\notin I$ that $\ln(p_k)''(x_k)=\beta_k$ for some constant $\beta_k<0$ ($p_k$ is a probability density) and all $x_k\in \R$.
Then $\ln(p_k)'(x_k)=\beta_k x_k+\gamma_k$ for some constant $\gamma_k$.
With $x_r\to \infty$ we conclude from \eqref{eq:limit_cond_simplified}
that for $r\notin I$ 
\begin{align}\label{eq:tech1}
0 = \sum_{k\notin I} A_{kj}A_{ki}\beta_k+ 4A_{ri}A_{rj} \beta_r .
\end{align}
 Summing this over $r\notin I$ we get 
 \begin{align}
 \begin{split}\label{eq:tech2}
0 = \sum_{r\notin I}\left(\sum_{k\notin I} A_{kj}A_{ki}\beta_k+ 4A_{ri}A_{rj} \beta_r\right)
=
(d-|I|+4)\sum_{k\notin I} A_{kj}A_{ki}\beta_k.
 \end{split}
 \end{align}
Dividing  \eqref{eq:tech2} by $d-|I|+4$ and subtracting it  from \eqref{eq:tech1} we conclude that
\begin{align}
A_{ri}A_{rj} \beta_r=0
\end{align}
for all $r\notin I$ and all $i\neq j$. Since $\beta_r$ is non-zero this implies
that $A_{ri}A_{rj} =0$ for $i\neq j$ and thus $r\in I$, a contradiction.
This establishes $I=[d]$ and thus $A$ is a signed permutation matrix.
By permuting and reflecting the coordinates of $\P$ we can assume $A=\mathrm{Id}$ in the following.

\paragraph{Step 8: Simplifications in \eqref{eq:main_condition} for $A=\mathrm{Id}$.}
First we remark that for $A=\mathrm{Id}$ the function $g$ leaves the quadrants invariant.  It is thus sufficient to consider the case where $p$ and $q$ vanish outside $\{x_i>0 \; \forall i\}$ and show that no solutions
of $s=g(s')$ exist under this condition. Using step 2 we can assume that $p_i(x_i)>0$ for all $x_i>0$. In the following all domain are assumed to 
be the positive half-line.
For $A=\mathrm{Id}$ the condition \eqref{eq:main_condition} becomes
for $x=(x_1,\ldots, x_d)$ such that $x_i>0$ 
\begin{align}
 \begin{split}
 0 
&=  4dx_ix_j -2|x|^2x_i\ln(p_j)'-2|x|^2x_j\ln(p_i)'-2|x|^2x_jx_i(\ln(p_i)'' + \ln(p_j)'')
 \\
 &\quad +
 \sum_k 8x_kx_ix_j\ln(p_k)'+
\sum_{k} 4x_k^2x_ix_j\ln(p_k)''
.
 \end{split}
\end{align} 
Dividing this by $2x_ix_j$ we obtain 
\begin{align}\label{eq:final_a=id}
0= 2d -|x|^2\left(\frac{\ln(p_j)'}{x_j} +\ln(p_j)'' + \frac{\ln(p_i)'}{x_i} + \ln(p_i)''\right)+
 \sum_k 4x_k\ln(p_k)'+2x_k^2\ln(p_k)''.
\end{align}
We now assume that $d>2$.
Let $i$, $j$, and $r$ be pairwise different. 
Using the last display with $i$, $j$ and $j$, $r$ and subtracting the resulting equations we obtain
\begin{align}
0=|x|^2\left(\frac{\ln(p_i)'}{x_i} +\ln(p_i)'' -\frac{\ln(p_r)'}{x_r} -\ln(p_r)'' \right).
\end{align}
Varying $x_r$ and $x_i$ independently and since $i\in [n]$ is arbitrary we conclude that there is a constant $\kappa$ 
such that 
\begin{align}
\frac{\ln(p_i)'}{x_i} + \ln(p_i)''=\kappa
\end{align}
for all $i$ and $x_i>0$.

\paragraph{Step 9: Conclusion for $n>2$.}
The solutions of the  ODE $y(x)/x+y'(x)=\kappa$ are given by
\begin{align}
\frac{\alpha}{x}+\frac{\kappa x}{2}
\end{align}
where $\alpha$ is any constant.
We conclude that there are constants $\alpha_j$ such that
\begin{align}\label{eq:finalpdensity}
\ln(p_j)(x) = \alpha_j \ln(x)-\kappa \frac{ x^2}{4} + c
\Rightarrow 
p_j(x) \propto x^{\alpha_j} e^{-\frac{-\kappa x^2}{4}}.
\end{align}
This implies that 
\begin{align}
q(y) = p(y/|y|^2)|y|^{-2d}
\propto \prod_j \frac{y_j^{\alpha_j}}{|y|^{2\alpha_j}}
e^{-\frac{- \kappa y_j^2}{4|y|^4}} |y|^{-2d}.
\end{align}
By applying the main argument to $q$ we infer that $q_j$ has to have again the same structure as in \eqref{eq:finalpdensity} so we conclude that 
$\kappa=0$  and $\sum_j \alpha_j=-d$.
Alternatively, one directly sees that $q$ only factorizes as $q(y)=\prod q_i(y_i)$
if those conditions hold. 
It is easy to see that those densities satisfy the 
assumptions. However, $x^\alpha$ is never integrable
so there are no probability distributions satisfying the relations $s=g(s')$
This ends the proof for $d>2$.
\paragraph{Step 10: Conclusion for $d=2$.}
For $n=2$ we cannot simplify \eqref{eq:final_a=id} by considering indices $i\neq j\neq k$.
Instead, we directly exploit \eqref{eq:final_a=id} to obtain a similar conclusion.
Similarly to the argument in Step 6 it can be shown that 
$\ln(p_r)''$ and $\ln(p_r)'/x_r$ are bounded for $x_r$ away from 0.
Then we consider $x_1\to \infty$ in \eqref{eq:final_a=id}
and divide by $x_1^2$. We get (using $\{i,j\}=\{1,2\}$)
\begin{align}
0=\left(\frac{\ln(p_1)'}{x_1} +\ln(p_1)'' + \frac{\ln(p_2)'}{x_2} + \ln(p_2)''\right)+
  4\frac{\ln(p_1)'}{x_1}+2\ln(p_1)'' + O(x_1^{-1}).
\end{align}
By varying $x_2$ we conclude just as for $d>2$ that
$\frac{\ln(p_2)'}{x_2} + \ln(p_2)''$ is constant. We 
conclude as before.
\end{proof}

Let us now prove the  geometric result from Lemma~\ref{le:geometry} above.
\begin{proof}
\begin{figure}
    \centering
    \includegraphics[width=\textwidth]{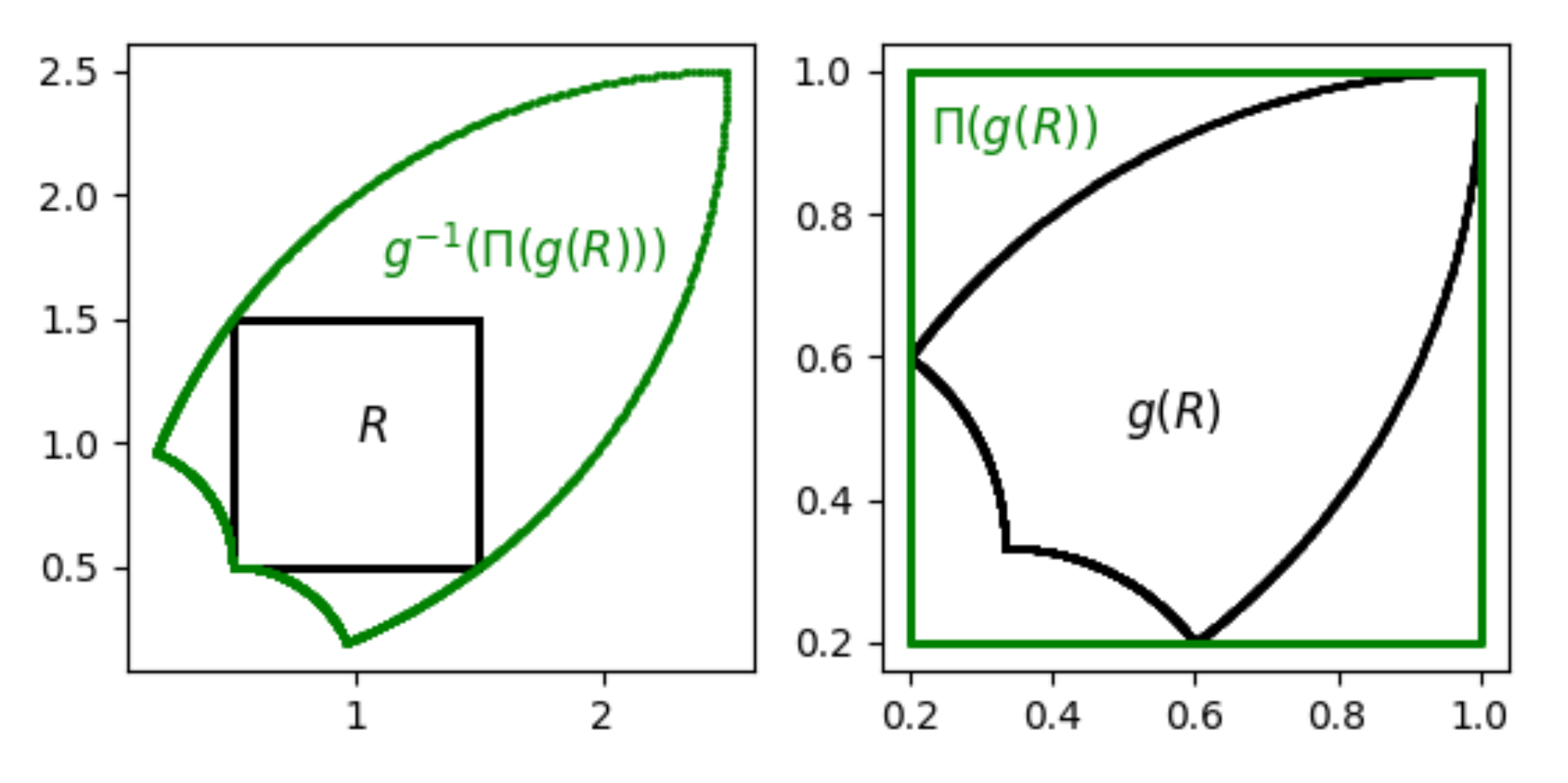}
    \caption{The black rectangle $R$ on the left is mapped by a conformal map to the
    black shape $g(R)$ on the right. When mapping the smallest rectangle $\Pi(g(R))$
    containing $g(R)$  (green rectangle on the right) back to $g^{-1}(\Pi(g(R)))$ (green shape on the left) we  obtain a larger set.}
    \label{fig:conformal}
\end{figure}

The main idea of the proof is that a box contained in $U$ after inversion is distorted so that its convex hull (contained in $O$) is strictly bigger than the box image
so that inverting backwards gives us a bigger box in $U$ except for some special cases.
An illustration of this argument is shown in Figure~\ref{fig:conformal}. The formal argument below is slightly technical.
An illustration of the actual argument can be found in Figure~\ref{fig:conformal2}.

To simplify the notation we write $\iota$ for the inversions $x\to x/|x|^2$.
Then we get $g= A\circ \iota=\iota\circ A$.
We consider the projections $\pi_i:\R^d\to\R $ projecting on the $i$-th coordinate.
We consider a map $\Pi$  on subsets of $\R^d$  defined by
$\Pi(M)=\pi_1(M)\times\ldots\times \pi_d(M)$. Let $\mc{C}$ denote the convex hull of a set. 
Let $R\subset O$ be a (connected) box.
  Then $ g(R)\subset U$ implies $\Pi(g(R))\subset \Pi(U)=U$.
Since $g(R)$ is connected $\Pi(g(R))$ is convex and
thus $\mc{C}(g(R))\subset \Pi(g(R))$.
We conclude that
$g^{-1}(\mc{C}(g(R))\subset g^{-1}(\Pi(g(R))
\subset g^{-1}(U)\subset O$.
As $A$ is linear we have $A^{-1}\mc{C}A(M)=\mc{C}(M)$ for any set $M\subset \R^d$.
Thus, we get $g^{-1}(\mc{C}(g(R))=\iota \mc{C}\iota(R)$.

\begin{figure}
    \centering
    \includegraphics[width=\textwidth]{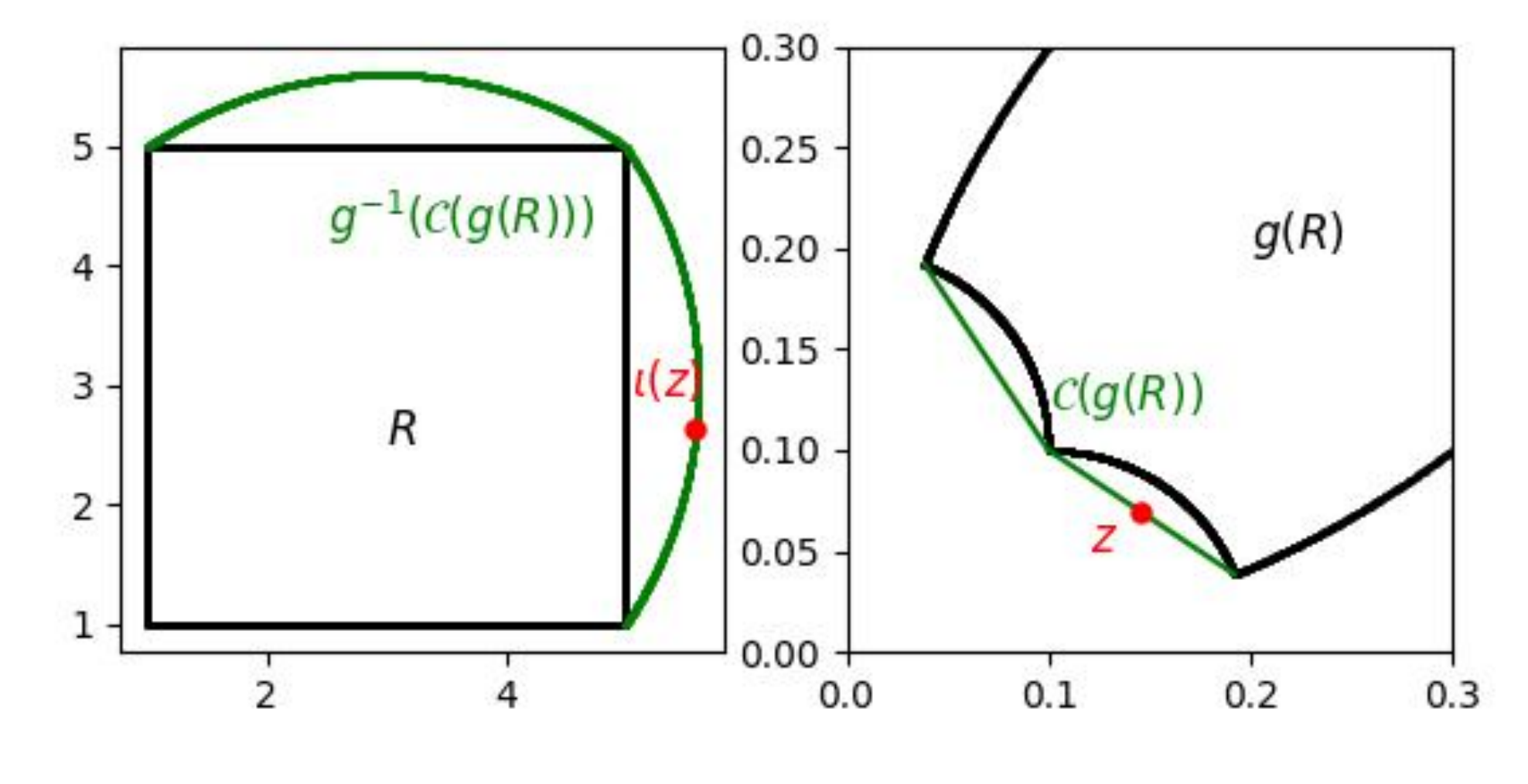}
    \caption{The black rectangle $R$ on the left is mapped by $\iota$  to the
    black shape $g(R)$ on the right. The green shape on the right shows the convex
    hull $\mc{C}(g(R))$. The point $z$ is defined as in the text 
    and its image $\iota(z)$ lies outside $R$.}
    \label{fig:conformal2}
\end{figure}
W.l.o.g.\ we now suppose that there is a box $R=(x_1,y_1)\times \ldots (x_d,y_d)\subset O$ with $0<x_i<y_i$. We write $x'=(x_2,\ldots, x_d)$, $y'=(y_2,\ldots, y_d)$. 
We consider the point
\begin{align}
    z = \frac12 \iota((y_1, x')^\top)+\frac12 \iota((y_1, y')^\top).
\end{align}
Clearly $z\in \mc{C}\iota R$.
Then
\begin{align}
\iota(z)= \iota\left(\frac12 \iota((y_1, x')^\top)+\frac12 \iota((y_1, y')^\top)\right) \in
 \iota \mc{C}\iota(R) 
\end{align}
We calculate
\begin{align}
\begin{split}
\iota\left(\frac12 \iota((y_1, x')^\top)+\frac12 \iota((y_1, y')^\top))\right)
&=\iota \left(\frac12 \frac{(y_1, x')^\top}{y_1^2+x'^2} 
+\frac12 \frac{(y_1,y')^\top}{y_1^2+y'^2} \right)
\\
&=\frac{\frac12 \frac{(y_1, x')^\top}{y_1^2+x'^2} 
+\frac12 \frac{(y_1,y')^\top}{y_1^2+y'^2}}{\left\lVert \frac12 \frac{(y_1, x')^\top}{y_1^2+x'^2} 
+\frac12 \frac{(y_1,y')^\top}{y_1^2+y'^2}\right\rVert^2}
\end{split}
\end{align}
We bound (using $x'^2<y'^2$)
\begin{align}
\begin{split}
\left\lVert \frac12 \frac{(y_1, x')^\top}{y_1^2+x'^2} 
+\frac12 \frac{(y_1,y')^\top}{y_1^2+y'^2}\right\rVert^2
&<
\frac12 \left\lVert  \frac{(y_1, x')^\top}{y_1^2+x'^2} \right\rVert^2
+
\frac12 \left\lVert \frac{(y_1,y')^\top}{y_1^2+y'^2} \right\rVert^2
\\ 
&=
\frac12\frac{1}{y_1^2+x'^2}+
\frac12\frac{1}{y_1^2+y'^2}.
\end{split}
\end{align}
Together the last displays imply that
\begin{align}
 \iota\left(\frac12 \iota((y_1, x')^\top)+\frac12 \iota((y_1, y')^\top))\right)_1
 >y_1.
\end{align}
Let $z_1$ be maximal such that $(x_1,z_1)\subset O_1 $.
Then the reasoning above shows that $z_1=\infty$.
The same reasoning for the other coordinates implies
that $R'=(x_1,\infty)\times\ldots\times(x_d,\infty)\subset O$.
By applying the same reasoning to sequences of boxes in $g(R)$ approaching the origin we conclude that $U$ is 
the union of quadrants (and the same holds for $O$).

It remains to prove the last remark.
 As quadrants are invariant 
under $\iota$ we have $\iota(O)=O$ and conclude $AO=U$, or equivalently $A^\top U=O$.
It is sufficient to show that $0\in U_i$.
For simplicity we assume $R=(0,\infty)^d\subset U$, the generalisation to 
other quadrants is immediate.
Since we assume that the $i$-th row $v_i = A^\top e_i$ of $A$  is not equal to a signed standard basis vector it  has at least two non-zero entries. Thus we can find
$w$ such that $w\cdot v_i=0$ and all entries of $w$ are non-zero.
Since $w$ is orthogonal to the span of $Ae_i$ there is a vector $\alpha$ such that
$A^\top \alpha=w$ and $\alpha_i=0$. By adding a suitable vector $\beta$ we can ensure that
$(\beta+\alpha)_i=0$, all entries of $A^\top(\alpha+\beta) $ are non-zero 
and $(\beta + \alpha)_j>0 $ for $j\neq i$. 
The second condition  can be satisfied by picking the entries of $\beta$ one after another. The conditions $(\beta + \alpha)_j>0 $ for $j\neq i$ and $(\beta + \alpha)_i=0$
imply that $\beta+\alpha \in \overline{U}$ since we assumed $(0,\infty)^d\subset U$.
But then $A^\top(\alpha+\beta)\in \overline{O}$ and since
$A^\top(\alpha+\beta)$ is strictly contained in a quadrant (all entries are non-zero) we conclude that $A^\top(\alpha+\beta)\in O$ and thus $\alpha+\beta\in U$.
This implies $(\alpha+\beta)_i=0\in U_i$.

\end{proof}

\subsection{Proof of Corollary~\ref{co:rescale}}
Here we prove the simple extension of Theorem~\ref{th:conformal} to rescaled conformal maps.
\begin{proof}[Proof of Corollary~\ref{co:rescale}]
Suppose that there are $f,f'\in \mc{F}_{\sconf}$ and $\P,\P'\in \mc{P}_\conf$
such that 
\begin{align}\label{eq:cor_start}
    f_\ast \P\indistribution f'_\ast\P'.
\end{align} 
Then we need to show that 
$f^{-1}f'\in \mc{S}_\rep$. By definition of $\mc{F}_\sconf$
there are $g,g'\in \mc{F}_\conf$ and $h, h'\in \mc{S}_{\rep}$
such that $f=g\circ h$, $f'=g'\circ h'$.
The smoothness condition for $\mc{F}_\sconf$ implies $h,h'$ are three times differentiable and thus $\Q=h_\ast \P$ and $\Q'= h'_\ast\P'$ satisfy the smoothness condition
of measures in $ \mc{P}_\conf$. 
From \eqref{eq:cor_start} we infer 
$g_\ast \Q\indistribution g'_\ast \Q'$ and this is the observational distribution.
By the assumption on the observational distribution, we conclude that $\Q \sim g^{-1}(x)$ and $\Q'\sim g'^{-1}(x)$ have at most one Gaussian component and thus $\Q,\Q'\in \mc{P}_\conf$.
 Applying Theorem~\ref{th:conformal}
we infer that $g^{-1}g'\in \mc{S}_\conf\subset \mc{S}_{\rep}$. 
Since $\mc{S}_{\rep}$ is a group, we conclude that
$f^{-1}f'=h^{-1}\circ(g^{-1}g')\circ h'\in \mc{S}_\rep$. This ends the proof.
\end{proof}

\subsection{Proof of Theorem~\ref{th:conf2}}
Now we address the case $d=2$. 
Let us first note that conformal maps $f:\Omega\to \R^2$ for $\Omega\subset\R^2$ can be identified
with holomorphic maps $g:\Omega \to \mathbb{C}$ with non-vanishing derivatives
by considering $g(x+iy)= f_1(x+iy)+if_2(x+iy)$.
Moreover, 
\begin{align}\label{eq:det_holom}
    |\det Df|=|g'|^2
\end{align}
where $g'$ denotes the complex derivative.
The proof of Theorem~\ref{th:conf2} is based on the fact that conformal maps between rectangles can be characterized rather explicitly.
In particular, we use the following result.
\begin{theorem}[Schwarz-Christoffel mapping]\label{th:schwarz}
Conformal maps $f$ that map the unit disk $\mathbb{D}=\{z\in \mathbb{C}| \, |z|<1\} $
to a polygon with angles $\alpha_k\pi$ for $1\leq k\leq n$
can be written as
\begin{align}\label{eq:schwarz}
    f(z) = C \int_0^z \prod_{k=1}^n (w - w_k)^{-(1-\alpha_k)}\, \d w + K
\end{align}
for two constants $C,K\in \mathbb{C}$ and $w_k\in \mathbb{C}$
with $|w_k|=1$.
\end{theorem}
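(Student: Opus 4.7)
The plan is to combine the Riemann mapping theorem with the Schwarz reflection principle to pin down the logarithmic derivative $f''/f'$ explicitly. First, by the Riemann mapping theorem a conformal bijection $f:\mathbb{D}\to P$ exists, and by Carath\'eodory's theorem it extends to a homeomorphism $\overline{\mathbb{D}}\to\overline{P}$; define the vertex preimages $w_1,\ldots,w_n\in\partial\mathbb{D}$ as the preimages of the vertices $v_1,\ldots,v_n$ under this extension. To understand the local behavior at each $w_k$, observe that a neighborhood of $w_k$ in $\overline{\mathbb{D}}$ is mapped to a neighborhood of $v_k$ in the wedge of opening $\alpha_k\pi$, so the composition $z\mapsto ((f(z)-v_k)e^{-i\theta_k})^{1/\alpha_k}$ becomes conformal at $w_k$ after a suitable rotation. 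This yields a factorization $f(z)=v_k+(z-w_k)^{\alpha_k}h_k(z)$ with $h_k$ holomorphic and nonvanishing near $w_k$, and differentiating gives
\begin{equation*}
\frac{f''(z)}{f'(z)}=-\frac{1-\alpha_k}{z-w_k}+g_k(z),
\end{equation*}
with $g_k$ holomorphic in a neighborhood of $w_k$.

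Next I would apply the Schwarz reflection principle on each open arc of $\partial\mathbb{D}\setminus\{w_1,\ldots,w_n\}$: since $f$ maps such an arc into a straight edge of $P$, $f$ continues analytically across the arc, the continuation being the reflection of the original across that edge. A short computation shows that the logarithmic derivative $f''/f'$ is \emph{invariant} under such composition of an inversion $z\mapsto 1/\bar z$ in $\partial\mathbb{D}$ with a reflection in a target line (the antiholomorphic factors cancel in the logarithmic derivative, while affine factors coming from the translation/rotation of the target line disappear upon differentiating twice and dividing). Consequently the function
\begin{equation*}
H(z):=\frac{f''(z)}{f'(z)}+\sum_{k=1}^{n}\frac{1-\alpha_k}{z-w_k}
\end{equation*}
admits a single-valued meromorphic extension to the Riemann sphere. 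The local expansion above shows $H$ is in fact bounded (hence holomorphic) at each $w_k$, and an analysis at $\infty$ via the reflection $z\mapsto 1/\bar z$ shows $H(\infty)=0$, so by Liouville's theorem $H\equiv 0$.

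Integrating the identity $f''(z)/f'(z)=-\sum_k (1-\alpha_k)/(z-w_k)$ gives $\log f'(z)=-\sum_k(1-\alpha_k)\log(z-w_k)+\mathrm{const}$, hence $f'(z)=C\prod_{k}(z-w_k)^{-(1-\alpha_k)}$; integrating once more from $0$ to $z$ yields the stated formula with $K=f(0)$. The main obstacle I anticipate is the careful bookkeeping for the reflection step: although the analytic continuations of $f$ across neighboring arcs disagree (they are reflections in different lines, whose composition is a nontrivial Euclidean motion whose rotation part encodes the vertex angle), one must show that nevertheless the higher-order invariant $H$ glues to a single-valued object on $\widehat{\mathbb{C}}\setminus\{w_k\}$, together with the growth control at the $w_k$ and at $\infty$ that upgrades this to a holomorphic function on the full sphere and forces it to vanish identically.
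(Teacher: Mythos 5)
The paper does not prove this theorem; it cites it directly (``A proof of this result can be found in any textbook on complex analysis, e.g., \cite{Ahlfors1966}''). Your argument --- Riemann mapping plus Carath\'eodory boundary extension, the local power-map expansion $f(z)=v_k+(z-w_k)^{\alpha_k}h_k(z)$ showing $f''/f'$ has a simple pole of residue $-(1-\alpha_k)$ at $w_k$, Schwarz reflection across boundary arcs combined with the invariance of $f''/f'$ under affine post-composition to extend $H$ to the sphere, and Liouville to force $H\equiv 0$ --- is precisely the standard proof in Ahlfors and other references, and is correct, including the observation that $|w_k|=1$ makes the reflected singularities $1/\bar w_k$ coincide with the $w_k$ themselves.
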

A proof of this result can be found in any textbook on complex analysis,
e.g., \cite{Ahlfors1966}. 
With this result we can prove Theorem~\ref{th:conf2}.
\begin{proof}[Proof of Theorem~\ref{th:conf2}]
Assume there are two probability distribution $\P_1\in \mc{P}_{\conf2}$ and $\P_2\in \mc{P}_{\conf 2}$ with densities $p_1$ and $p_2$
supported on finite rectangles $R_1$ and $R_2$ and a conformal map
$f:R_1\to R_2$ such that
$f_\ast \P_1=\P_2$. Let $h:\mathbb{D}\to R_1$ be a conformal map
which by the Riemann mapping theorem exists and
by Theorem~\ref{th:schwarz} can be expressed as in \eqref{eq:schwarz}, i.e.,
\begin{align}\label{eq:def_h_conformal}
    h(z)= C \int_0^z \prod_{k=1}^4 (z - w_k)^{-1/2}\,\d z+K
\end{align}
where $|w_k|=1$ and we used that all angles in a rectangle are $\pi/2$.
The points $w_k$ are the preimages of the corners of $R_1$.
Then the derivative of $h$ can be written as
\begin{align}
    h'(z)= C  \prod_{k=1}^4 (z - w_k)^{-1/2}.
\end{align}
Note that $h'$ is bounded above and below away from the points $w_k$
and $|h'(x)|\to \infty$ as $x\to w_k$.
We can write $f=(f\circ h)\circ h^{-1}$ where 
$g=f\circ h$ is a conformal map from $\mathbb{D}$ to $R_2$.
This can be written again as in \eqref{eq:schwarz} with 
points $w_1', \ldots, w_4'$ which are the preimages of the corners of $R_2$, i.e.,
\begin{align}\label{eq:def_g_conformal}
    g(z)= C'  \int_0^z\prod_{k=1}^4 (z - w_k')^{-1/2}\, \d z+K'
\end{align}
The condition $f_\ast \P_1=\P_2$ implies that 
\begin{align}\label{eq:density_complex}
    p_1(x)=p_2(f(x))|f'(x)|^2
\end{align}
(using the relation \eqref{eq:det_holom} between complex derivative and maps from $\R^2\to\R^2$).
Now we infer 
\begin{align}\label{eq:rect_conformal_diff}
|f'(x)|^2=
|(f\circ h\circ h^{-1}(x)|^2=|(f\circ h)'(h^{-1}(x))|^2 |(h^{-1})'(x)|^2
=\frac{|g'(y)|^2}{
|h'(y)|^2}
\end{align}
where $y=h^{-1}(x)$.
Applying this with $y\to w_k$ we get $h'(y)\to \infty$.
Using the assumption that the densities of $\P_1$ and 
$\P_2$ are bounded above and below we infer from \eqref{eq:density_complex}
and \eqref{eq:rect_conformal_diff} that
then $|g'(y)|\to \infty$ from which we conclude $y\to w_{k'}'$ for 
some $k'$. This implies $w_k=w'_{k'}$. 
After renaming the corners, we get $w_k=w_k'$ for all $k$.
From \eqref{eq:def_h_conformal} and \eqref{eq:def_g_conformal} we see that 
\begin{align}
C'^{-1}(g(z)-K')=C^{-1}(h(z)-K). 
\end{align}
Thus $g(z)=C'h(z)/C -KC'/C+K'$ so we conclude that
\begin{align}
(g\circ h^{-1})(h(z))=g(z)=C'h(z)/C -KC'/C+K'.
\end{align}
In particular there are  constants $A,B\in \mathbb{C}$ such that 
$f(x)=(g\circ h^{-1})(x)=Ax+B$.
From $f(R_1)=R_2$ we infer that $A\in \mathbb{R}\cup i\R$. This ends the proof.
\end{proof}
Let us finally remark that the identifiability of conformal maps for 
distributions with full support in $d=2$ follows just as in $d\geq 3$
because every bijective conformal map of the Riemann sphere to itself is a Moebius transformation so we can apply Lemma~\ref{le:geometry}.
We expect that the result can be extended to more general densities using the same strategy and a more careful analysis of the density close to the boundary.

\section{Proofs for the results on  OCTs}\label{app:ima}

In this section we collect the missing proofs for Section~\ref{sec:ima}.

\subsection{Proof of Proposition~\ref{prop:non-uniqueness}}
First we prove Proposition~\ref{prop:non-uniqueness}. We refer to Appendix~\ref{app:spurious} for a general review and characterisation of spurious solutions.
\begin{proof}[Proof of Proposition~\ref{prop:non-uniqueness}]
The proof essentially shows that polar coordinates are an example of such a function.
In dimension $d$ they are defined by $\Phi:(a,b)\times [0,\pi] \times [0,\pi/2]^{d-2}\to \R^d$ with
\begin{align}
\begin{split}
x_1 &= r \sin(\p)\sin(\theta_1)\ldots \sin(\theta_{d-2})
\\
x_2 &= r \cos(\p)\sin(\theta_1)\ldots \sin(\theta_{d-2})
\\
x_3 &= r \cos(\theta_1)\ldots \sin(\theta_{d-2})
\\
\vdots\; &\,\vdots \qquad \vdots
\\
x_d &= r\cos(\theta_{d-2}). 
\end{split}
\end{align}
The following lemma is well known.
\begin{lemma}\label{le:polar}
The polar coordinates $\Phi$ defined above satisfy:
\begin{enumerate}
\item
The transformation satisfies $\Phi\in \mc{F}_{\ima}$.
\item The determinant of the Jacobian is given by
\begin{align}
\det D\Phi_d = r^{d-1}\sin{\theta_1}\sin(\theta_2)^2\ldots \sin(\theta_{d-2})^{d-2}.
\end{align}
\item The image of $\Phi_d$ is, up to a set of measure zero, the annulus $\{x\in \R^n: a<|x|<b\}$.
\end{enumerate}
\end{lemma}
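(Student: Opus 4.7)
The plan is to prove all three claims by induction on dimension $d$, exploiting the natural recursive structure of spherical coordinates. Observe that the formulas can be written as
\begin{align*}
(x_1, \ldots, x_{d-1}) &= \Phi_{d-1}(r\sin(\theta_{d-2}), \varphi, \theta_1, \ldots, \theta_{d-3}),\\
x_d &= r\cos(\theta_{d-2}),
\end{align*}
so $\Phi_d$ is obtained from $\Phi_{d-1}$ by replacing the ``radial'' input with $r\sin(\theta_{d-2})$ and appending the coordinate $x_d = r\cos(\theta_{d-2})$. The base case $d=2$ is immediate: $(x_1,x_2)=(r\sin\varphi, r\cos\varphi)$ has orthogonal Jacobian columns of norms $1$ and $r$, determinant of absolute value $r = r^{d-1}$, and image equal to the open annulus minus a null set.

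For claim 1 (orthogonality), I would use the decomposition $\Phi_d = r\,\omega$ where $\omega = \omega(\varphi,\theta_1,\ldots,\theta_{d-2})$ is a unit vector in $\R^d$ (which follows from $\sum_i x_i^2 = r^2$, a consequence of $\sin^2+\cos^2=1$ telescoped through the formulas). Then $\partial_r \Phi_d = \omega$ has unit norm, and $\partial_\alpha \Phi_d = r\, \partial_\alpha \omega$ for any angular coordinate $\alpha$. Differentiating $|\omega|^2 = 1$ yields $\omega\cdot \partial_\alpha\omega = 0$, giving orthogonality between $\partial_r\Phi_d$ and any angular derivative. For two angular derivatives $\partial_{\alpha}\Phi_d$ and $\partial_{\beta}\Phi_d$ with $\alpha,\beta\neq r$, the recursive structure is the key: for $\alpha,\beta\in\{\varphi,\theta_1,\ldots,\theta_{d-3}\}$, only the first $d-1$ components of $\Phi_d$ depend on them (and then via a common factor $\sin(\theta_{d-2})$), so orthogonality follows from the induction hypothesis applied to $\Phi_{d-1}$. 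For the remaining case where one of the variables is $\theta_{d-2}$, direct computation using $\partial_{\theta_{d-2}}\Phi_d = (\cos(\theta_{d-2}) \Phi_{d-1}^{\rm angular}, -r\sin(\theta_{d-2}))^\top$ and the fact that $\partial_\alpha\Phi_{d-1}$ is tangent to the sphere (hence perpendicular to $\Phi_{d-1}$ viewed as a position vector) dispatches the inner product.

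For claim 2 (determinant), orthogonality of the Jacobian columns implies $|\det D\Phi_d| = \prod_k \|c_k\|$ where $c_k$ runs through the columns. The norm of $\partial_r\Phi_d$ is $1$; the norm of $\partial_{\theta_{d-2}}\Phi_d$ is $r$; and the norms of the remaining columns come from the $d-1$ case scaled by $\sin(\theta_{d-2})$, since these columns factor as $\sin(\theta_{d-2})$ times the corresponding columns for $\Phi_{d-1}$ (with the $d$-th component vanishing). The induction hypothesis gives the product $r^{d-2}\sin(\theta_1)\sin(\theta_2)^2\cdots\sin(\theta_{d-3})^{d-3}$ for the $\Phi_{d-1}$ part, and multiplying by $r$ (from $\theta_{d-2}$) and by $\sin(\theta_{d-2})^{d-2}$ (the accumulated scaling of the $d-2$ angular columns inherited from lower dimensions) gives the claimed formula.

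For claim 3 (image), given any $x\in\R^d$ with $a<|x|<b$, set $r=|x|$ and $\theta_{d-2}=\arccos(x_d/r)\in[0,\pi]$; if $\sin(\theta_{d-2})\neq 0$ then $(x_1,\ldots,x_{d-1})/(r\sin(\theta_{d-2}))$ is a unit vector in $\R^{d-1}$, and the induction hypothesis (applied with an annulus of inner radius $0$ and outer radius $1$) produces the remaining angular coordinates, except on the measure-zero set where $\sin(\theta_{d-2})=0$ or where the lower-dimensional ambiguity concentrates. The main obstacle is bookkeeping: making sure the inductive statement tracks the precise ``up to a null set'' clause (which corresponds to points on the coordinate hyperplanes that appear as branches in the angular parametrization), and that the exponent $d-2$ for $\sin(\theta_{d-2})$ in the Jacobian determinant matches exactly the number of columns picking up a $\sin(\theta_{d-2})$ factor from the recursion. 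Once the induction is set up cleanly, however, all three statements follow simultaneously.
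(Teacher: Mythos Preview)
Your proposal is correct and follows essentially the same approach as the paper: both argue by induction on the dimension using the recursion
\[
\Phi_d(r,\varphi,\theta_1,\ldots,\theta_{d-2})=\bigl(\sin(\theta_{d-2})\,\Phi_{d-1}(r,\varphi,\theta_1,\ldots,\theta_{d-3}),\, r\cos(\theta_{d-2})\bigr),
\]
which is equivalent to your formulation via the homogeneity $\Phi_{d-1}(r\sin\theta_{d-2},\ldots)=\sin\theta_{d-2}\,\Phi_{d-1}(r,\ldots)$. The paper's own proof is a two-line sketch pointing to this recursion; your write-up simply fills in the details (the $\Phi_d=r\,\omega$ device for the radial column and the column-norm product for the determinant), and the minor bookkeeping you flag---counting the $d-2$ angular columns that each pick up a factor $\sin(\theta_{d-2})$---is handled correctly.
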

\begin{proof}
For the first part we only need to show that $D\Phi$ has orthogonal columns
which can formally be done by induction noting that
\begin{align}
\Phi_d(r, \p,\theta_1,\ldots,\theta_{d-2}) = (\Phi_{d-1}
(r, \p,\theta_1,\ldots,\theta_{d-3})\sin(\theta_{d-2}), r\cos(\theta_{d-2})).
\end{align}
The determinant and the image can also be derived from this recursion.
\end{proof}
We now define for $k\in \mathbb{N}$ the functions $g_k:(0, \pi/2)\to \R$
\begin{align}
g_k(\theta) = \int_0^\theta \sin(t)^k \,\mathrm{d} t.
\end{align}
Clearly $g_k'(\theta) = \sin(\theta)^k$.
They are strictly increasing functions with positive derivative, i.e., diffeomorphisms, so we can define their inverses on an open  interval $h_k:I_k\to (0,\pi/2)$ which are also differentiable functions.
Note that 
\begin{align}
h_k'(\alpha)= (g_k'(h_k(\alpha))^{-1}=\sin(h_k(\alpha))^{-k}.
\end{align}
Denote the density of $\P$ by $p$. By assumption $\P$ is invariant under rotations so we can write with a slight abuse of notation $p(s)=p(|s|)$.
We assume that $p(|s|)$ is positive on a, possibly unbounded, interval $(a,b)$.
We consider $q:\R_+\to \R$ given by
\begin{align}\label{eq:def_q_density}
    q(r)= d\omega_d r^{d-1} p(r)
\end{align}
where $\omega_d$ is the volume of the unit ball in dimension $d$ and 
these constants ensure that $q$ is a probability density. 
Define $F_q:(0, \infty)\to [0,1]$ as the cdf for the probability density $q$, i.e., 
\begin{align}
F_q(r)=\int_0^r q(z)\,\d z.
\end{align}
Since $q(z)>0$ iff
$t\in (a,b)$ we conclude that $F_q$ restricted to $(a,b)$ is a continuous and strictly increasing from 0 to 1 and has a positive derivative. 
Hence, we can define $\psi:(0, 1)\to (a,b)$ by $\psi(t)=F_q^{-1}(t)$
and $\psi$ is differentiable with 
\begin{align}
    \psi'(t)=(F_q'(\psi(t)))^{-1}=\frac{1}{q(\psi(t))}.
\end{align} 
We define the domains $D_d=(0,1)\times (0,2\pi)\times I_1\times\ldots I_{d-2}$
Now we consider the map $h:D_d\to (a,b)\times (0,2\pi)\times (0,\pi)^{d-2}$ given by
\begin{align}
h(t,\p,\alpha_1,\ldots,\alpha_{d-2})= (\psi(t),\p, h_1(\alpha_1),\ldots,h_{d-2}(\alpha_{d-2})).
\end{align}
Note that $h$ is a coordinate-wise transformation and the determinant of its Jacobian is given by
\begin{align}
\det Dh = \frac{1}{q(\psi(t))}\prod_{k=1}^{d-2} \sin(h_k(\alpha))^{-k}.
\end{align}
Moreover Lemma~\ref{le:polar} gives
\begin{align}
\det (D\Phi)(h(t,\p,\alpha_1,\ldots,\alpha_{d-2}))
= \psi(t)^{d-1}\sin(h_1(\alpha_1))\sin(h_2(\alpha_2))^2\ldots \sin(h_{d-2}(\alpha_{d-2}))^{d-2}
\end{align}
and thus 
\begin{align}\label{eq:det_phi_h}
\det D(\Phi\circ h) (t,\p,\alpha_1,\ldots,\alpha_{d-2})=\frac{\psi(t)^{d-1}}{q(\psi(t))}.
\end{align}
We now consider the probability measure $\mathbb{Q}=\mc{U}(0,1)\otimes \mc{U}((0,2\pi))
\otimes \mc{U}(I_1)\otimes \ldots \otimes \mc{U}(I_{n-2})$
where $\mc{U}$ denotes the uniform distribution on an interval.
Denote by $\lambda:\R^d\to\R^d$ the scaling function $\lambda(s)= (s_1,2\pi s_2, 
|I_1|s_3,\ldots, |I_{d-2}|s_d)$. Clearly
$\lambda$ maps $C_d$ to $(0,1)\times(0,2\pi)\times I_1\times\ldots\times I_{d-1}$
and $\lambda_\ast \nu=\Q$.
We now consider the measure 
\begin{align}
    (\Phi\circ h\circ \lambda)_\ast \nu =
    (\Phi\circ h)_\ast \Q.
\end{align}
We denote its density by  $\tilde{p}$ and obtain using \eqref{eq:density_pushforward_2},
\eqref{eq:det_phi_h},
and that the density of $\Q$ is constant
\begin{align}
\begin{split}
\tilde{p}(x) &\propto 
|\Det D(\Phi\circ h)^{-1})(x))|
	\\
	&=\frac{1}{ |\Det D(\Phi\circ h)((\Phi\circ h)^{-1}(x))|}
	\\
	&=
	\frac{q\big(\psi((\Phi\circ h)^{-1}_1(x))\big)}{
	\psi((\Phi\circ h)^{-1}_1(x))^{d-1}}.
	\end{split}
\end{align}
Note that by definition of $\Phi$ we have $\Phi^{-1}(x)_1=|x|$
and $h$ acts coordinatewise where the action on the first coordinate is $\psi$ so that
\begin{align}
\psi((\Phi\circ h)^{-1}(x)_1)=\psi((h^{-1}\circ \Phi^{-1})(x)_1)
=\psi(\psi^{-1}(|x|))=|x|.
\end{align}
We conclude, 
using \eqref{eq:def_q_density} and the last display, that
\begin{align}
  \tilde{p}(x) \propto \frac{q(|x|)}{|x|^{d-1}}\propto p(|x|).
\end{align}
This shows 
\begin{align}
    \P=(\Phi\circ h\circ \lambda)_\ast \nu.
\end{align}
It remains to be shown that $\Phi\circ h\circ \lambda\in \mc{F}_\ima$.
But we have seen in Lemma~\ref{le:polar} that $\Phi\in \mc{F}_\ima$
and $h\circ \lambda$ is an invertible coordinate-wise transformation
so that $h\circ \lambda\in \mc{F}^R_\ima$ (see \eqref{eq:FR_ima}) implying
$\Phi\circ h\circ \lambda\in \mc{F}_\ima$. Note that we here essentially use the fact that precomposition with a function that acts on each coordinate separately preserves orthogonality of the columns of the Jacobian. 
\end{proof}

\subsection{Proof of Theorems~\ref{th:loc_loc}}
We now consider smooth deformations of a data generating mechanism $x=f(s)$.
For this it is helpful to phrase these as flows generated by vector fields.
For a brief review of these notions we refer to Appendix~\ref{app:math}
and for an extensive introduction we refer to any textbook on differential geometry.
We now give a complete proof of Theorem~\ref{th:loc_loc}.
\begin{proof}[Proof of Theorem~\ref{th:loc_loc}]
To simplify the notation we first focus on the case where $M=\R^d$.
The necessary modifications for the manifold case are indicated afterwards.
We define 
\begin{align}
    \Psi_t = (\Phi_t)^{-1}\circ f_t
\end{align} so that $\Psi_t:\R^d\to\R^d$ is a diffeomorphism for every $t$ and $\Psi_0(s)=s$ by assumption.
We denote the vector field that generates $\Psi_t$ by 
$X:(0,1)^d\to \R^d$, i.e., $X$ satisfies 
\begin{align}
\partial_t\Psi_t(x)=X_t(\Psi_t(x)).
\end{align}
The assumption $(\Phi_t)_\ast \nu = f_\ast \nu$ implies 
$\nu = (\Psi_t)_\ast f_\ast \nu$. Then the continuity equation \eqref{eq:continuity}
implies that $\Div(X_t)=0$ on $(0,1)^d$.
By assumption, $\Phi_t\in \mc{F}_\ima$, which means that
\begin{align}\label{eq:lame2}
(D\Phi_t)^\top D\Phi_t=\Lambda_t
\end{align}
 where $\Lambda_t:(0,1)^d\to \mathrm{Diag}(d)$
maps to diagonal matrices. 
Similarly $f_t\in \mc{F}_{\ima}$ implies that there is a function
$\Omega_t:(0,1)^d\to \mathrm{Diag}(d)$ such that
\begin{align}
    (Df_t)^\top Df_t=\Omega_t.
\end{align}
We now evaluate $\partial_t \Omega_t$ in terms of $\Psi_t$ and $\Phi_t$.
To evaluate the time derivative it is convenient to write
$\Psi(t,s)$ instead of $\Psi_t(s)$.
We get, using $f_t(s)= \Phi(t, \Psi(t,s))$,
\begin{align}
\begin{split}\label{eq:dt_D_f}
    \partial_t Df_t(s)
    &= \partial_t \left((D \Phi)(t,\Psi(t,s))(D\Psi)(t,s)\right)
    \\
    &= (\partial_t D\Phi)(t, \Psi(t,s)) (D\Psi)(t,s)
    + \sum_{k=1}^d (\partial_t\Psi)_k(t,s)(\partial_k D\Phi)(t,\Psi(t,s)) (D\Psi)(t,s)
   \\
   &\quad+ (D \Phi)(t,\Psi(t,s))(D\partial_t\Psi)(t,s)
    \\
    &=
    (\partial_t D\Phi_t)( \Psi_t(s)) (D\Psi_t)(s)
    + \sum_{k=1}^d (X_t)_k(\Psi_t(s))(\partial_k D\Phi_t)(\Psi_t(s))(D\Psi_t)(s)
    \\
    &\quad+ (D \Phi_t)(\Psi_t(s))(D (X_t(\Psi_t(s))
    \\
    &=
   \left( \left(
    (\partial_t D\Phi_t)
    + \sum_{k=1}^d (X_t)_k(\partial_k D\Phi_t)
    +(D \Phi_t)(D X_t)
    \right)\circ \Psi_t(s)\right) (D\Psi_t)(s).
    \end{split}
\end{align}
Note that also
\begin{align}\label{eq:Df_t}
    Df_t(s)=(D\Phi_t)(\Psi_t(s))(D\Psi_t)(s).
\end{align}
Combining this with the last display we get (dropping the positional argument $s$
for conciseness)
\begin{align}
\begin{split}\label{eq:partial_Omega}
    (D\Psi_t)^{-\top}& (\partial_t \Omega_t)(D\Psi_t)^{-1}
   =
    (D\Psi_t)^{-\top} \left(
    Df_t^\top (\partial_t Df_t) + (\partial_tDf_t)^\top Df_t \right)(D\Psi_t)^{-1}
    \\
    &=
   \left[ D\Phi_t^\top \left(
    (\partial_t D\Phi_t)
    + \sum_{k=1}^d (X_t)_k(\partial_k D\Phi_t)
    +(D \Phi_t)(D X_t)
    \right)\right.
    \\
    &\quad \quad
 \left. +\left(
    (\partial_t D\Phi_t)
    + \sum_{k=1}^d (X_t)_k(\partial_k D\Phi_t)
    +(D \Phi_t)(D X_t)\right)^\top   D\Phi_t \right]
    \circ \Psi_t
    \\
    &=
   \Bigg[ D\Phi_t^\top(\partial_t D\Phi_t)
   + (\partial_t D\Phi_t)^\top D\Phi_t
   +D\Phi_t^\top D\Phi_t DX_t
   +DX^\top_t D\Phi_t^\top D\Phi_t 
   \\
   &\quad\quad  
    + \sum_{k=1}^d (X_t)_k\left((\partial_k D\Phi_t)^\top D\Phi_t
    + D\Phi_t^\top (\partial_k D\Phi_t)
    \right)\Bigg]\circ \Psi_t
    \\
    &=
    \left(\partial_t\Lambda_t + \Lambda_t DX_t +DX^\top_t \Lambda_t
    + \sum_k (X_t)_k \partial_k\Lambda_t\right)\circ \Psi_t.
 \end{split}   
\end{align}
We now consider $t=0$ and drop the $t$ argument from the notation. By assumption $\Psi_0=\mathrm{Id}$ so the equation simplifies to
\begin{align}\label{eq:main_pde}
    \partial_t \Omega
    =
    \partial_t \Lambda +\Lambda DX + DX\Lambda +\sum_k X_k\partial_k\Lambda.
\end{align}
Now we use that $\Lambda_t$ and $\Omega_t$ 
map to diagonal matrices for all $t$, in particular $\partial_t \Lambda$
and $\partial_t \Omega$ are diagonal matrices.
We conclude that for $i\neq j$ the equation 
\begin{align}\label{eq:diff_first_order}
\begin{split}
0&=(\Lambda DX)_{ij}+((DX)^\top \Lambda)_{ij}
=
\Lambda_{ii} (DX)_{ij}+
 (DX)_{ji}\Lambda_{jj}
\\
&= \Lambda_{ii}\partial_j X_i+ \Lambda_{jj}\partial_i X_j
\end{split}
\end{align}
holds. Thus, we obtain  a system of first order Partial Differential Equations (PDE) for $X_{t=0}$.
We now write $\Lambda_j=\Lambda_{jj}$.
We also fix an $i\in \{1,\ldots, d\}$ in the following.
Then we can rewrite \eqref{eq:diff_first_order} concisely as
\begin{align}\label{eq:diff_first_order_short}
    \Lambda_i \partial_j X_i + \Lambda_j \partial_i X_j=0\quad \text{for $i\neq j$.}
\end{align}
We divide equation \eqref{eq:diff_first_order_short} by $\Lambda_j$ apply $\partial_j$
and sum over $j\neq i$ to obtain
\begin{align}
\sum_{j\neq i} \partial_j\left(\frac{\Lambda_i}{\Lambda_j}\partial_j  X_i\right)=-\sum_{j\neq i}
\partial_j\partial_i X_j
= -\partial_i \Div X + \partial_i^2 X_i
= \partial_i^2 X_i.
\end{align}
This implies that $X_i$ satisfies the wave equation
\begin{align}\label{eq:pde2}
\partial_i^2 X_i - \sum_{j\neq i} \partial_j(a_j\partial_j X_i)=0 \text{  on $(0,1)^d$}
\end{align}
where $a_j= \Lambda_i/\Lambda_j$. Note that by assumption $a_j\in C^1((0,1)^d)$
and $a_j$ is positive because we assumed that $\Phi_t$ are diffeomorphisms implying
$\Lambda_{j}>0$ (because $D\Phi$ is invertible).
Now we use the assumption that 
$\Phi_t(s)=f_t(s)$ if  $\mathrm{dist}(s,\partial\cube)<\varepsilon$.
This implies for such $s$ that $s=(\Phi_t^{-1}\circ f_t)(s)=\Psi_t(s)$, i.e., 
$\Psi_t(s)$ is constant.
We conclude that $0=\partial_t \Psi_t(s)=X_t(\Psi_t(s))=X_t(s)$
for all $s$ satisif=ying
$\mathrm{dist}(s,\partial\cube)<\varepsilon$.

Now we claim that this together with the PDE \eqref{eq:pde2}
implies that $X_i$ vanishes everywhere.
We set $\varepsilon'=\varepsilon/d$. 
Then we have 
$\mathrm{dist}(s,\partial\cube)<\varepsilon$ for all $s\notin (\varepsilon', 1-\varepsilon')^d$,
Thus $X_i$ solves the PDE \eqref{eq:pde2} on $(\eps', 1-\eps')^d$ with vanishing 
boundary data and vanishing derivatives at the boundary. 
Then the uniqueness of solutions for the Cauchy problem for hyperbolic PDE of second order 
which we stated in Theorem~\ref{th:hyperbolic} below implies that there is at most one
solution. 
Note that the ellipticity condition in \eqref{eq:elliptic} follows by noting
that the functions $a_j$ are continuous and  positive, and thus $\min_{(\eps, 1-\eps)^d}a_j>0$.

Since $X_i=0$ clearly solves the PDE, we conclude that $X_i=0$.
This argument applies to all  $i$ so we conclude that $X_{t=0}=0$.
Note that if $f_t=f_0$ is constant (this case is state in Corollary~\ref{co:undeformed}) the
left hand side of \eqref{eq:partial_Omega} vanishes for all $t$. Then we can infer with the same argument that $X_t=0$ for all $t$ and thus $\Psi_t=\mathrm{Id}$ and $\Phi_t=f$.

We now proceed with the general case. Since by assumption $\Psi_t$ is analytic in $t$
it is sufficient to show that $\partial_t^{l+1}\Psi_{t=0}(s)=0$
for all $l\in \mathbb{N}$
which implies that $\Psi_t(s)=\Psi_0(s)=s$.
We denote $X^{(l)}_t=\partial_t^l X_t$. By definition of $X_t$ and the chain rule it is easy to see
that  
$X^{(l')}_{t=0}=0$ for $0\leq l'\leq l$ implies $\partial_t^{l+1}\Psi_{t=0}(s)=0$. We now show this by induction. 
We apply $\partial_t^l$
to \eqref{eq:partial_Omega} at $t=0$. We obtain
(using $\Psi_0=\mathrm{Id}$)
\begin{align}
    \partial_t^{l+1} \Omega_t
    =
    \partial_t^{l+1} \Lambda_t +\Lambda DX_t^{(l)}
    +(DX_t^{(l)})^\top \Lambda_t +
    \sum_k (X_0^{(l)})_k \partial_k \Lambda_t
    + R(X_t,\ldots , X^{(l-1)}_t)
\end{align}
where $R$ denotes a remainder term where each summand contains a factor
of the form $X^{(l')}_{t=0}$ for some $0\leq l'<l$. Indeed, every time we 
differentiate a $\Psi_t$ term we get a $X_t$ term.
By the induction hypothesis $X^{(l')}_{t=0}=0$ for $l'<l$ and therefore 
$R=0$.
We conclude that $X^{(l)}$ satisfies \eqref{eq:diff_first_order_short}
just as $X=X^{(0)}$. Differentiating $\Div X=0$ with respect to $t$
we also conclude $\Div X^{(l)}=0$ and as before we conclude that
$X^{(l)}(s)=0$ for $s\notin (\eps',1-\eps')^d$. As above this implies $X^{(l)}_{t=0}(s)=0$.
This ends the proof if $M=\R^d$.

{
We now discuss the necessary extensions for the general case.
So we now assume that $f_t, \Phi_t:\R^d\to M$ for
some Riemannian manifold $(M,g)$.
The main strategy is to establish that \eqref{eq:main_pde} holds
where the definitions of the involved quantities are adapted suitably. 
Then we can conclude the proof as above.
Note that if we could find orthonormal coordinates locally the same proof applies.
However, this is general not possible so we need to argue more carefully.
We define $\Psi_t:\R^d\to\R^d$ and $X_t$ as above, i.e., $\Psi_t=(\Phi_t)^{-1}\circ f_t$ and $\partial_t \Psi_t=X_t(\Psi_t)$.
Then we define the matrix functions $\Lambda_t, \Omega_t\in \R^{d\times d}$ by
\begin{align}
    (\Lambda_t)_{ij}(s) &= \langle (d\Phi_t)(s) e_i, (d\Phi_t)(s)e_j\rangle_g,
    \\
     (\Omega_t)_{ij}(s) &= \langle (df_t)(s) e_i, (df_t)(s)e_j\rangle_g.
\end{align}
By assumption $\Lambda_t$ and $\Omega_t$ are diagonal.
We consider a (inverse) chart $\eta: U\to M$ where $U\subset \R^d$.
We now argue locally on $\eta(U)\subset M$ but we do not denote domain restriction of the function to improve readability.
We define 
\begin{align}
    \tilde{\Phi}_t&=\eta^{-1}\circ \Phi_t, 
    \\
    \tilde{f}_t &= \eta^{-1}\circ f_t.
\end{align}
Maps $\tilde{\Phi_t}$ and $\tilde{f_t}$ map $\R^d$ to itself and we can consider their usual derivatives.  Moreover, we have $\Psi_t=(\tilde\Phi_t)^{-1}\circ \tilde{f}_t$.
Observe that (using $\eta\tilde{\Phi}=\Phi$) we can rewrite $\Lambda_t$
using the induced metric through $\eta$ on $\R^d$
\begin{align}
\begin{split}\label{eq:rewrite_Lambda}
     (\Lambda_t)_{ij}(s)
    & =
     \langle (d\eta)(\tilde{\Phi}_t(s)) (d\tilde{\Phi}_t(s)) e_i
     ,(d\eta)(\tilde{\Phi}_t(s)) (d\tilde{\Phi}_t(s)) e_j\rangle_g
     \\
     &=
     \langle  (d\tilde{\Phi}_t(s)) e_i
     ,(d\eta)^\ast(\tilde{\Phi}_t(s))(d\eta)(\tilde{\Phi}_t(s)) (d\tilde{\Phi}_t(s)) e_j\rangle_{\R^d}
     \\
     &=
     \langle  D\tilde{\Phi}_t(s) e_i
     ,G(\tilde{\Phi}_t(s)) D\tilde{\Phi}_t(s) e_j\rangle_{\R^d}
     \end{split}
\end{align}
where we defined $G:U\to \R^{d\times d}$ by $G(z)=(d\eta)^\ast(z)(d\eta)(z)$ which captures the pullback metric on $\R^d$.
We can express this concisely and get similarly for $\Omega_t$
\begin{align}
     \Lambda_t(s)&=
     D\tilde{\Phi}_t(s)^\top G(\tilde{\Phi}_t(s)) D\tilde{\Phi}_t(s),
   \\ \label{eq:Omega_t_M}
   \Omega_t(s)
    &=
   D\tilde{f}_t(s)^\top G(\tilde{f}_t(s)) D\tilde{f}_t(s).
\end{align}
We now consider as above $\partial_t \Omega_t$ and get using the last display
\begin{align}
    \begin{split}
        \label{eq:dt_Omega_M}
    \partial_t   \Omega_t
    &=
  (\partial_t D\tilde{f}_t)^\top
     G(\tilde{f}_t) D\tilde{f}_t
     +
      D\tilde{f}_t^\top
     (\partial_t G(\tilde{f}_t)) D\tilde{f}_t 
     +    D\tilde{f}_t^\top
     G(\tilde{f}_t) (\partial_t D\tilde{f}_t ).
    \end{split}
\end{align}
Next we calculate
\begin{align}\label{eq:d_metric}
    \partial_t (G\circ \tilde{f}_t)
    = \partial_t (G\circ \tilde\Phi_t\circ \Psi_t)
    =
    \partial_t (G\circ \tilde\Phi_t) \circ \Psi_t
    + \sum_k \left((X_t)_k\circ \Psi_t\right) \cdot (\partial_k (G\circ \tilde\Phi_t))\circ \Psi_t 
\end{align}
Using the relations \eqref{eq:dt_D_f}, \eqref{eq:Df_t} (for $\tilde{f}$) and \eqref{eq:d_metric}
in \eqref{eq:dt_Omega_M} we obtain
\begin{align}\label{eq:manifold_almost_final}
  (D\Psi_t)^{-\top}  \partial_t \Omega_t (D\Psi_t)^{-1} = A_1 + A_2 + A_3,
\end{align}
where
\begin{align}
\begin{split}
    A_1
    &=
  \Bigg[  (\partial_t D\tilde\Phi_t)^\top (G\circ \tilde\Phi_t) D\tilde{\Phi}_t
    +
( D\tilde\Phi_t)^\top (G\circ \tilde\Phi_t) (\partial_tD\tilde{\Phi}_t)
+
    ( D\tilde\Phi_t)^\top (\partial_t(G\circ \tilde\Phi_t)) D\tilde{\Phi}_t
    \Bigg] \circ \Psi_t
    \\
    &=
    (\partial_t \Lambda_t)\circ \Psi_t,
    \\
    A_2
    &=
      \Bigg[  DX_t^\top ( D\tilde\Phi_t)^\top (G\circ \tilde\Phi_t) D\tilde{\Phi}_t
    +
( D\tilde\Phi_t)^\top (G\circ \tilde\Phi_t) D\tilde{\Phi}_t DX_t
    \Bigg] \circ \Psi_t
   \\
   &= (DX_t^\top \Lambda_t + \Lambda_t DX_t)\circ \Psi_t,
    \\
    A_3&=
   \bigg[ \sum_{k=1}^d (X_t)_k\left((\partial_k D\tilde\Phi_t)^\top
    (G\circ \tilde\Phi_t) (D\tilde\Phi_t)
    +
     ( D\tilde\Phi_t)^\top
   ( G\circ \tilde\Phi_t) (\partial_k D\tilde\Phi_t)\right)
\\
&\qquad \quad
  +( D\tilde\Phi_t)^\top \sum_k (X_t)_k\cdot (\partial_k (G\circ \tilde\Phi_t))
 D\tilde\Phi_t \bigg]\circ \Psi_t 
    \\
    &=\left(
    \sum_k (X_t)_k \partial_k \Lambda_t\right)\circ \Psi_t.
    \end{split}
\end{align}
Plugging the relations in \eqref{eq:manifold_almost_final} we obtain
\begin{align}
     (D\Psi_t)^{-\top}  \partial_t \Omega_t (D\Psi_t)^{-1}
     =
     \left(\partial_t\Lambda_t + \Lambda_t DX_t +DX^\top_t \Lambda_t
    + \sum_k (X_t)_k \partial_k\Lambda_t\right)\circ \Psi_t.
\end{align}
Thus we established that the relation \eqref{eq:partial_Omega}
also holds in the manifold case. The rest of the proof is the same.
}
\end{proof}
The proof of Corollary~\ref{co:undeformed} is trivial.
\begin{proof}[Proof of Corollary~\ref{co:undeformed}]
Apply Theorem~\ref{th:loc_loc} with $f_t=f_0$ constant. The assumption that $\Phi_t$
is analytic in $t$ can be dropped as explained in the proof of
Theorem~\ref{th:loc_loc}.
\end{proof}

For reference, we now state the uniqueness result for second order hyperbolic 
partial differential equations. 
Let $U\subset \R^n$ open, bounded and let $U_T=U\times (0, T)$.
Consider the boundary problem 
\begin{align}\label{eq:pde_hyperbolic}
\partial_t^2 u + Lu&=f\text{ in $U_T$}\\
\label{eq:pde_bd1}
u &= 0 \text{ in $\partial U \times [0,T]$}\\
\label{eq:pde_bd2}
u&=g, \partial_t u=h \text{ on $U\times \{0\}$}
\end{align}
where $f:U_T\to \R$ and $g,h:U\to \R$ are given functions which we assume 
to be $C^1$ and $g=0$ on $\partial U$. The function $u:U_T\to \R^d$ is the  
unknown. The operator is assumed to be an elliptic operator given by
\begin{align}
Lu = - \sum_{i,j=1}^n \partial_i (a^{ij}(x,t) \partial_j u)
\end{align}
where we assume $a^{ij}\in C^1(\bar{U}_T)$, $a^{ij}=a^{ji}$, and that 
there is $\theta >0$ such that 
\begin{align}\label{eq:elliptic}
\sum_{i,j=1}^n \xi_i\xi_j a^{ij}(x,t)\geq \theta |\xi|^2
\end{align}
for all $(x,t)\in U_T$ and $\xi \in \R^n$.
Then the following result holds.
\begin{theorem}[Theorem~4 in Section~7.2 in \cite{evans10}]\label{th:hyperbolic}
Under the assumptions above there is a unique weak solution $u$ of the system 
\eqref{eq:pde_hyperbolic} with boundary values as in \eqref{eq:pde_bd1} and \eqref{eq:pde_bd2}. 
\end{theorem}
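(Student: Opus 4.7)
The plan is to prove existence and uniqueness separately using standard energy methods, following the classical PDE approach for second order linear hyperbolic equations. Since the application in the paper (proof of Theorem~\ref{th:loc_loc}) only requires uniqueness, I would emphasize that part.

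For uniqueness, suppose $u$ and $\tilde u$ are two weak solutions and set $w = u - \tilde u$, which solves the homogeneous problem with zero Cauchy data and zero Dirichlet data. Define the energy
\begin{align*}
E(t) = \frac12 \int_U |\partial_t w|^2 \, \d x + \frac12 \int_U \sum_{i,j=1}^n a^{ij}(x,t)\, \partial_i w\, \partial_j w\, \d x.
\end{align*}
By the ellipticity assumption \eqref{eq:elliptic} we have $E(t) \geq \tfrac{\theta}{2}\|\nabla w(\cdot,t)\|_{L^2}^2$, so $E$ controls the relevant Sobolev norm of $w$. Differentiating in $t$ (using $\partial_t w$ as test function in the weak formulation, which is legitimate after the usual regularization of $w$ in time), the mixed terms cancel by the symmetry $a^{ij} = a^{ji}$ and integration by parts, leaving
\begin{align*}
E'(t) = \frac12 \int_U \sum_{i,j=1}^n (\partial_t a^{ij})(x,t)\, \partial_i w\, \partial_j w\, \d x.
\end{align*}
Because $a^{ij} \in C^1(\bar U_T)$, the derivatives $\partial_t a^{ij}$ are bounded, so $|E'(t)| \leq C\, E(t)$ with $C$ depending on $\|\partial_t a^{ij}\|_\infty$ and $\theta$. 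Combined with $E(0) = 0$, Gronwall's inequality yields $E(t) \equiv 0$, hence $\nabla w \equiv 0$ and $\partial_t w \equiv 0$, so $w \equiv 0$ on $U_T$.

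For existence, I would use the Galerkin method. Choose an orthonormal basis $\{w_k\}_{k\geq 1}$ of $L^2(U)$ made of Dirichlet eigenfunctions of $-\Delta$ (also orthogonal in $H_0^1(U)$), and seek finite-dimensional approximations $u_m(x,t) = \sum_{k=1}^m d_k^m(t)\, w_k(x)$ whose coefficients satisfy the projection of the weak PDE onto $\mathrm{span}\{w_1,\ldots,w_m\}$ with initial data obtained from projecting $g$ and $h$. This is a linear second order ODE system with $C^1$ coefficients, hence has a unique $C^2$ solution on $[0,T]$. The same energy identity applied to $u_m$ (testing against $\partial_t u_m$) yields uniform bounds on $\|u_m\|_{L^\infty(0,T;H_0^1)} + \|\partial_t u_m\|_{L^\infty(0,T;L^2)}$; the PDE then gives a bound on $\|\partial_t^2 u_m\|_{L^\infty(0,T;H^{-1})}$. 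Extracting a weakly-$\ast$ convergent subsequence and passing to the limit in the finite-dimensional weak formulation produces a weak solution, and the initial conditions are recovered using the continuity afforded by the uniform bounds.

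The main obstacle is the existence part: one must carefully set up the correct weak formulation (with the time regularity $u \in L^2(0,T;H_0^1)$, $\partial_t u \in L^2(0,T;L^2)$, $\partial_t^2 u \in L^2(0,T;H^{-1})$), verify that the initial data is attained in the appropriate trace sense after the Galerkin limit, and ensure the energy identity can be justified rigorously for weak solutions (this typically requires a density or mollification argument, since $\partial_t w$ is not a priori an admissible test function in the dual space where $\partial_t^2 w$ lives). Uniqueness, by contrast, is clean once the energy identity is established, and it is exactly the conclusion needed to complete the proof of Theorem~\ref{th:loc_loc}.
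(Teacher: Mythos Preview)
The paper does not prove this theorem at all: it is quoted verbatim as a reference result from Evans' textbook \cite{evans10}, and the paper explicitly remarks that it is unnecessary even to define weak solutions, since only the consequence that classical solutions are unique is used downstream. Your outline is essentially the standard argument found in Evans (energy estimate plus Gronwall for uniqueness, Galerkin approximation for existence), so there is nothing to compare against; your proposal is correct as a sketch of the cited proof, but goes well beyond what the paper itself provides or needs.
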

For our purposes it is not necessary to define weak solution let us just emphasize that any classical solution is a weak solution so this implies uniqueness of classical solutions.

The key obstacle to improve upon this result and to remove the 
compact support condition on $X$ is that the resulting PDE in equation~\eqref{eq:pde2}
is well posed for the Cauchy initial value problem but it is not well posed for the Dirichlet problem
or for mixed Dirichlet and Neumann boundary data.
In particular, solutions are, in general, not unique. Furthermore, there are no general uniqueness results for first order systems 
as in \eqref{eq:diff_first_order}.
Note that the existence of a non-trivial divergence free solution $X_0$ of
\eqref{eq:diff_first_order} does not imply that a
non-constant flow $\Phi_t$ exists because this is not sufficient to 
define the flow for positive times.

We illustrate the influence of the boundary condition further below, when we prove
Theorem~\ref{th:loc_globalalt}.

\subsection{Proof of Theorem~\ref{th:loc_globalalt}}
In this section we show that a family of simple mixing functions
is locally identifiable for most parameter values even when the mixing is not known close to the boundary. 
Note that actually we can construct a set of parameter values for which this
holds giving a slightly stronger result that we state now. 
Theorem~\ref{th:loc_globalalt} will be simple consequence of this result.
\begin{theorem}\label{th:loc_global_app}
Let $f:\cube\to \R^d$ be given by $f(x)=RDx$, where $R\in \mathrm{O}(d)$ and $D=\diag(\mu_1, \ldots, \mu_d)$ with $\mu_i>0$ and $\mu_i^{-2}$ are linearly independent over the rational numbers $\mathbb{Q}$.
 Suppose that $\Phi_t$  is a smooth invariant deformation in $\mc{F}_\ima$ such that $\Phi_0=f$,
 $(\Phi_t)_\ast \nu=f_\ast \nu$, and $\Phi_t$ is analytic in $t$.
Then
 $\Phi_t=f$ on $\cube$, i.e., 
$\Phi_t$ is constant in time.
\end{theorem}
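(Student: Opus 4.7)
The plan is to follow the strategy in the proof of Theorem~\ref{th:loc_loc} up through the derivation of the first-order PDE system \eqref{eq:diff_first_order_short}, but, rather than invoking the general uniqueness theorem for hyperbolic PDEs on a subdomain, to exploit the fact that the linearity of $f=RDx$ yields a PDE with constant coefficients and then perform explicit Fourier analysis on $\cube$.

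Define $\Psi_t = \Phi_t^{-1}\circ f$ and let $X_t$ be the vector field generating $\Psi_t$. Since $f_t=f$ does not depend on $t$, we have $\partial_t \Omega_t = 0$ in \eqref{eq:partial_Omega}. Because $\Phi_0 = f = RDx$ is linear and $R$ is orthogonal, $\Lambda_0 = Df^\top Df = D^2$ is constant in $s$, so $\partial_k \Lambda_0 = 0$. Evaluating \eqref{eq:partial_Omega} at $t=0$ and taking the off-diagonal $(i,j)$ entries yields, for $X = X_0$,
$$\mu_i^2 \partial_j X_i + \mu_j^2 \partial_i X_j = 0, \qquad i \neq j.$$
Combined with $\Div X = 0$ (from preservation of $\nu$), the same elimination as in Theorem~\ref{th:loc_loc} gives the constant-coefficient wave-type equation
$$\partial_i^2 X_i = \sum_{j\neq i} \frac{\mu_i^2}{\mu_j^2} \partial_j^2 X_i \qquad \text{on } \cube.$$

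The replacement for the explicit boundary assumption in Theorem~\ref{th:loc_loc} is to extract boundary data from $(\Phi_t)_\ast\nu = f_\ast\nu$, which forces $\Phi_t(\cube) = f(\cube)$ as open sets and hence $\Psi_t(\cube) = \cube$. Extending $\Psi_t$ to a homeomorphism of $\overline{\cube}$, invariance of domain implies it preserves the boundary and, for $t$ close to $0$, each open face. Differentiating at $t=0$ gives $X_i = 0$ on $\{x_i \in \{0,1\}\}$. Plugging this Dirichlet data back into the first-order system immediately gives $\partial_i X_j = 0$ on $\{x_i \in \{0,1\}\}$ for $j \neq i$, which is exactly the Neumann condition in the transverse directions.

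These mixed boundary conditions are diagonalized by the complete $L^2$-orthogonal basis $\sin(n_i \pi x_i)\prod_{k \neq i}\cos(n_k \pi x_k)$ with $n_i \geq 1$, $n_k \geq 0$. Substituting the Fourier expansion of $X_i$ into the wave equation shows that every nonzero mode must satisfy
$$\frac{n_i^2}{\mu_i^2} = \sum_{k\neq i}\frac{n_k^2}{\mu_k^2},$$
which is a nontrivial integer-coefficient linear relation among the $\mu_k^{-2}$. By the $\mathbb{Q}$-linear independence hypothesis, this forces $n_k = 0$ for every $k$, contradicting $n_i \geq 1$. Hence all Fourier coefficients of each $X_i$ vanish, so $X_0 \equiv 0$. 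The inductive argument on the Taylor coefficients $X_t^{(l)}|_{t=0}$ given at the end of the proof of Theorem~\ref{th:loc_loc}, which uses analyticity of $\Phi_t$ in $t$, then extends this to $X_t \equiv 0$ for all $t$, so $\Psi_t = \mathrm{id}$ and $\Phi_t = f$. The main delicate point is making the tangency argument rigorous at the corners of $\overline{\cube}$ and, to a lesser extent, justifying termwise differentiation of the Fourier expansion for smooth $X$ satisfying the mixed boundary data.
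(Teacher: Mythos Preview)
Your proposal is correct and follows essentially the same route as the paper: derive the constant-coefficient wave equation from \eqref{eq:diff_first_order_short} using $\Lambda_0=D^2$, extract the Dirichlet/Neumann boundary data from $\Psi_t(\cube)=\cube$, solve by separation of variables in the basis $\sin(n_i\pi x_i)\prod_{k\neq i}\cos(n_k\pi x_k)$, and kill all modes via the $\mathbb{Q}$-linear independence of the $\mu_k^{-2}$, then induct on Taylor coefficients in $t$. The paper additionally cites a reference for uniqueness of the mixed boundary value problem before giving the same Fourier argument as intuition, and it is somewhat terser than you are about the boundary tangency; the delicate points you flag (corners, termwise differentiation) are glossed over in the paper as well.
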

\begin{proof}
The initial part of the proof proceeds as in 
the proof of Theorem~\ref{th:loc_loc} and we keep using the same notation.
In particular $\Psi_t=(\Phi_t)^{-1}\circ f$
and $X_t$ is given by $\partial_t \Psi_t = X_t\circ \Psi_t$.

Note, that now $f_t$ is constant in $t$ and therefore $\Omega_t=Df_t^\top Df_t$
is constant in $t$.
We now investigate the boundary conditions for equation~\ref{eq:pde2}.
Note that 
\begin{align}
    \nu=(\Phi_t)_\ast^{-1} (\Phi_t)_\ast\nu=
    (\Phi_t)_\ast^{-1} f_\ast\nu
=(\Psi_t)_\ast\nu.
\end{align}
So $\Psi_t$ preserves $\nu$ and we conclude that $\Psi_t((0,1)^d)=(0,1)^d$.
Let us denote by 
\begin{align}
D_i=\{x\in [0,1]^d|\, x_i\in \{0,1\}\}
\end{align}
the boundary hyperplanes and write $D=\partial\cube=\partial (0,1)^d=\bigcup_i D_i$. As $\Psi_t$ maps $(0,1)^d$ bijectively to itself
we conclude that 
\begin{align}
(X_t)_i=0 \quad \text{on $D_i$}.
\end{align}
We now focus on $t=0$ and use the shorthand $X=X_0$.
Then the differential equation
\eqref{eq:diff_first_order_short} implies that 
\begin{align}
    \partial_i X_j=\Lambda_i/\Lambda_j\partial_j X_i=0\quad \text{ on $D_i$}.
\end{align}
We conclude that the function $X_i$ solves the following mixed Dirichlet and Neumann type boundary problem
\begin{align}\label{eq:hyperbolic_final}
    \partial_i^2 X_i - \sum_{j\neq i} \partial_j(a_j\partial_j X_i)&=0 \text{  on $(0,1)^d$}
    \\ \label{eq:boundary}
    \partial_j X_i&=0 \text{ on $D_j$ for $j\neq i$}
    \\ \label{eq:initial}
    X_i &= 0 \text{ on $D_i$.}
\end{align}
Recall here that $a_j=\Lambda_i / \Lambda_j$.
So far we have not used any specific assumption except that $\Phi_t$ is a continuous deformation and $\Phi_0\in \mc{F}_\ima$. So the existence of non-trivial continuous deformations implies
that a certain hyperbolic PDE has a non-trivial solution. Unfortunately, this type of boundary value problem for hyperbolic equations is not well posed and has not always a unique solution. 
We now show that in the specific setting of the theorem
uniqueness holds. In this case $f$ is linear and 
\begin{align}
    Df=R \;\mathrm{Diag}(\mu_1,\ldots, \mu_d)
\end{align} 
so 
\begin{align}
    \Lambda=(Df)^\top Df = \mathrm{Diag}(\mu_1^2,\ldots, \mu_d^2).
\end{align}
This implies 
\begin{align}
    a_j= \Lambda_i / \Lambda_j = \mu_i^2/\mu_j^2,
\end{align}
in particular $a_j$ is constant. So the equation \eqref{eq:hyperbolic_final}
becomes a constant coefficient hyperbolic equation which can be solved explicitly.

We can now use Theorem~1 from \cite{hyperbolic} (and a simple scaling argument) we conclude that the system
\eqref{eq:hyperbolic_final} has a unique solution which is $X_i=0$ (actually this result is for $X_i=0$ on $\partial D$ but the proof is still valid). 
To give an intuition, we note that separation of variable is possible in this setting and all solutions to the boundary value problem
\eqref{eq:hyperbolic_final} and \eqref{eq:boundary} (i.e., without the boundary condition \eqref{eq:initial} for $D_i$) 
can be expressed as a linear combination of the form
\begin{align}\label{eq:product_sol}
    X_i(s)=f(s_i)\prod_{j\neq i} \p_j(s_j)
\end{align}
where $\p_j$ are eigenfunctions of the problem $\p_j''=\lambda_j \p_j$ on $(0,1)$
and $\p_j'(0)=\p'_j(1)=0$. It is easy to see that those are
given by $\cos(\pi m_j t)$ where $m_j\in \mathbb{N}_0$
and then $\p_j''(s_j)=\pi^2 m_j^2\p_j$.
Solving for $f$ we find from \eqref{eq:hyperbolic_final} that $f$ satisfies 
the ode 
\begin{align}
    f''(s_i)=\left(\sum_{j\neq i} \pi^2\frac{\mu_i^2}{\mu_j^2} m_j^2\right) f(s_i).
\end{align}
Using now that $X_i(0)=0$ (by \eqref{eq:initial}) we conclude $f(0)=0$
and therefore
\begin{align}
    f(s_i)=C\sin(\pi \alpha s_i)
\end{align}
where $\alpha = \sqrt{\sum_{j\neq i} m_j^2 \mu_i^2/\mu_j^2}$, or equivalently
\begin{align}
    0 = \alpha^2 \mu_i^{-2}-\sum_{j\neq i} m_j^2 \mu_j^{-2}.
\end{align}
Now the condition $X_i(s)=0$ for all $s\in[0,1]^d$ with $s_i=1$
is satisfied if and only if $f(1)=0$ which holds iff $\alpha\in \mathbb{N}_0$.
Note that this argument also implies to solutions that are sums of functions as in
\eqref{eq:product_sol} by linear independence.
Then the assumption that $\mu_i^{-2}$ are linearly independent
over $\mathbb{Q}$ implies that $\alpha=0$ (and $m_j=0$) which implies $X_i=0$.
Note that this argument only applies at $t=0$ because it heavily relies on the explicit form of $\Phi_0=f$. However, we can apply the same reasoning to $\partial_t^k X_t$
inductively (just as in the proof of Theorem~\ref{th:loc_loc})
and then conclude using the assumption that $\Phi_t$ is analytic in $t$.

The complete argument goes as follows. We take the time derivative of equation
\eqref{eq:main_pde} (recall that $\partial_t\Omega_t=0$ as $f_t=f_0$ and get, denoting $\dot{X_t}=\partial_t X_t$
and $\dot{\Lambda}_t=\partial_t \Lambda_t$,
\begin{align}
    (D X_t)^\top \dot{\Lambda}_t
    +\Lambda_t DX_t
    + (D \dot{X}_t)^\top \Lambda_t
    +\Lambda_t D\dot{X}_t\in \mathrm{Diag}(d).
\end{align}
We have seen that $DX_0=0$ so we infer
\begin{align}
    (D \dot{X}_0)^\top \Lambda_0
    +\Lambda_0 D\dot{X}_0\in \mathrm{Diag}(d)
\end{align}
and $\Div \dot{X}_t=\partial_t \Div X_t=0$.
The same arguments as before imply $\dot{X}_0=0$ on $(0,1)^d$.
By induction all time derivatives of $X_0$ vanish.
This implies that $\partial_t^l\Psi_{t=0}(s)=0$ for all $s$ and $l$,
i.e., its Taylor expansion at $t=0$ disappears and since we assumed $\Phi_t$ to be analytic in $t$ so is $\Psi_t$ and we conclude that $\Psi_t(s)=\Psi_0(s)=s$
and therefore $\Phi_t(s)=f(s)$.
\end{proof}
The proof of Theorem~\ref{th:loc_globalalt} is now simple.
\begin{proof}[Proof of Theorem~\ref{th:loc_globalalt}]
Using Theorem~\ref{th:loc_global_app} we only need to show that with probability 1 the real numbers $\mu_i^{-2}$ are independent over $\Q$.
Note that by assumption $\mu_i$ has a density. Thus, also the distribution of $\alpha_i=\mu_i^{-2}$ has a density, i.e., is absolutely continuous with repsect to the Lebesgue measure.
For a vector of rational numbers $(q_1,\ldots, q_d)$
the set $H_q=\{\alpha\in\R^d\,| \sum q_i\alpha_i=0\}$ is a codimension 1 hyperplane and thus has Lebesgue measure 0. As $\Q$ is countable this implies
that $N=\bigcup_{q\in \Q^d} H_q$ is a null set. Note that $\alpha\in N$ iff
$\alpha_i$ are linearly dependent over $\mathbb{Q}$.
Since the distribution of $\alpha_i=\mu_i^{-2}$ is absolutely continuous with respect to the Lebesgue measure, we conclude that 
\begin{align}
    \P(\alpha\in N)=0
\end{align}
which implies the result.
\end{proof}
\subsection{Proofs for the construction of spurious solutions}
Finally, we show how flows can be used to construct families of solutions to the
ICA problem. This section contains the technical results missing in the overview given in Appendix~\ref{app:spurious}.

 The first construction was described in Lemma~\ref{le:const_rotations}.
Let us for completeness give a proof (we emphasize again that this result
is essentially taken from \cite{nonlinear_ica}).
\begin{proof}[Sketch of proof of Lemma~\ref{le:const_rotations}]
Note that it is sufficient to show that the maps $h_{R,a}$ are volume preserving for fixed $t$ so we ignore the time argument. 
It is easy to see that $h_{R,a}$ is bijective (the inverse is given $h_{Q,a}$
where $Q(t, r)=R(t,r)^{-1}$). 
Then we only need to show that $\Det Dh_{R,a}(s)=1$
 for all $s$. We calculate (denoting $r=|s-a|$)
\begin{align}
\begin{split}
  (Dh_{R,a}(s))_{ij}&=  \partial_j (h_{R,a})_i = R(r)_{ij} +  \sum_k (\partial_j R)_{ik}(|s-a|)(s-a)_k
  \\
  &=
  \partial_j (h_{R,a})_i = R(r)_{ij} +  \sum_k ( \partial_r R)_{ik}(r)(s-a)_k
  \partial_j |s-a|.
  \end{split}
\end{align}
We conclude  (writing $R'=\partial_r R$)
\begin{align}
    Dh_{R,a}(s)= R(r)+ R'(r)(s-a)\otimes \nabla |s-a|
    =R(r)+\frac{1}{|s-a|} R'(r)(s-a)\otimes(s-a).
\end{align}
Then we obtain, using the matrix determinant lemma for rank 1 updates
($\Det (A+u\otimes v)=(1+u\cdot A^{-1}v)\Det A$
\begin{align}
    \Det Dh_{R,a}(s)=\left(1+ \frac{1}{|s-a|}(s-a) R(r)^\top R'(r) (s-a)\right)\Det(R(r)).
\end{align}
Now we use that $R(r)\in \mathrm{O}(d)$ so $\Det(R(r))=1$
and $R(r)^{-1}=R(r)^\top$. Differentiating $R(r)^\top R(r)=\mathrm{Id}_d$
with respect to $r$ we conclude that $R(r)^\top R'(r)$ is skew which implies
\begin{align}
    (s-a) R(r)^\top R'(r) (s-a)=0.
\end{align}
We have therefore shown $\Det Dh_{R,a}(s)=1$, completing the proof.
\end{proof}

Now we give another construction that also establishes Fact~\ref{fa:non_uniqueness} 
based on suitable divergence free vector fields.
All we need to construct is divergence free vector fields with compact support. 
Consider any smooth function $\p:\R^d\to \R$ such that its support is contained in 
$\Omega$.  Then we consider the vector fields $X^{ij}:\R^d\to \R^d$
for $1\leq i<j\leq d$ given by
\begin{align}
X^{ij}_i = \partial_j \p, \quad X^{ij}_j=-\partial_j \p, \quad X^{ij}_k=0 \quad \text{for $k\notin\{i,j\}$}.
\end{align}
Then we get $\Div X^{ij}=\partial_i\partial_j\p-\partial_j\partial_i\p=0$.
So those vector fields are divergence free and we conclude that the space
\begin{align}
\mc{X}=\{X:\R^d\to\R^d| \mathrm{supp}(X)\subset \Omega, \;\Div X=0\}
\end{align}
is infinite dimensional. Every $X\in \mc{X}$ generates a flow
$\Phi_t$ defined by
\begin{align}
\partial_t \Phi_t = X(\Phi_t), \quad \Phi_0(s)=s.
\end{align}
Using equation \eqref{eq:continuity}
we conclude that $(\Phi_t)_\ast \nu=\nu$
because $\nu$ has a constant density and the support condition of $X$ ensures
that $\Phi_t((0,1)^d)=(0,1)^d$.
Then the family $f_t=f\circ \Phi_t$ has the property
that $(f_t)_\ast \nu=f_\ast \nu$.
Note that this construction can be easily generalised to 
source distributions $\P$ with differentiable density $p$. 
In this case the condition $\Phi_t\P=\P$ is satisfied when
$\Div(pX)=0$. Clearly it is sufficient to consider $X= Y/p$
where $Y\in \mc{X}$ (assuming that $p>c$ for some $c>0$ on $\Omega$).

\section{Proofs for the result on volume preserving maps}\label{app:volume}

\begin{figure}
    \centering
    \includegraphics[width=.75\textwidth]{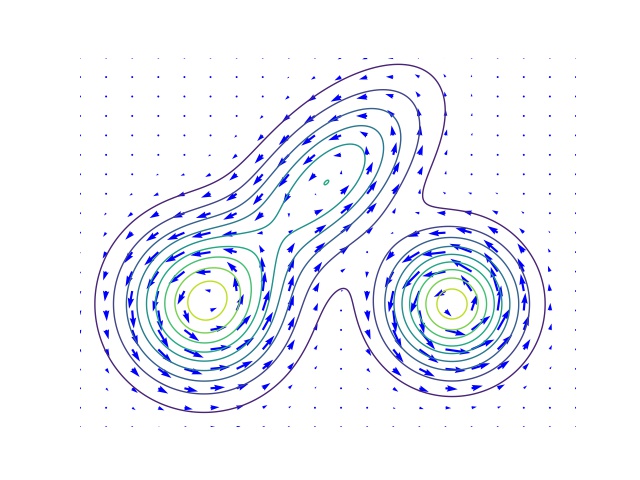}
    \caption{A sketch of the vector fields $X^{ij}$ for $d=2$ constructed in 
    the proof of Theorem~\ref{th:volume}. The closed lines are level lines of the probability density (which is a Gaussian mixture here). Note that the vector field is parallel to the level lines and its magnitude proportional to the norm of the gradient of the density.}
    \label{fig:final}
\end{figure}
Next we show that this construction can be generalised to 
volume preserving transformations and we prove Theorem~\ref{th:volume}.
Note that in the special case that the distribution of $s$ is $\nu$ 
the construction above already works. This is a special case because 
the condition $(f_t)_\ast \nu = f_\ast\nu$ already implies that $f_t$ is 
volume preserving as soon as  $f$ is volume preserving as the density of $\nu$ is constant.
So in this case the condition that $f_t$ is volume preserving and $(f_t)_\ast\nu=f_\ast\nu$ essentially agree which is not the case for general base measures.
\begin{proof}[Proof of Theorem~\ref{th:volume}]
We define a suitable vector field explicitly. 
Consider $X^{ij}:\R^d\to\R^d$ for $1\leq i<j\leq d$ defined by
\begin{align}
X_k^{ij}=
\begin{cases}
\partial_j p \quad k=i
\\
-\partial_i p \quad k=j
\\
0\quad k\notin\{i,j\}.
\end{cases}
\end{align}
An illustration of this vector field is given in Figure~\ref{fig:final}.
We consider the family of functions $f_t= f\circ \Phi_t^{ij}$ where
the flow $\Phi_t^{ij}$ is defined by
$\Phi_0^{ij}(s)=s$ and $\partial_t \Phi_t^{ij}(s)=X(\Phi_t^{ij}(s))$.
Note that boundedness of $\nabla p$ and $p\in C^2$ imply that $\Phi_t$ exists globally 
and defines a diffeomorphism.
We claim that $\Phi_t^{ij}$ satisfies 
$\Det \Phi_t^{ij}(s)=1$ for all $s$ and $(\Phi_t^{ij})_\ast \P=\P$.
The former condition means that $\Phi^{ij}$ preserves the standard volume (Lebesgue-measure) which is the case if $\Div(X^{ij})=0$ 
while the second relation is satisfied if $\Div(p X^{ij})=0$ by equation \eqref{eq:continuity}. 
We calculate
\begin{align}
\Div X^{ij}=\partial_i\partial_j p -\partial_j\partial_i p = 0.
\end{align}
We also find
\begin{align}
\Div( X^{ij} p ) = p \Div(X^{ij}) + X^{ij}\cdot \nabla p
= \partial_j p \partial_i p - \partial_i p\partial_j p=0.
\end{align}
This ends the proof. 
\end{proof}
To give an example, we consider $d=2$ and $\P$ with rotation invariant density
$p(s)=p(|s|)$. Then $X(s)= f(|s|)s^\perp$ where $s^\perp =(s_2,-s_1)^\top$
and the flow lines are circles around the origin where the speed depends on the radius through the derivative of $p(|s|)$.
Let us add some remarks concerning this result.
\begin{remark}
\begin{enumerate}
\item The constructed flows are non-trivial, i.e., not constant because the probability density cannot be constant (as we assumed it to be $C^2$) and thus $X^{ij}$
is not identically vanishing.
\item It is easy to see (e.g., through the example above) that the flows $\Phi^{ij}$ will, in general, mix the coordinates $i$ and $j$ thus this really shows that ICA is not identifiable for volume preserving maps.
\item While we construct a finite family of solutions they can be combined,
e.g., 
\begin{align}
f' = f\circ \Phi^{i_1j_1}_{t_1}\circ\ldots\cdot \Phi^{i_kj_k}_{t_k}
\end{align}
to yield a large space of solutions.
\item By choosing coordinates cleverly, it is possible to construct a
vector field $X$ satisfying $\Div (X)=\Div(pX)=0$ with compact support.
So even knowing $f$ close to the boundary of the support of $\P$ is not sufficient  
to uniquely identify $f$. 
\item While it is not possible to identify ICA using volume preserving transformations, it can be possible to identify $f(s)$ for certain values of $f$ if $\P$ is known. If $p$ has a unique maximum
at $s_0$ then $x_0=f(s_0)$ will be the point with the largest density of $x$ because
volume preserving transformations transform the density trivially (see \eqref{eq:density_pushforward}). 
\end{enumerate}
\end{remark}

Let us finally sketch a proof of Proposition~\ref{prop:non_rigid}.
\begin{proof}[Proof of Proposition~\ref{prop:non_rigid}]
We assume in addition that the line segment $t_i=\{x_i+\lambda(y_i-x_i)|\, \lambda\in [0,1]\}$ does not contain any $x_j$ or $y_j$ for $j\neq i$.
The generalisation to the general case is straightforward, e.g., by composing
two diffeomorphisms as constructed here.
It is clearly sufficient to construct volume preserving diffeomorphisms
$h_i$ such that $h_i(x_i)=y_i$ and $h_i(x_j)=x_j$, $h_i(y_j)=y_j$ for
$j\neq i$ which can then be composed.
We consider the vector field $X_i=(y_i-x_i)\p$ where $\p$ is a smooth cut-off
function with $\p(x)=1$ for $x\in t_i$ and
$\p(x_j)=\p(y_j)=0$ for $j\neq i$. Using Theorem~2 in \cite{moser} we conclude
that there is $Y_i$ such that $\mathrm{supp}(Y_i)\subset \mathrm{supp}(\nabla\p)$ and $\Div(X_i+Y_i)=0$. Considering the flow of $Z_i=X_i+Y_i$ 
up to time $1$ we obtain a function $h_i$ as desired.
Indeed, $Z_i(x_j)=Z_i(y_j)=0$ for $j\neq i$ because $x_j$ and $y_j$ are by construction of $\p$ outside the support of $X_i$ so $X_i(x_j)=X_i(y_j)=Y_i(x_j)=Y_i(y_j)=0$
for $i\neq j$.
Moreover, for $x\in t_i$ we get $Z_i(x)=X_i(x)+Y_i(x)=y_i-x_i$.
This implies that the flow $\Phi_t^i$ of $Z$ satisfies $\Phi_t^i(x_i)=x_i+t(y_i-x_i)$
for $t\in [0,1]$.
In particular, $h_i=\Phi_t^i$ is as desired.
\end{proof}

{
\section{Experimental illustration of local identifiability}\label{app:ill}

\begin{figure}[h]
\centering
\includegraphics[width=\textwidth]{./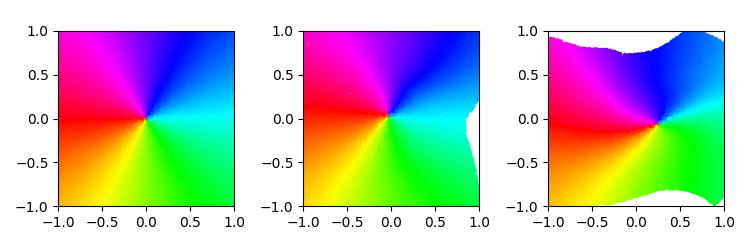}
\includegraphics[width=\textwidth]{./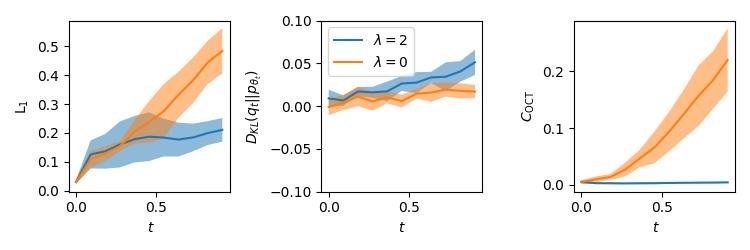}
\caption{Experiment with $f_t^{\mathrm{pol}}$. \textbf{Top row: (left)} ground truth of 
latent variables \textbf{(middle)} reconstructed sources $g_\theta^{-1}\circ f_{t=1}^{\mathrm{pol}}$ for $C_{\mathrm{OCT}}$
regularized training \textbf{(right)}
reconstructed sources for unregularized training
\textbf{Bottom row:} Orange curves: Unregularized training. Blue curves: $C_{\ima}$-regularized training. \textbf{(left)}
$L_1$ distance ground truth - reconstruction over time (see \eqref{eq:L1})
\textbf{(middle)} Forward KL-divergence over time
\textbf{(right)} $C_\mathrm{OCT}$ (see \eqref{eq:cima} over time.
}
\label{fig:pol}
\end{figure}

\begin{figure}[h]
\centering
\includegraphics[width=\textwidth]{./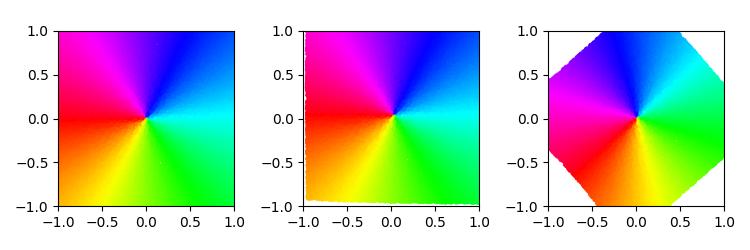}
\includegraphics[width=\textwidth]{./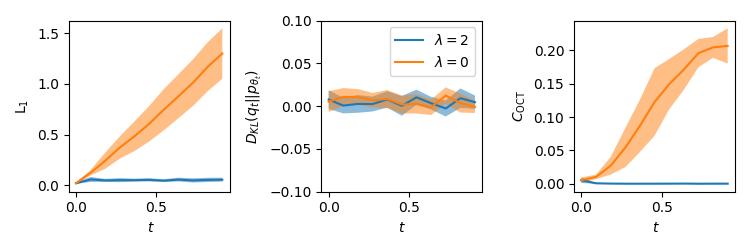}
\caption{Same as Figure \ref{fig:pol}
for $f_t^{\mathrm{rot}}$. \textbf{Top row: (left)} ground truth of 
latent variables \textbf{(middle)} reconstructed sources $g_\theta^{-1}\circ f_{t=1}^{\mathrm{pol}}$ for $C_{\mathrm{OCT}}$
regularized training \textbf{(right)}
reconstructed sources for unregularized training
\textbf{Bottom row:} Orange curves: Unregularized training. Blue curves: $C_{\ima}$-regularized training \textbf{(left)}
$L_1$ distance ground truth - reconstruction over time (see \eqref{eq:L1})
\textbf{(middle)} Forward KL-divergence over time
\textbf{(right)} $C_\mathrm{OCT}$ (see \eqref{eq:cima} over time.
}
\label{fig:rot}
\end{figure}
We provide a toy experiment to illustrate the meaning and significance of Theorem~\ref{th:loc_loc}. Note that this shall just underpin this specific setting, 
for general experiments concerning the usefulness of orthogonal coordinate transforms we refer to \cite{ima}.
We consider a function $f_0\in \mc{F}_{\ima}$ and then assume that 
there is a smooth time-dependent transformation $f_t$ such that  $f_t\in \mc{F}_{\ima}$
($f_t$ is a smooth invariant deformation in the language of the paper). We only observe the changing output distribution but the latent sources are unobserved.

Suppose, however, that we know the initial mixing $f_0$, i.e.,  we trained an initial model $\Phi_0$, s.t., $\Phi_0=f_0$. 
Then we train $\Phi_t$ starting from $\Phi_0$ such that
$\Phi_t(s)\indistribution f_t(s)$ where $s$ is distributed according to some base distribution $\mu$ of the latent variables. If we in addition enforce 
$\Phi_t\in \mc{F}_{\ima}$ Theorem~\ref{th:loc_loc} essentially tells us that 
$\Phi_t=f_t$ while no such guarantee exists without the functional restriction. 
Here, we verify experimentally that this is indeed the case in a simple setting.
\paragraph{Detailed Experimental Setting}
We work in dimension 2 and consider a standard normal  base distribution $\mu$.
We consider polar coordinates for $f$
\begin{align}
    f_{\mathrm{polar}}(r, \p) &= (r\sin(\p), r\cos(\p)).
\end{align}
We then define $f_t^{\mathrm{pol}}=f_{\mathrm{polar}} \circ h_t$ where $h_t$ is a coordinate-wise 
transformation defined for $0\leq t\leq 1$ by
\begin{align}
    h_t(s_1, s_2) = \left(s_1+\frac{t}{2}\sin(s_1+t) + 3 , \frac{s_2+t}{2}\right).
\end{align}
Note that we shift the first coordinate and scale the angular coordinates 
to ensure that the map is injective (except for the light tail of the Gaussian).
We also consider the setting where 
\begin{align}
    f_t^{\mathrm{rot}}(s_1, s_2) = e^{t W} (2s_1,s_2)
\end{align}
and 
\begin{align}
    W= \begin{pmatrix} 0&1\\ -1&0
    \end{pmatrix}.
\end{align}
To model $\Phi_t$ we use a normalizing flow model \cite{original_flows, review_flows}. while this is also convenient for the experiments, it is important
to not use models that implicitly promote
orthogonal columns of the Jacobian, e.g.,
VAEs (see end of Section~\ref{sec:setting})
to extract the effect
of enforcing $\Phi_t\in \mc{F}_{\ima}$.
We write
$\Phi_t(s)=g(\theta_t, s)=g_{\theta_t}(s)$ where $\theta$ denote the trainable parameters of the flow which will vary with time.
For our implementation we use nflows \cite{nflows} and we use 5 masked affine autoregressive transformation layers 
with 15 hidden features followed by random permutations.
Then the following procedure is used.
We train the normalizing flow such that  $f_0=g(\theta_0,\cdot)$.
We assume the base distribution of the flow is $\mu$ 
and denote the induced distribution $(g_\theta)_\ast \mu$ by $p_\theta$.
We discretize the time interval in 10 intervals with endpoints $t_1$ to $t_{10}$.
Iteratively we train $g(\theta_{t_i},\cdot)
=g(\theta_i,\cdot)$ starting from $g(\theta_{i-1},\cdot)$
to maximize the likelihood of observations from $x_{i}\sim(f_{t_{i}})_\ast \mu$, i.e., we 
consider the loss
\begin{align}
    L_{\mathrm{ML}}(\theta) = \mathbb{E}_{x_{i}} (-\log p_\theta(x_{i})).
\end{align}
We do this without regularisation and with a regularisation that promotes
$g_{\theta_t}\in \mc{F}_{\ima}$. Here we use the IMA contrast introduced in \cite{ima}
which we will call $C_{\ima}$ for conciseness. It is defined by
\begin{align}\label{eq:cima}
    C_{\ima}(f, \mu) = \int \mu(s) \left(\sum_k \log |\partial_k f| - \log \Det Df\right)
\end{align}
and we consider the total loss
\begin{align}
    L_\mathrm{Reg} = L_{\mathrm{ML}}+ \lambda \cdot C_\ima (g_\theta(\cdot), \mu).
\end{align}
We will use $\lambda=0$ and $\lambda=2$
to train a regularized and an unregularized model.
Note that $C_{\ima}$ is non-negative and vanishes exactly on OCTs (see Prop. 4.4 in \cite{ima}).

\paragraph{Results}
To measure how well the model recovers the ground truth sources we
consider 
\begin{align}\label{eq:L1}
    L_1(\theta, t) = \int \mu(s) |g^{-1}_\theta(f_t(s)) - s|.
\end{align}
Note that more complicated measures like the Amari distance are not necessary because our initialization removed the permutation symmetries.
We also consider $C_\ima(g_\theta)$ to approximate the distance to $\mc{F}_\ima$ and
the forward KL-divergence 
$D_{KL}(q_i|| p(\theta_i,\cdot))=
\mathbb{E}_{q_i(x)}(\log(q_i(x))-\log(p(\theta_i,x)) $
where $q_i=(f_i)_\ast\mu$ denotes the observational distribution at time $t_i$.
Figures \ref{fig:pol} and \ref{fig:rot}
indicate the results for the maps $f_t^{\mathrm{pol}}$ and $f_t^{\mathrm{rot}}$, respectively. Note that in both cases, the regularized and the unregularized model have small forward KL-divergence to the observational distribution, i.e., they both track the changing observational distribution. However, the regularized model recovers the ground truth latent variables more faithfully, while the unregularized model evolves towards a spurious solution. Finally, we note that the regularizer indeed ensures orthogonality of the columns of the Jacobian, i.e., $g_\theta\in \mc{F}_{\ima}$ (small
$C_{\ima}$).

\paragraph{Training details}
For training  we use stochastic gradient descend with the ADAM-optimizer \cite{adam} and train for 1000 steps with a batch size of 256 where we generate i.i.d.\ samples from the observational distribution in each step. We averaged our results over 10 runs. Total compute time was less than 24h on a workstation.

}
\end{appendix}
\end{document}